\documentclass{article}

\usepackage[preprint]{neurips_2026}

\usepackage[utf8]{inputenc} % allow utf-8 input
\usepackage[T1]{fontenc}    % use 8-bit T1 fonts
\usepackage{hyperref}       % hyperlinks
\usepackage{url}            % simple URL typesetting
\usepackage{booktabs}       % professional-quality tables
\usepackage{amsfonts}       % blackboard math symbols
\usepackage{nicefrac}       % compact symbols for 1/2, etc.
\usepackage{microtype}      % microtypography
\usepackage{xcolor}         % colors
\usepackage{microtype}
\usepackage{graphicx}
\usepackage{subcaption}
\usepackage{booktabs} % for professional tables

\usepackage{hyperref}

\usepackage{amsmath}
\usepackage{amssymb}
\usepackage{mathtools}
\usepackage{amsthm}

\usepackage{natbib} % has a nice set of citation styles and commands
\usepackage{mathtools} % amsmath with fixes and additions
\usepackage{booktabs} % commands to create good-looking tables
\usepackage{tikz} % nice language for creating drawings and diagrams
\usepackage{multirow}
\usepackage{amsfonts} % for \mathbb, \mathcal, etc.
\usepackage{bm} % for bold math symbols
\usepackage{xcolor} % for colored text
\usepackage{graphicx} % for figures
\usepackage{subcaption} % for subfigures
\usepackage{enumitem}
\usepackage{tikz}
\usetikzlibrary{positioning, arrows.meta, shapes.geometric, calc, shadows}
\usepackage{cleveref}

\newtheorem{theorem}{Theorem}
\newtheorem{corollary}{Corollary}
\newtheorem{proposition}{Proposition}

\newtheorem{assumption}{Assumption}
\newtheorem{lemma}{Lemma}
\theoremstyle{remark}
\newtheorem{remark}{Remark}

\newcommand{\modelname}{CR-VBLL}

\newcommand{\KL}{D_{\text{KL}}}
\def\colorful{1}

\ifnum\colorful=1
\newcommand{\snew}[1]{{#1}}
\newcommand{\knew}[1]{{#1}}
\newcommand{\jwnew}[1]{{#1}}

\fi

\title{Efficient Analytic Uncertainty Quantification for Multi-Modal Regression}

\author {%
  \parbox{\textwidth}{\centering
    \makebox[2.4cm][c]{{\bf Kun Jin}\textsuperscript{1}} 
    \makebox[2.7cm][c]{{\bf James Harrison}\textsuperscript{2}} 
    \makebox[2.4cm][c]{{\bf Jiawei Li}\textsuperscript{1}} 
    \makebox[2.4cm][c]{{\bf Sihan Liu}\textsuperscript{1}} 
    \makebox[2.4cm][c]{{\bf Jiayi Liu}\textsuperscript{1}} \\ \vspace{0.4em}
    \makebox[3.4cm][c]{{\bf Randolph Linderman}\textsuperscript{1}} 
    \makebox[2.4cm][c]{{\bf Yuening Li}\textsuperscript{2}} 
    \makebox[3.2cm][c]{{\bf Arnab Bhadury}\textsuperscript{1}} 
    \makebox[3.6cm][c]{{\bf Sourabh Prakash Bansod}\textsuperscript{1}} \\ \vspace{0.4em}
    \makebox[2.4cm][c]{{\bf Liang Liu}\textsuperscript{1}} 
    \makebox[2.4cm][c]{{\bf Jasper Snoek}\textsuperscript{2}} \\ \vspace{0.8em}
    \textsuperscript{1}Google \quad \textsuperscript{2}Google DeepMind \\ \vspace{0.5em}
    \texttt{\small \{kunjin, jamesharrison, jweili, liusih, jiayil, randylinderman\}@google.com} \\
    \texttt{\small \{yueningl, arniebh, spbansod, liangliu, jsnoek\}@google.com}
  }
}

\begin{document}

\maketitle

\begin{abstract}
% \snew{    
% \fTBD{did some minor revision on abstract, please read} 
Efficient uncertainty quantification (UQ) is essential for trustworthy large-scale learning. Existing UQ methods for regression tasks mainly operate under the assumption that the conditional label marginal satisfies single-peak parametric models, e.g., Gaussians, where the negative log-likelihood function simplifies to the mean square error. 
However, such single-peak assumptions fail in regression tasks featuring multi-modal distributions.
% —\fTBD{For discussion, do you think we can remove this sentence to formulate our motivation only from high-level rather than from applications. Also, if we don't add Kuairand experiment, maybe let's not overly emphasize autonomous driving applications in abstract and intro} challenges commonly encountered in complex applications like ride sharing and video recommendation systems. 
On the other hand, semi-parametric methods which achieve strong regression performance for multi-modal distributions often lack efficient quantification on their prediction variances. 
In this work, we extend UQ techniques based on Variational Bayesian Inference (VBI) to two widely used semi-parametric regression models that yield histogram-like reconstructions of the conditional label densities: Quantile Regression (QR) and Classification Restoration (CR). Our approach introduces a unified, distribution-agnostic framework that simultaneously achieves accurate estimation of complex conditional distributions and highly efficient UQ.
Theoretically, our method is grounded in novel formulations of QR and CR within the VBI framework, yielding analytic Evidence Lower Bounds (ELBO) to streamline training and a closed-form or analytically approximated predictive density for efficient inference. Empirically, we evaluate our methods on three large-scale regression benchmarks with multi-modal label distributions. Our framework outperforms state-of-the-art multi-modal regression baselines, and even matches predictive performance of computationally expensive ensemble models. Furthermore, by leveraging epistemic uncertainty estimation, our approach enables highly data-efficient active learning strategies.

\end{abstract}

\section{Introduction}\label{sec:intro}

Predictive uncertainty quantification (UQ) --- the process of estimating a model's confidence in its own predictions --- is essential for trustworthy machine learning in high-stakes applications like ride sharing and recommendation systems \citep{kendall2017uncertainty}.
\snew{One of the mainstream approaches to UQ relies on variational Bayesian inference (VBI).}
\snew{In particular, applying VBI restrictively to the last layer (e.g., Variational Bayesian Last Layer, or VBLL) \citep{harrison2024variational} stands out due to its exceptional computational efficiency as compared to more elementary UQ methods based on costly ensemble learning.}
\snew{For continuous regression problems, the canonical approach within this framework is to assume that the conditional label follows a simple Gaussian distribution. This unimodal parametric assumption is favored because it allows for a mathematically simple analytic derivation of the Evidence Lower Bound (ELBO) of the log-likelihood loss, enabling strictly $\mathcal{O}(1)$ per-sample inference and training costs.}

\snew{However, in high-uncertainty real-world applications, the conditional label distribution often deviates from simple Gaussian behaviors, instead exhibiting input-dependent, complex multi-modality.}
\snew{For instance, in video recommendation systems, the watch time distributions are empirically found to possess multi-modality due to disjoint user engagement patterns, which is thoroughly discussed in \citep{zhao2025egmn}.}
% \snew{Similarly, in autonomous driving, environments notoriously demand navigating divergent but equally valid trajectories at intersections \citep{codevilla2018end, cui2019multimodal}.}
\snew{Simple parametric assumptions fail fundamentally to capture the full complexity of this multi-modality. When a model is trained using log-likelihood objectives assuming a Gaussian conditional distribution,  
the train loss simplifies to mean square error (MSE), 
forcing the model to fit a Gaussian distribution approximately at the conditional expectation of the true label distribution.}
\snew{However, in multi-modal distributions characterized by disjoint clusters, this expected mean often resides in a ``valley'' between different clusters,
where the data conditional density is close to zero.} 
We term this the \textbf{``Ghost Value''} pathology: the model fits a distribution whose density peaks at an area with low data-density. \snew{Consequently, the model is forced to produce a bloated predicted variance to maintain data coverage over the actual, distant data modes. This structural failure then leads to severely miscalibrated epistemic uncertainty
% \footnote{
% Epistemic uncertainty refers to 
% the uncertainty in the model parameters arising from a lack of observational data. See \Cref{sec:formulation} for its precise definition.
% } 
and significantly degraded prediction quality (e.g., Figure \ref{fig:two_axes_contribution} A). 
% e.g., driving an autonomous vehicle directly into an obstacle by averaging two valid turning paths.
}

\snew{To unlock more reliable UQ in such setups, our structural innovation is a distribution-agnostic framework that 
generalizes VBI beyond simple Gaussian assumptions to more complex, semi-parametric approaches --- specifically \textbf{Quantile Regression (QR)} and \textbf{Classification Restoration (CR)}.} Our method simultaneously enables (1) the reconstruction of complex conditional distributions in multi-modal regression, and (2) efficient UQ in analytic form, matching the  $\mathcal{O}(1)$ training and inference cost of deterministic networks.

Empirically, we demonstrate two advantages across diverse, large-scale multimodal domains. First, under standard training, our framework achieves competitive performance compared to state-of-the-art baselines in density estimation and out-of-distribution (OOD) sensitivity. Second, our analytic epistemic uncertainty unlocks highly data-efficient active learning; by reliably isolating epistemic uncertainty from aleatoric uncertainty, it achieves better predictive accuracy with substantially fewer labels while remaining computationally tractable at scale.

% \textbf{Paper outline.} Section~\ref{sec:formulation} formalizes UQ and multimodal regression. Section~\ref{sec:method} introduces our modular distribution-agnostic framework, details the analytic derivations for QR-VBLL \snew{and CR-VBLL}. Section~\ref{sec:theory} proves our theoretical guarantees regarding analytic uncertainty decomposition, OOD sensitivity, and estimation consistency. 
% Section~\ref{sec:experiment} validates our approach empirically, highlighting its impact on data-efficient active learning. Section~\ref{sec:conclusion} discusses limitations and future directions.

\subsection{Related Works}\label{sec:related}

\textbf{Efficient Uncertainty Quantification in Regression.}
Deterministic methods like SNGP \citep{liu2020simple}, Deep Evidential Regression (DER) \citep{amini2020deep}, and the standard Variational Bayesian Last Layer (VBLL) \citep{harrison2024variational} achieve $\mathcal{O}(1)$ inference but rely on unimodal (e.g., Gaussian) assumptions, triggering the Ghost Value pathology in multi-modal regimes. Conversely, ensemble and sampling-based methods \citep{lakshminarayanan2017simple, gal2016dropout} face a severe operational trade-off: while \snew{leading to reliable density and epistemic uncertainty estimation}, they incur an $\mathcal{O}(M)$ multiplier that is impractically expensive. 
% Our work introduces a novel framework to support efficient UQ for arbitrary multi-modal regression settings.

\textbf{Multi-Modal Regression and Density Estimation.}
Mixture Density Networks (MDNs) \citep{bishop1994density} model multi-modal targets but suffer from optimization instability and lack analytic epistemic UQ. Distributional approaches like CREAD \citep{sun2024cread} use Classification Restoration to bypass Ghost Values but lack epistemic UQ and consistency guarantees. In the continuous domain, Conformalized Quantile Regression (CQR) \citep{romano2019conformalized} produces contiguous intervals that fail on disjoint supports without epistemic decomposition. Furthermore, prior Bayesian quantile regression \citep{abeywardana2015variational, yang2016posterior} relies on Gaussian scale-mixture representations, requiring auxiliary parameters or expensive Monte-Carlo sampling based approximations. 
% In contrast, our QR-VBLL solves the integral directly to yield an exact analytic ELBO.

\textbf{Uncertainty-Aware Active Learning.}
While Bayesian Active Learning by Disagreement (BALD) \citep{houlsby2011bayesian, gal2017deep} is a dominant paradigm, applying sampling-based acquisition to large-scale regression introduces severe computational bottlenecks. Deep Ensembles \citep{lakshminarayanan2017simple} and MC-Dropout require multiple stochastic forward passes per candidate, stalling massive pool evaluations and retraining loops --- an issue exacerbated in batch settings \citep{kirsch2019batchbald}. Deterministic methods like DER \citep{amini2020deep} offer $\mathcal{O}(1)$ efficiency but fail on disjoint data; non-Bayesian heuristics \citep{yoo2019learning, sener2018active} conflate epistemic ignorance with aleatoric data noise. 
% By analytically disentangling these uncertainties in a single forward pass, our framework makes uncertainty-aware active learning tractable at industrial scales.
% isolating model ignorance from inherent data noise.
% without ensemble overhead.

\begin{figure}[t]
    \centering
    \includegraphics[width=1.00\textwidth]{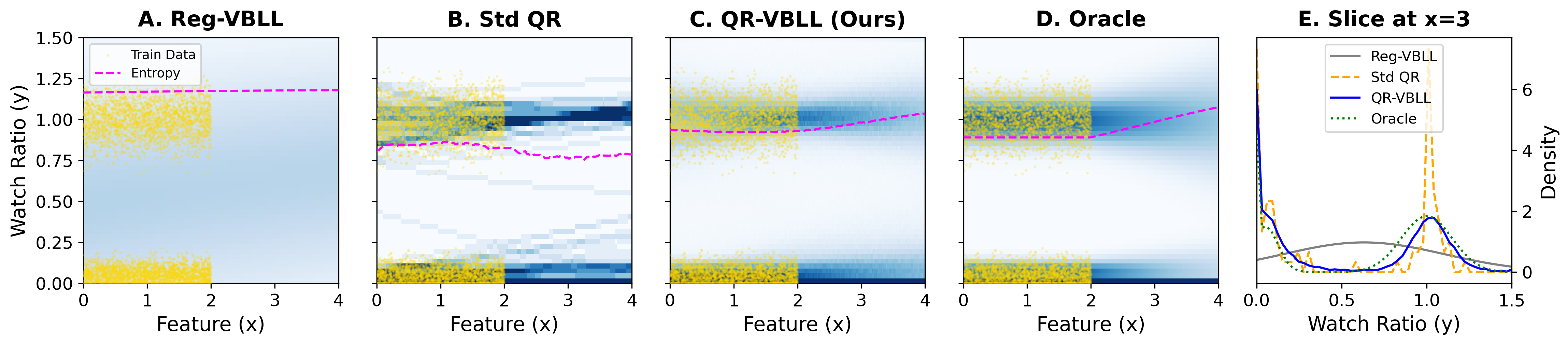}
    \caption{\textbf{Visualize our contributions with synthetic data.} The target distribution is \textbf{bimodal} ($y$ has high probability mass around 0 and 1), which triggers the \textbf{``Ghost Value''} pathology at $y \approx 0.5$ in unimodal models (Panel A). The $x > 2$ region denotes {out-of-distribution (OOD)} inputs where the ground truth distribution is gradually transitioning into pure noise. Our framework achieves (1) \textbf{representation of the conditional distribution} and (2) \textbf{efficient quantification of uncertainty}. Comparing Panel C and A, QR-VBLL resolves the ``Ghost Value'' pathology by leveraging distribution-agnostic semi-parametric representations (e.g., QR), ensuring multi-modal consistency. Comparing Panel C and B, 
    our method provides robust uncertainty quantification: QR-VBLL correctly inflates epistemic uncertainty (distribution entropy) in the OOD region to reflect model ignorance, whereas the deterministic Std QR baseline collapses the weight posterior, effectively exhibiting zero epistemic uncertainty and failing to recognize OOD inputs.
    % our method provides uncertainty quantification, where its quantification of epistemic uncertainty reflects its lack of knowledge in OOD region (proven by the increasing predicted distribution entropy in the OOD region while Standard QR does not recognize model ignorance).
    }
    \label{fig:two_axes_contribution}
    \vspace{-0.3cm}
\end{figure}
\section{Problem Formulation}
\label{sec:formulation}

% We consider a probabilistic regression setting with a dataset $\mathcal{D} = \{(x_i, y_i)\}_{i=1}^N$, where inputs are denoted as $x_i \in \mathbb{R}^d$ and continuous targets as $y_i \in \mathbb{R}$. 
We consider a probabilistic regression setting with a dataset $\mathcal{D} = (\mathcal{X}, \mathcal{Y}) = \{(x_i, y_i)\}_{i=1}^N$, where $\mathcal{X}$ denotes the collection of inputs $x_i \in \mathbb{R}^D$ and $\mathcal{Y}$ denotes the continuous targets $y_i \in \mathbb{R}$.
Our objective is to model the conditional density $p(y|x)$ in a manner that simultaneously (1) estimates the true multi-modal distribution of the data, and (2) efficiently quantifying predictive uncertainties. 

Under the VBI formulation, we assume a prior distribution over the model weights $w$ and would like to estimate the posterior $p(w | \mathcal D)$. 
For inference, 
the predictive distribution is marginalized over the posterior: $p(y|x) = \int p(y|x, w)p(w|\mathcal{D})dw$ 
while the predictive variance 
can be decomposed into \snew{\textbf{aleatoric uncertainty} --- variance caused by inherent data noise --- and \textbf{epistemic uncertainty} --- variance caused by lack of observational data} --- via the Law of Total Variance:
\begin{equation}
    \text{Var}(y|x) = \underbrace{\mathbb{E}_{w \sim p(w|\mathcal{D})}[\text{Var}(y|x, w)]}_{\text{Aleatoric (Data Noise)}} + \underbrace{\text{Var}_{w \sim p(w|\mathcal{D})}(\mathbb{E}[y|x, w])}_{\text{Epistemic (Model Ignorance)}}
    \label{eq:variance_decomposition}
\end{equation}

On the one hand, standard VBLL approaches parameterize $p(y|x)$ as a unimodal Gaussian $\mathcal{N}(y; \mu(x), \text{Var}(y|x))$ minimizing negative log-likelihood (NLL). In the homoscedastic limit, this minimizes MSE, forcing the model to fit a Gaussian centered at the conditional expectation $\mathbb{E}[y|x]$. As introduced in Section 1, this formulation leads to a structural failure on multi-modal data. Mathematically, if the true target distribution $p^*(y|x)$ is a disjoint mixture, the conditional expectation $\mathbb{E}_{p^*}[y|x]$ frequently lies in a region of near-zero density. 
% ($p(y \mid x = \mathbb{E}_{p^*}[y|x]) ) \approx 0$). 
Minimizing NLL forces the unimodal network to predict a Gaussian centered at this near-zero density region, and subsequently inflate its variance $\sigma^2(x)$ to maintain coverage over the distant true modes. This destroys both predictive accuracy and uncertainty calibration (see Figure~\ref{fig:two_axes_contribution}, A).

On the other hand, \snew{semi-parametric methods based on} standard deterministic networks \snew{(e.g., Standard QR)} collapse the weight posterior $p(w|\mathcal{D})$ to a point estimate, eliminating the epistemic term \snew{$\text{Var}_{p(w|\mathcal{D})}(\mathbb{E}[y|x, w])$} entirely and making them incapable of expressing structural ignorance when encountering sparse, out-of-distribution (OOD) samples; moreover, they may also conflate missing knowledge with data noise, \snew{leading to inflated} aleatoric variance \snew{$\mathbb{E}_{p(w|\mathcal{D})}[\text{Var}(y|x, w)]$} (Figure~\ref{fig:two_axes_contribution}, B). 

Our formal objective is to construct a framework \snew{that combines the best of both worlds: accurately modeling the density $p(y|x, w)$ to flexibly support multi-modal topologies, ensuring
the marginalization over $w$ remains analytically tractable, and decoupling estimations of aleatoric and epistemic uncertainties in $\mathcal{O}(1)$ time.}

\section{Modular Distribution-Agnostic Framework}\label{sec:method}

\begin{figure}[t]
    \centering
    
    % --- Subfigure A: QR-VBLL (Path A) ---
    \begin{subfigure}[b]{0.49\textwidth}
        \centering
        \resizebox{\textwidth}{!}{
        \begin{tikzpicture}[scale=0.7, every node/.style={transform shape}]
            % Input
            \node (input) at (0,0) [draw, circle, minimum size=0.8cm] {$x$};
            
            % Backbone (Same as A)
            \node (hidden) at (2,0) [draw, rectangle, minimum width=2cm, minimum height=1cm, align=center] {Spectral\\Backbone\\$\phi(x)$};
            \draw[->, thick] (input) -- (hidden);
            
            % VBLL Layer (Same as A)
            \node (vbll) at (5,0) [draw, rectangle, dashed, minimum width=2.5cm, minimum height=1.5cm, align=center] {VBLL Head\\$q(w) \sim \mathcal{N}$};
            \draw[->, thick] (hidden) -- (vbll);
            
            % Output Node (Quantiles)
            \node (quantile) at (8.5,0) [draw, rectangle, rounded corners, fill=blue!10, minimum width=1.5cm, minimum height=2.5cm, align=center] {Quantiles\\$\hat{q}_\kappa(x)$};
            \draw[->, thick] (vbll) -- node[above] {Scalars} (quantile);
            
            % CDF Visualization (S-Curve)
            \draw[->, thin] (9.5, -1.0) -- (10.8, -1.0) node[right, scale=0.6] {$y$};
            \draw[->, thin] (9.5, -1.0) -- (9.5, 0.8) node[above, scale=0.6] {$\kappa$};
            
            % Draw S-curve points to look like a CDF
            \draw[thick, blue] plot [smooth] coordinates {(9.5, -1.0) (9.8, -0.8) (10.0, -0.2) (10.2, 0.4) (10.5, 0.7)};
            
            % Dashed lines showing mapping
            \draw[dashed, gray] (9.5, 0.4) -- (10.2, 0.4) -- (10.2, -1.0);

            \node at (10.2, -1.5) {Predicted CDF};
        \end{tikzpicture}
        }
    \caption{Path A: \textbf{QR-VBLL}. Estimates conditional quantiles to approximate the CDF.}
        \label{fig:arch_qr}
    \end{subfigure}
        \hfill
        % --- Subfigure B: CR-VBLL (Path B) ---
    \begin{subfigure}[b]{0.49\textwidth}
        \centering
        \resizebox{\textwidth}{!}{
        \begin{tikzpicture}[scale=0.7, every node/.style={transform shape}]
            % Input
            \node (input) at (0,0) [draw, circle, minimum size=0.8cm] {$x$};
            
            % Backbone
            \node (hidden) at (2,0) [draw, rectangle, minimum width=2cm, minimum height=1cm, align=center] {Spectral\\Backbone\\$\phi(x)$};
            \draw[->, thick] (input) -- (hidden);
            
            % VBLL Layer
            \node (vbll) at (5,0) [draw, rectangle, dashed, minimum width=2.5cm, minimum height=1.5cm, align=center] {VBLL Head\\$q(w) \sim \mathcal{N}$};
            \draw[->, thick] (hidden) -- (vbll);
            
            % Output Distribution (Softmax)
            \node (softmax) at (8.5,0) [draw, rectangle, rounded corners, fill=gray!10, minimum width=1.5cm, minimum height=2.5cm, align=center] {Softmax\\$p(\mathbf{y}|x)$};
            \draw[->, thick] (vbll) -- node[above] {Logits $z$} (softmax);
            
            % Histogram Bars (Visualizing restoration)
            \draw[fill=black] (9.5, -1.0) rectangle (9.7, -0.2); 
            \draw[fill=black] (9.7, -1.0) rectangle (9.9, -0.9);
            \draw[fill=black] (9.9, -1.0) rectangle (10.1, -0.9);
            \draw[fill=black] (10.1, -1.0) rectangle (10.3, 0.8); 
            
            \node at (10, -1.5) {Predicted PDF};
        \end{tikzpicture}
        }
        \caption{Path B: \textbf{CR-VBLL}. Discretizes the target space to to approximate the PDF.}
        \label{fig:arch_cr}
    \end{subfigure}

    % FIXED: Added mandatory width argument [\linewidth] and fixed missing closing brace in caption
    \begin{subfigure}{\linewidth} \label{fig:path_selection}
        \includegraphics[width=\linewidth]{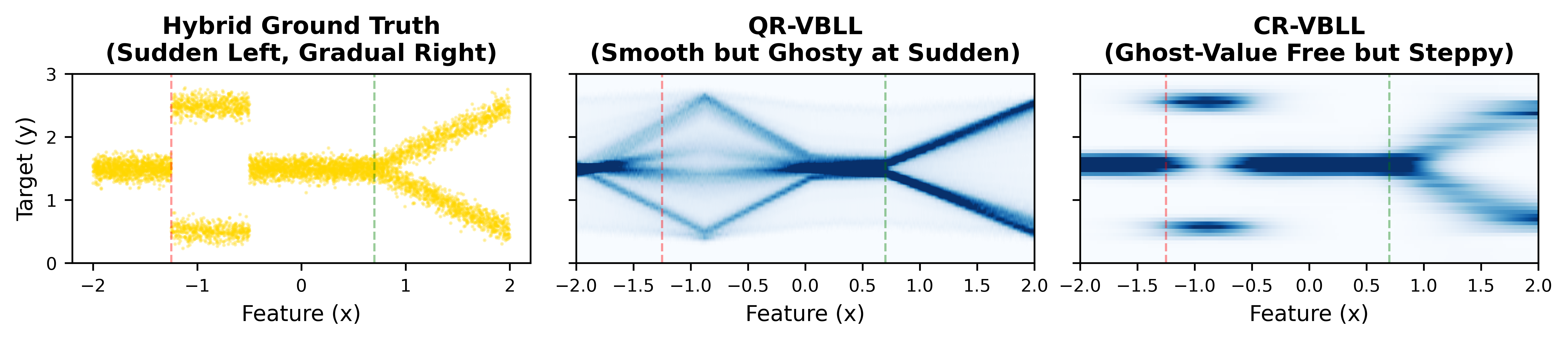}
        \caption{{Qualitative comparison of QR-VBLL and CR-VBLL on synthetic data with multimodal distribution.}}
    \end{subfigure}

    \caption{\textbf{The Modular Distribution-Agnostic Uncertainty Framework.} Both paths share a distance-aware backbone and variational inference engine, but diverge at the output head to address different topological needs:  (a) QR-VBLL is better at continuous, smooth shapes, since it excels at \emph{Global Fidelity}, smoothly capturing the gradual spread without binning artifacts and (b) CR-VBLL is better at sharp, disjoint densities, since it excels at \emph{Mode Precision}, i.e., resolving the sharp split.}
    \label{fig:unified_architecture}
        \vspace{-0.3cm}
\end{figure}

Our framework simultaneously achieves accurate density estimation with efficient UQ. At a technical level, it integrates three modular components: a distance-aware backbone, a shared Variational Bayesian inference engine, and task-specific output heads for either QR or CR.

\subsection{Component 1: Distance-Preserving Backbone}
Standard deep networks often suffer from ``feature collapse'', mapping OOD inputs to in-distribution representations. To mitigate this, we apply Spectral Normalization to the backbone $f(x)$, and adopt a \textbf{Spectral-Residual} architecture \citep{liu2020simple}, as well as an alternative that concatenates the raw input $x$ to form $\phi(x) = [f(x), x]$ (detailed in Appendix \ref{app:sn_and_raw}). 

% This structurally guarantees a bi-Lipschitz constraint 
% % $\exists c_2 > c_1 > 0$ s.t. $c_1 \|x_1 - x_2\| \le \|\phi(x_1) - \phi(x_2)\| \le c_2 \|x_1 - x_2\|$ 
% (Remark \ref{rem:bi_lipschitz}), ensuring that distances in input space are preserved in feature space, enabling reliable OOD detection. 

\begin{assumption}[Spectral Normalization \citep{liu2020simple}] \label{asm:spectral_norm}
The backbone network $\phi(x)$ employs spectral normalization, ensuring it is bi-Lipschitz: there exist constants $c_2 \geq c_1 > 0$ such that $c_1\|x_1 - x_2\|_2 \leq \|\phi(x_1) - \phi(x_2)\|_2 \leq c_2\|x_1 - x_2\|_2$ for all $x_1, x_2$.
\end{assumption}
This property is standard in distance-aware learning for OOD sensitivity \citep{liu2020simple}. 

% \begin{remark}[Distance Preservation Backbone]
% \label{rem:bi_lipschitz}
% The Bi-Lipschitz property is a standard mathematical constraint in distance-aware deep learning to guarantee out-of-distribution sensitivity \citep{liu2020simple}. While typically enforced via residual connections, an alternative  SN+Raw concatenation structure strictly satisfies this standard requirement (detailed in Appendix \ref{app:sn_and_raw}).
% \end{remark}
% We provide a detailed ablation study justifying this architectural choice in Appendix \ref{app:sn_and_raw}.

\subsection{Component 2: The Variational Bayesian Engine}
We employ a Variational Bayesian Last Layer (VBLL) \citep{harrison2024variational} to disentangle aleatoric and epistemic uncertainty efficiently. We treat the backbone $\phi(x)$ as deterministic but model the weights of the final layer $w$ as random variables, approximating the true posterior $p(w|\mathcal{D})$ with a variational distribution $q(w)$.
The Evidence Lower Bound (ELBO) is 
$
% \log p(Y | X) \geq \sum_t \mathbb{E}_{q(w)}[\log p(y_t \mid x_t, w)] - \text{KL}(q(w) \| p(w))   
\log p(\mathcal{Y}|\mathcal{X}) \ge \sum_{i=1}^N \mathbb{E}_{q(w)}[\log p(y_i|x_i, w)] - \text{KL}(q(w) \parallel p(w)) \, ,
$
where $p(w)$ is the prior over the last layer parameters and $i$ indexes the data. 

% To enable stable, single-pass training, we use \emph{Local Reparameterization} \citep{kingma2015variational}. 
Rather than sampling weights $w \sim q(w)$ to compute the expected log likelihood---which introduces high variance---we analytically compute the moments of the pre-activation logits and the resulting ELBO. 
For a linear last layer, the logits $z = w^\top \phi(x)$ follow a Gaussian distribution:
\begin{equation}
z \sim \mathcal{N}(m, s^2); ~ m := \overline{w}^\top \phi(x), ~ s^2 := \phi(x)^\top S \phi(x). \nonumber    
\end{equation}
% \label{eq:analytic_marginalization}
% $

Analytic marginalization allows us to compute the deterministic ELBO during training and exact predictive moments during inference in a \emph{single forward pass}. While the variance computation $\phi(x)^\top S \phi(x)$ scales quadratically with feature dimension $O(D^2)$ for dense covariance, this cost is negligible for standard bottleneck layers (e.g., $D=64$ in our experiments) compared to the backbone latency. Thus, our framework achieves effective $O(1)$ inference complexity relative to ensemble-based methods that require $O(M)$ full network evaluations.

% We discretize the target domain $[y_{\min}, y_{\max}]$ into $K$ fixed bins with centers $\mathcal{B} = \{b_1, \dots, b_K\}$. The model outputs a probability vector $p(x) = \text{Softmax}(z)$, where the logits $z$ are sampled from the VBLL posterior.
% \begin{equation} \label{eq:elbo_cr}
%     \mathcal{L}_{\text{CR}} = \mathbb{E}_{z \sim \mathcal{N}(m, s^2)} [\text{NLL}_{\text{CE}}(z, y)] + \beta \KL(q(w) || p(w))
% \end{equation}
% Crucially, this formulation allows the model to assign high probability to disjoint bins (e.g., $p_1 \approx 0.5$ for ``Skip'' and $p_K \approx 0.5$ for ``Complete'') while assigning zero probability to the mean, solving the ``Ghost Value'' problem.

\subsection{Path A: Variational Bayesian Quantile Regression (QR-VBLL)}
\label{sec:qr_vbll}
\snew{In multi-modal quantile regression, 
one estimates a number of quantiles of the conditional distribution $y | x$ using the pinball loss (see \Cref{eq:pinball-def}), and the final point estimate is derived by numerically integrating over the estimated quantiles using the trapezoidal rule.}
\snew{This path develops a VBI style method for quantile regression. }
% estimates the cumulative distribution function directly. In this section, we will develop a method for variational inference in the quantile regression setting. 
In particular, let
% \begin{equation}
%     z = w^\top \phi(x)
% \end{equation}
% which induces 
$
    \delta = y - z \sim \mathcal{N}(y - \bar{w}^\top \phi(x), \phi(x)^\top S \phi(x)).
$
VBLL-QR relies on computation of 
$
    E_{q(w)}[\log p(y_t \mid x_t, w)] = E_{\delta}[\log p(\delta; \sigma, \kappa)] \, ,
$
where
\begin{equation}
    p(\delta; \sigma, \kappa) = \frac{\kappa(1-\kappa)}{\sigma} \begin{cases} \exp\left(-\frac{\kappa}{\sigma}\delta\right) & \text{if } \delta \ge 0 \\ \exp\left(\frac{1-\kappa}{\sigma}\delta\right) & \text{if } \delta < 0 \end{cases}
\end{equation}
is the asymmetric Laplace distribution density.
A discussion of this likelihood function in the context of Bayesian quantile regression and its connection to pinball loss
is provided in Appendix \ref{app:vbll_qr_background}.
The following result gives a closed-form formula of this expected log likelihood function.
The proof is provided in the Appendix \ref{app:vbll_qr}.
\begin{theorem}
\label{thm:qr_elbo}
 Let $\sigma >0$, $\kappa \in (0,1)$, and $\delta \sim \mathcal{N}(\bar{\delta}, s^2)$. Then, 
 \begin{equation} 
 \label{eq:analytic_qr}
     \mathbb{E}_{\delta}[\log p(\delta; \sigma, \kappa)] = \log\frac{\kappa (1-\kappa)}{\sigma} - \frac{\bar{\delta}}{\sigma} (\kappa - 1 + \Phi(\frac{\bar{\delta}}{s})) 
     - \frac{s}{\sigma} \mathcal{N}(\frac{\bar{\delta}}{s}; 0, 1) \, ,
 \end{equation}
 where $\Phi(\cdot)$ denotes the CDF of the standard normal.
\end{theorem}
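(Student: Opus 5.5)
Write $\log p(\delta;\sigma,\kappa) = \log\frac{\kappa(1-\kappa)}{\sigma} - \frac{1}{\sigma}\rho_\kappa(\delta)$, where $\rho_\kappa(\delta) = \kappa\delta\,\mathbf{1}\{\delta\ge 0\} - (1-\kappa)\delta\,\mathbf{1}\{\delta<0\}$ is the pinball loss. The constant passes through the expectation unchanged. The statement therefore reduces to computing $\mathbb{E}[\rho_\kappa(\delta)]$ for $\delta\sim\mathcal{N}(\bar\delta,s^2)$ and checking that
\[
\mathbb{E}[\rho_\kappa(\delta)] = \bar\delta\bigl(\kappa-1+\Phi(\bar\delta/s)\bigr) + s\,\mathcal{N}(\bar\delta/s;0,1).
\]

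To get there, I would rewrite the pinball loss as $\rho_\kappa(\delta) = (\kappa-1)\delta + \delta\,\mathbf{1}\{\delta\ge 0\}$. This holds because on $\delta\ge0$ the right side is $\kappa\delta$, and on $\delta<0$ it is $(\kappa-1)\delta$. Taking expectations gives $(\kappa-1)\bar\delta + \mathbb{E}[\delta^+]$, so the only nontrivial quantity is the Gaussian positive-part moment $\mathbb{E}[\delta^+]$.

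For that step, substitute $\delta = \bar\delta + s u$ with $u\sim\mathcal{N}(0,1)$. Write $a = \bar\delta/s$ and let $\varphi$ denote the standard normal density. Then
\[
\mathbb{E}[\delta^+] = \int_{-a}^{\infty} (\bar\delta + s u)\varphi(u)\,du = \bar\delta\,\Phi(a) + s\,\varphi(a).
\]
This uses $P(u\ge -a)=\Phi(a)$, together with $\int_{-a}^\infty u\varphi(u)\,du = \varphi(-a)=\varphi(a)$, which follows from $\varphi'(u) = -u\varphi(u)$ and the symmetry of $\varphi$.

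Combining the pieces gives $\mathbb{E}[\rho_\kappa(\delta)] = \bar\delta(\kappa-1+\Phi(\bar\delta/s)) + s\,\mathcal{N}(\bar\delta/s;0,1)$. Dividing by $\sigma$ and subtracting from the log-normalizer yields \eqref{eq:analytic_qr}.

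There is no real obstacle here. The only care needed is with the sign conventions in the asymmetric Laplace exponent (the case $\delta<0$ contributes $+\frac{1-\kappa}{\sigma}\delta$, i.e. $-\frac{1}{\sigma}(\kappa-1)\delta$) and with the assumption $s>0$, which is needed for the standardization. The degenerate limit $s\to0$ recovers the pinball loss and serves as a sanity check.
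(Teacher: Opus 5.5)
Your proof is correct and takes essentially the same route as the paper: pull out the log-normalizer, reduce to the expected pinball loss, and evaluate it with the truncated Gaussian first moment $\bar\delta\,\Phi(\bar\delta/s)+s\,\mathcal{N}(\bar\delta/s;0,1)$. The differences are cosmetic. Writing $\rho_\kappa(\delta)=(\kappa-1)\delta+\delta\,\mathbf{1}\{\delta\ge 0\}$ replaces the paper's separate reflection argument for the negative half-line integral, and you derive the positive-part moment by standardization where the paper cites it as a known integral.
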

Thus, the expected log likelihood can be computed analytically, and can be represented in terms of functions that are typically available in automatic differentiation packages. The variational posterior and the neural network weights can be trained by backpropagation through this loss function. 

Substituting the analytic expectation in Eq.\eqref{eq:analytic_qr} into the variational objective yields the loss function \snew{to be minimized} for QR-VBLL:
\jwnew{
\begin{equation} 
\label{eq:loss_qr}
    \mathcal{L}_{\text{QR},\kappa} = \snew{-} \sum_{i=1}^N \underbrace{\mathbb{E}_{\delta}[\log p(y_i - z_i; \sigma_{\kappa}, \kappa)]}_{\text{Analytic Data Term (Eq.\eqref{eq:analytic_qr})}} \snew{+} \beta \underbrace{D_{\text{KL}}(q(w) || p(w))}_{\text{Regularization}}
\end{equation}
The overall loss is then summed over a predefined set of quantiles $\kappa \in \mathcal{K}$ with independent epistemic variance $\sigma_{\kappa}$: $\mathcal{L}_{QR} = \sum_{\kappa \in \mathcal{K}}\mathcal{L}_{QR, \kappa}$.}
% This objective enables stable, end-to-end training via standard backpropagation, as the gradients of the analytic log-likelihood (Eq. \ref{eq:analytic_qr}) are well-defined with respect to both the network parameters and the variational posterior moments.
Prediction in this model is done by marginalizing over the variational posterior: 
$
    p(y | x) = \mathbb{E}_{q(w)}[p(y | x, w)].
$
This following result shows that this is analytically tractable.
\begin{proposition}
    \label{thm:qr_predictive}
   Let $\sigma >0$, $\kappa \in (0,1)$, $\delta \sim \mathcal{N}(\bar{\delta}, s^2)$, $\lambda_+ = \frac{\kappa}{\sigma}$, and $\lambda_- = \frac{\kappa-1}{\sigma}$. Then, 
\begin{align}
    \label{eq:qr_vbll_density}
       \mathbb{E}_{\delta}[p(\delta; \sigma, \kappa)] =  -\sigma \lambda_- \lambda_+ \Bigg[ & \exp\left( - \lambda_- \bar{\delta} + \frac{1}{2} \lambda_-^2 s^2 \right) \Phi\left(- \frac{ \bar{\delta} - \lambda_- s^2}{s}\right) \nonumber \\
        & + \exp\left(-\lambda_+ \bar{\delta} + \frac{1}{2} \lambda_+^2 s^2\right) \Phi\left(\frac{ \bar{\delta} - \lambda_+ s^2}{s}\right) \Bigg].
\end{align}

\end{proposition}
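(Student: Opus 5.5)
The plan is to compute $\mathbb{E}_{\delta}[p(\delta;\sigma,\kappa)]$ directly, by splitting the Gaussian integral at $\delta=0$ and completing the square on each piece. First I rewrite the asymmetric Laplace density in terms of $\lambda_\pm$. Since $\kappa(1-\kappa)/\sigma = -\sigma\lambda_-\lambda_+$, we have
\[
p(\delta;\sigma,\kappa) = -\sigma\lambda_-\lambda_+\left[e^{-\lambda_+\delta}\mathbf{1}\{\delta\ge 0\} + e^{-\lambda_-\delta}\mathbf{1}\{\delta<0\}\right],
\]
because $(1-\kappa)/\sigma = -\lambda_-$. By linearity, the expectation then equals $-\sigma\lambda_-\lambda_+$ times the sum of two truncated Gaussian exponential moments:
\[
I_+ = \mathbb{E}\big[e^{-\lambda_+\delta}\mathbf{1}\{\delta\ge0\}\big], \qquad I_- = \mathbb{E}\big[e^{-\lambda_-\delta}\mathbf{1}\{\delta<0\}\big].
\]

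The key lemma is a single identity for a general real $\lambda$. For $\delta\sim\mathcal{N}(\bar\delta,s^2)$, complete the square:
\[
-\lambda\delta - \frac{(\delta-\bar\delta)^2}{2s^2} = -\frac{(\delta-(\bar\delta-\lambda s^2))^2}{2s^2} - \lambda\bar\delta + \tfrac12\lambda^2 s^2 .
\]
Hence $e^{-\lambda\delta}\,\mathcal{N}(\delta;\bar\delta,s^2) = e^{-\lambda\bar\delta+\frac12\lambda^2s^2}\,\mathcal{N}(\delta;\bar\delta-\lambda s^2,s^2)$. This is an exponential tilt that shifts the mean to $\bar\delta-\lambda s^2$. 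Integrating this shifted Gaussian over $[0,\infty)$ gives the probability $\Phi\big((\bar\delta-\lambda s^2)/s\big)$, and integrating over $(-\infty,0)$ gives $\Phi\big(-(\bar\delta-\lambda s^2)/s\big)$.

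Applying the identity with $\lambda=\lambda_+$ on $\{\delta\ge0\}$ and with $\lambda=\lambda_-$ on $\{\delta<0\}$ yields
\[
I_+ = e^{-\lambda_+\bar\delta+\frac12\lambda_+^2s^2}\,\Phi\!\left(\frac{\bar\delta-\lambda_+s^2}{s}\right), \qquad I_- = e^{-\lambda_-\bar\delta+\frac12\lambda_-^2s^2}\,\Phi\!\left(-\frac{\bar\delta-\lambda_-s^2}{s}\right).
\]
Multiplying the sum $I_++I_-$ by $-\sigma\lambda_-\lambda_+$ gives exactly \eqref{eq:qr_vbll_density}. Every integrand is nonnegative and finite, because Gaussian tails dominate exponentials, so no convergence issues arise. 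The sign of $\lambda$ plays no role in the completing-the-square step.

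The only real obstacle is sign bookkeeping. One must check that the prefactor $\kappa(1-\kappa)/\sigma$ equals $-\sigma\lambda_-\lambda_+$, that the negative-branch exponent $\frac{1-\kappa}{\sigma}\delta$ equals $-\lambda_-\delta$, and that the truncation direction produces $\Phi(-\cdot)$ on the negative branch. I would verify the final formula in the limit $s\to0$. There the expression should reduce to $p(\bar\delta;\sigma,\kappa)$, since one $\Phi$ tends to $1$ and the other to $0$ according to the sign of $\bar\delta$.
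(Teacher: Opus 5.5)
Your proof is correct and follows the same route as the paper: rewrite the prefactor as $-\sigma\lambda_-\lambda_+$, split the Gaussian expectation at $\delta=0$, and complete the square on each branch to get an exponentially tilted Gaussian whose mass on the half-line produces the $\Phi$ terms. Your single identity for general $\lambda$ just makes explicit the positive-branch computation that the paper leaves as ``treated similarly,'' and your sign bookkeeping and $s\to 0$ sanity check are accurate.
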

This density is a mixture of exponentially-modified Gaussian distributions, smoothing the hard quantile estimates based on the local epistemic uncertainty $s^2$.

\subsection{Path B: Variational Bayesian Classification Restoration (CR-VBLL)}\label{subsec:cr_vbll}
% We discretize the target domain into $K$ fixed bins. The model outputs a probability vector $\mathbf{p}(y|x) = \text{Softmax}(z) \in \mathbb{R}^K$, 
% where logits $z$ are marginalized over the posterior
% \begin{equation} \label{eq:elbo_cr}
%     \mathcal{L}_{\text{CR}} = \mathbb{E}_{z \sim \mathcal{N}(m, s^2)} [\text{NLL}_{\text{CE}}(z, y)] + \beta \KL(q(w) || p(w))
% \end{equation}
% This formulation allows the model to assign high probability to disjoint bins (e.g., ``Skip'' and ``Complete'') while assigning zero probability to the Ghost Value region. The expectation of the cross entropy is lower-bounded by a deterministic objective (see \citet{harrison2024variational}), and prediction relies on local reparameterization for variance reduction \citep{kingma2015variational}.
We discretize the continuous target domain into $K$ fixed bins. The model outputs a probability vector $\mathbf{p}(y|x) = \text{Softmax}(z) \in \mathbb{R}^K$, where the pre-activation logits $z$ are marginalized over the variational posterior. 
Since our $K$ bins represent contiguous intervals of a single continuous target variable rather than heterogeneous categories, we impose a physically motivated structural assumption that the bins share a homogeneous epistemic uncertainty (formally detailed in Assumption \ref{asm:isotropic}), where the pre-activation logits follow $z \sim \mathcal{N}(m, s^2(x)\mathbf{I}_K)$, yielding the objective:
\begin{equation}
    \label{eq:elbo_cr}
    \mathcal{L}_{\text{CR}} = \mathbb{E}_{z \sim \mathcal{N}(m, s^2(x)\mathbf{I}_K)}[\text{NLL}_{\text{CE}}(z, y)] + \beta \KL(q(w) || p(w))
\end{equation}
% This isotropic formulation strictly preserves the predictive argmax of the base model and analytically reduces the intractable softmax integration to an $\mathcal{O}(1)$ closed-form temperature scaling. Furthermore, it allows the model to assign high probability to disjoint bins (e.g., ``Skip'' and ``Complete'') while assigning zero probability to the Ghost Value region. 
\snew{For training, we apply again Jensen's inequality to approximate the expected log-sum-exp term in \Cref{eq:elbo_cr}.}
For prediction, we leverage the multi-class probit approximation \citep{bishop2006pattern}
$
    \mathbb E_{ w } \left[ \mathbf{p}(y|w,x) \right] \approx \text{Softmax}\left( {m}/{(\sqrt{1 + \frac{\pi}{8} s^2(x)})} \right) ,
$
which yields a closed-form approximation that preserves the predictive argmax while flattening the confidence of uncertain inputs.

\begin{remark}[Path Selection, Figure \ref{fig:unified_architecture} (c)] 
\label{rem:path_selection}
    {QR-VBLL} excels at \emph{Global Fidelity}: it provides robust coverage for heavy-tailed or sparse distributions (minimizing Continuous Ranked Probability Score, CRPS \citep{gneiting2007strictly}) without binning artifacts. {CR-VBLL} excels at \emph{Mode Precision}: its discrete nature captures sharp, disjoint peaks with high fidelity, minimizing NLL. 
\end{remark}
% {CR-VBLL (Mode Precision):} By discretizing the output space, classification approaches can allocate arbitrary probability mass to specific bins. This allows CR-VBLL to model the \emph{sharpness} of disjoint peaks (e.g., the distinct 0.0 and 1.0 modes in user feedback) with high fidelity, minimizing NLL in multi-modal settings.

% {QR-VBLL (Global Fidelity):} In contrast, Quantile Regression approximates the Cumulative Distribution Function (CDF) directly. This smoothing effect makes QR-VBLL superior at capturing the \emph{global distributional fidelity} (measured by CRPS). It avoids the ``binning artifacts'' of CR and provides robust coverage for heavy-tailed or sparse distributions (e.g., Uber) where local modes are less distinct.

% \begin{figure*}[t]
%     \centering
%     \includegraphics[width=1.00\textwidth]{CRVBLL_vs_QRVBLL.png}
%     \caption{\textbf{Qualitative comparison of CR-VBLL and QR-VBLL on a hybrid synthetic dataset.} The ground truth (left) exhibits both a sudden, disjoint bifurcation and a gradual, continuous transition. \textbf{CR-VBLL (middle)} excels at \emph{Mode Precision}, i.e., resolving the sharp split without predicting intermediate Ghost Values, but its discrete nature introduces ``steppy'' binning artifacts in continuous regions. Conversely, \textbf{QR-VBLL (right)} excels at \emph{Global Fidelity}, smoothly capturing the gradual spread without binning artifacts, but smears the density (local smoothing) at sharp bifurcation.}
%     \label{fig:crvbll_vs_qrvbll}
%     \vspace{-0.3cm}
% \end{figure*}

\section{Framework Properties and Efficiency}
\label{sec:theory}
We provide the theoretical guarantees of our framework.
% (1) \emph{Efficient UQ:} We derive an analytic decomposition of uncertainty that enables single-pass inference ($\mathcal{O}(1)$), bypassing the prohibitive computational costs of sampling-based methods. 
% (2) \emph{OOD Sensitivity:} We establish that epistemic uncertainty grows quadratically with distance from the training manifold, structurally guaranteeing a conservative, calibrated ``Safety Net'' on out-of-distribution inputs. 
% (3) \emph{Asymptotic Uncertainty:} We show that our framework makes accurate distribution estimations as the sample number grows.

\textbf{Efficient UQ} 
We derive an analytic decomposition of uncertainty that enables single-pass inference ($\mathcal{O}(1)$), bypassing the prohibitive computational costs of sampling-based methods.
\begin{proposition}[Efficient UQ] \label{prop:decomposition}
% For both paths, the total predictive variance under the variational posterior $q(w)$ decomposes analytically:
% \begin{equation} \label{eq:variance_decomp}
%     \underbrace{\text{Var}_{q(y|x)}}_{\text{Total}} = \underbrace{\mathbb{E}_q[\text{Var}(y|x, w)]}_{\text{Aleatoric (Data Noise)}} + \underbrace{\text{Var}_q(\mathbb{E}[y|x, w])}_{\text{Epistemic (Model Uncertainty)}}
% \end{equation}
Consider the aleatoric variance and the epistemic variance defined as in \Cref{eq:variance_decomposition}
For both VBLL-QR and VBLL-CR, the two types of variances are fully analytic, and can be computed without Monte Carlo sampling.
% Crucially, both decompositions are fully analytic, requiring no Monte Carlo sampling. QR-VBLL derives closed-form moments via the Asymmetric Laplace Distribution. For CR-VBLL, the aleatoric term is the expected Gini impurity ($1 - \sum p_i^2$), while the epistemic variance is computed analytically via the Delta method and probit-softmax technique \citep{kristiadi2020being} (Appendix \ref{app:proof_decomposition}).
% Crucially, both decompositions are fully analytic, requiring no Monte Carlo sampling. 
\footnote{For QR-VBLL, $\text{Var}$ represents the standard scalar variance derived via the Asymmetric Laplace Distribution. For CR-VBLL, the use of $\text{Var}$ is a slight abuse of notation to maintain a unified equation format; the uncertainty quantification is formally resolved in the $K$-dimensional categorical distribution space via the Trace operator (see Appendix \ref{app:proof_decomposition}).}
\end{proposition}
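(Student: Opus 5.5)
We treat the two paths separately. In each case we write down closed forms for the conditional mean $\mathbb{E}[y\mid x,w]$ and conditional variance $\mathrm{Var}(y\mid x,w)$ as functions of the Gaussian logit $z$. We then show that their expectation and variance under $z\sim\mathcal{N}(m,s^2)$ (respectively $\mathcal{N}(m,s^2\mathbf{I}_K)$) are available in closed form, or through the same analytic approximation the paper already uses for prediction.

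\emph{QR-VBLL.} For a fixed quantile level $\kappa$, the likelihood is the asymmetric Laplace density in $\delta=y-z$. Conditional on $w$, $y=z+\delta$ with $\delta$ ALD-distributed, so the standard ALD moments give
\[
\mathbb{E}[y\mid x,w]=z+\sigma\frac{1-2\kappa}{\kappa(1-\kappa)},\qquad
\mathrm{Var}(y\mid x,w)=\sigma^2\frac{1-2\kappa+2\kappa^2}{\kappa^2(1-\kappa)^2}.
\]
These follow by integrating the two exponential pieces separately. The conditional variance does not depend on $w$, so the aleatoric term equals this constant. The conditional mean is affine in $z$, so the epistemic term is $\mathrm{Var}_q(z)=s^2=\phi(x)^\top S\phi(x)$.

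When several quantiles are combined by trapezoidal integration, the point estimate is a fixed linear combination $\sum_\kappa a_\kappa z_\kappa$ of Gaussian logits, with each $z_\kappa$ linear in $w$. Its epistemic variance is therefore the quadratic form $a^\top \Sigma_z a$, where $\Sigma_z$ is built from $\phi(x)$ and $S$. The aleatoric part is a weighted sum of the constants above, or more generally a quadratic function of the $z_\kappa$, whose Gaussian expectation is closed form. Prop.~\ref{thm:qr_predictive} also gives the marginal density analytically, so none of this requires sampling.

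\emph{CR-VBLL.} We work in the $K$-dimensional one-hot space. Conditional on $w$, the label is categorical with probabilities $\mathbf{p}=\mathrm{Softmax}(z)$. Its covariance is $\mathrm{diag}(\mathbf{p})-\mathbf{p}\mathbf{p}^\top$, whose trace is the Gini impurity $1-\sum_k p_k^2$. Taking traces of the law of total covariance gives
\[
\mathrm{tr}\,\mathrm{Cov}=\mathbb{E}_q\!\left[1-\textstyle\sum_k p_k^2\right]+\mathrm{tr}\,\mathrm{Cov}_q(\mathbf{p}).
\]
The total term is $1-\sum_k\bar p_k^2$, where $\bar{\mathbf{p}}$ is the predictive mean. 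We evaluate $\bar{\mathbf{p}}$ with the multi-class probit approximation $\mathrm{Softmax}(m/\sqrt{1+\pi s^2/8})$. The epistemic term we compute with the delta method: $\mathrm{Cov}_q(\mathbf{p})\approx J\,(s^2\mathbf{I}_K)\,J^\top$, where $J=\mathrm{diag}(\mathbf{p}(m))-\mathbf{p}(m)\mathbf{p}(m)^\top$ is the softmax Jacobian. This yields $s^2\|J\|_F^2$, which is explicit in $m$ and $s^2$. The aleatoric term is then total minus epistemic, which is closed form by the same ingredients.

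\textbf{Main obstacle.} The hard step is the CR path. Expectations of softmax under a Gaussian are not exactly tractable, so ``fully analytic'' must be read as meaning that each term is given by explicit formulas (probit and delta-method approximations) with no Monte Carlo. We will state this explicitly. The isotropic Assumption~\ref{asm:isotropic} is what collapses the Jacobian sandwich to $s^2 JJ^\top$ and makes the probit approximation a single scalar temperature. The QR path is exact.
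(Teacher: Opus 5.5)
Your proposal is correct and follows essentially the paper's argument. For QR it uses closed-form ALD moments plus the Gaussian logit; your $a^\top\Sigma_z a$ for the trapezoidal mean and exact Gaussian expectation of the quadratic reconstructed-CDF variance are, if anything, more faithful to Eq.~\eqref{eq:variance_decomposition} than the paper's $s^2(x)$ and plug-in formula \eqref{eq:qr_aleatoric} at the predicted quantiles. For CR it matches the paper: trace of the law of total covariance, probit-approximated mean, delta-method epistemic term $s^2\|J\|_F^2$ via Assumption~\ref{asm:isotropic}, and aleatoric as total minus epistemic.

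The one substantive deviation is the linearization point. The paper evaluates the softmax Jacobian at the probit-scaled logits $m/\sqrt{1+\frac{\pi}{8}s^2}$ rather than at $m$, so that $J$ does not collapse when the softmax saturates on far-OOD inputs. This does not affect the analyticity claim, but the $\Omega(s^2(x))$ epistemic lower bound in Appendix~\ref{app:calibration_proof} relies on it.
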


% \subsection{Uncertainty Quantification Efficiency}
% \label{sec:efficiency}

\begin{table}[ht] % Changed from table* to table (better for NeurIPS single-column)
\vspace{-0.1in}
\caption{Comparison of UQ methods. $N$: training data size, $M$: ensemble size, $S$: MC samples.
% Our framework uniquely achieves $O(1)$ efficiency with analytic uncertainty decomposition, avoiding the linear scaling of sampling-based methods.
\emph{Distribution-Agnostic (Ours):} Makes no parametric assumption about the density shape $p(y|x)$, but relies on asymptotic consistency rather than finite-sample coverage bounds. \emph{Distribution-Free:} Guarantees finite-sample coverage (e.g., Conformal Prediction \citep{romano2019conformalized}) regardless of the underlying distribution $p(x,y)$. \emph{Parametric Mixture:} Assumes the density is a weighted sum of fixed base distributions (e.g., Gaussians in MDNs), which limits expressivity on disjoint supports.
}
\label{tab:theory_comparison}
\begin{center}
\resizebox{\textwidth}{!}{%  <--- ADD THIS
\begin{tabular}{lccccc}
% \small
\toprule
\textbf{Method} & \textbf{Multi-modal} & \textbf{Uncertainty} & \textbf{Inference} & \textbf{Training} & $p(y|x)$ \textbf{Distribution} \\
& \textbf{Consistency} & \textbf{Decomposition} & \textbf{Cost} & \textbf{Cost} & \textbf{Assumption} \\
\midrule
Deep Ensembles (MSE) & $\times$ & Via disagreement & $O(M)$ & $O(MN)$ & Parametric Mixture \\
MC-Dropout (MSE) & $\times$ & Via sampling & $O(S)$ & $O(N)$ & Approx. Mixture \\
Evidential (DER) & $\times$ & Analytic & $O(1)$ & $O(N)$ & Normal-Inv. Gaussian \\
SNGP (Standard, MSE) & $\times$ & Analytic & $O(1)$ & $O(N)$ & Gaussian \\
Conformal (CQR) & \checkmark & Coverage Only & $O(1)$ & $O(N)$ & Distribution-Free \\
\midrule
\textbf{QR-VBLL (Ours)} & \checkmark & \textbf{Analytic} & \textbf{$O(1)$} & \textbf{$O(N)$} & \textbf{Distribution-Agnostic} \\
\textbf{CR-VBLL (Ours)} & \checkmark & \textbf{Analytic} & \textbf{$O(1)$} & \textbf{$O(N)$} & \textbf{Distribution-Agnostic} \\
\bottomrule
\end{tabular}%
} % <--- AND THIS
\end{center}
\vspace{-0.1in}
\end{table}

% \begin{table*}[h]
% \caption{Comparison of uncertainty methods. $N$: training data size, $M$: ensemble size, $S$: MC samples. Our framework uniquely combines $O(1)$ efficiency with analytic uncertainty decomposition, avoiding the linear scaling of sampling-based methods.}
% \label{tab:theory_comparison}
% \begin{center}
% \small
% \begin{tabular}{lccccc}
% \toprule
% \textbf{Method} & \textbf{Multi-modal} & \textbf{Uncertainty} & \textbf{Inference} & \textbf{Training} & $p(y|x)$ \textbf{Distribution} \\
%  & \textbf{Consistency} & \textbf{Decomposition} & \textbf{Cost} & \textbf{Cost} & \textbf{Assumption} \\
% \midrule
% Deep Ensembles & $\times$ & Via disagreement & $O(M)$ & $O(MN)$ & Parametric Mixture \\
% MC-Dropout & $\times$ & Via sampling & $O(S)$ & $O(N)$ & Approximated Mixture \\
% Evidential (DER) & $\times$ & Analytic (Exact) & $O(1)$ & $O(N)$ & Normal-Inverse Gaussian \\
% SNGP & $\times$ & Analytic (Approx) & $O(1)$ & $O(N)$ & Gaussian \\
% Conformal (CQR) & \checkmark & Coverage Only & $O(1)$ & $O(N)$ & Distribution-Free \\
% \midrule
% \textbf{CR-VBLL (Ours)} & \checkmark & \textbf{Analytic (Approx)} & \textbf{$O(1)$} & \textbf{$O(N)$} & \textbf{Distribution-Agnostic} \\
% \textbf{QR-VBLL (Ours)} & \checkmark & \textbf{Analytic (Exact)} & \textbf{$O(1)$} & \textbf{$O(N)$} & \textbf{Distribution-Agnostic} \\
% \bottomrule
% \end{tabular}
% \end{center}
% \vskip -0.1in
% % \footnotetext[1]{Exact with respect to the variational posterior $q(w)$.}
% \end{table*}

We contrast the computational complexity of our framework against established baselines (Table~\ref{tab:theory_comparison}).

\emph{Sampling-Based vs. Deterministic.} Sampling methods incur prohibitive costs for real-time systems: Deep Ensembles \citep{lakshminarayanan2017simple} require $O(M)$ compute for both training and inference, while MC-Dropout \citep{gal2016dropout} requires $O(S)$ inference passes. Baselines with $O(1)$ efficiency either can not estimate the arbitrary conditional density (DER, CQR) or suffer from optimization instability (MDN), or rely on approximate uncertainty quantification (SNGP).

\emph{Ours (Analytic \& Efficient).} QR-VBLL and CR-VBLL achieve optimal $O(1)$ training and inference latency, matching standard deterministic regression while providing the rigorous uncertainty decomposition of Bayesian methods. Unlike SNGP, which approximates the GP posterior, our framework derives a tractable analytic decomposition of the variational posterior. Memory overhead is negligible ($<0.1\%$ parameters), restricted to the final layer's variational weights.

% \begin{remark}[Distributional Assumptions in Table \ref{tab:theory_comparison}]
% % To clarify the terminology in Table \ref{tab:theory_comparison}:
% \emph{Distribution-Agnostic (Ours):} Makes no parametric assumption about the density shape $p(y|x)$, but relies on asymptotic consistency rather than finite-sample coverage bounds. \emph{Distribution-Free:} Guarantees finite-sample coverage (e.g., Conformal Prediction \citep{romano2019conformalized}) regardless of the underlying distribution $p(x,y)$. \emph{Parametric Mixture:} Assumes the density is a weighted sum of fixed base distributions (e.g., Gaussians in MDNs), which limits expressivity on disjoint supports.
% \end{remark}
% \begin{remark}[Flexible vs. Distribution-Free]
% \label{rem:dist_assumption}
% We distinguish \textit{Flexible} mixture approximations (e.g., Ensembles) from true \textit{Distribution-Free} methods. While mixtures can theoretically model any shape, finite ensembles ($M \approx 5$) remain constrained by their parametric base learners, often ``oversmoothing'' disjoint modes. Our distribution-free heads remove this parametric bottleneck entirely, allowing the model to assign zero probability to "dead zones" without regularization penalties.
% \end{remark}

% \subsection{OOD Sensitivity and Efficiency}

\textbf{OOD Sensitivity}
We establish that epistemic uncertainty grows quadratically with distance from the training manifold, structurally guaranteeing a conservative, calibrated ``Safety Net'' on out-of-distribution inputs. 
% A crucial property is that the epistemic uncertainty grows with distance for inputs sufficiently far from the training samples. \snew{This is foundational to the sensitivity of our models' uncertainty estimation to OOD samples, which we refer to as the ``Safety Net'' effects.}
\begin{proposition}[OOD Sensitivity and Epistemic Growth] \label{prop:ood}
Under the Bi-Lipschitz Property (Assumption \ref{asm:spectral_norm}), let $x'$ be an OOD input at distance $d = \min_{x \in \mathcal{D}_{\text{train}}} ||x' - x||_2$ from the training set. For any $d \ge R/c_1$, where $R = \max_{x \in \mathcal{D}_{\text{train}}} ||\phi(x)||_2$, the epistemic uncertainty $s^2(x')$ is lower-bounded by a quadratic function of the distance:
$
    s^2(x') = \phi(x')^\top S \phi(x') \ge \lambda_{\min}(S) (c_1 d - R)^2.
$
\end{proposition}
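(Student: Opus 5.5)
The bound follows from two elementary facts: a Rayleigh-quotient lower bound for the quadratic form $\phi(x')^\top S \phi(x')$, and a lower bound on $\|\phi(x')\|_2$ obtained from the bi-Lipschitz property together with the reverse triangle inequality.

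\textbf{Step 1 (Rayleigh bound).} The variational covariance $S$ is symmetric positive semidefinite. Hence for any vector $v$ we have $v^\top S v \ge \lambda_{\min}(S)\|v\|_2^2$, which follows from the spectral decomposition of $S$. Applying this with $v=\phi(x')$ gives
\[
s^2(x') \ge \lambda_{\min}(S)\,\|\phi(x')\|_2^2 ,
\]
so it suffices to lower-bound $\|\phi(x')\|_2$.

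\textbf{Step 2 (feature norm).} Fix any training point $x\in\mathcal{D}_{\text{train}}$. By the reverse triangle inequality, $\|\phi(x')\|_2 \ge \|\phi(x')-\phi(x)\|_2 - \|\phi(x)\|_2$. The lower Lipschitz constant in Assumption \ref{asm:spectral_norm} gives $\|\phi(x')-\phi(x)\|_2 \ge c_1\|x'-x\|_2 \ge c_1 d$, since $d$ is the minimum distance to the training set. By definition of $R$, $\|\phi(x)\|_2 \le R$. Together these give $\|\phi(x')\|_2 \ge c_1 d - R$. This holds for any single training point, so no choice of $x$ is needed; a nearest point is one valid option.

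\textbf{Step 3 (squaring).} The hypothesis $d\ge R/c_1$ makes $c_1 d - R \ge 0$. Squaring the inequality from Step 2 therefore preserves its direction, and combining with Step 1 yields
\[
s^2(x') \ge \lambda_{\min}(S)\,(c_1 d - R)^2 .
\]

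The only delicate point is Step 3. The sign condition $d\ge R/c_1$ is exactly what justifies squaring, so the argument must verify nonnegativity before doing so. One should also note that the bound is nontrivial only when $S$ is positive definite, i.e.\ $\lambda_{\min}(S)>0$, which holds for a Gaussian variational posterior with full-rank covariance. Everything else is routine.
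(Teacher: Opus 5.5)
Your proof is correct and follows essentially the same route as the paper: the lower bi-Lipschitz bound plus the reverse triangle inequality give $\|\phi(x')\|_2 \ge c_1 d - R$, the minimum-eigenvalue bound controls $\phi(x')^\top S \phi(x')$, and the condition $d \ge R/c_1$ justifies squaring. The only cosmetic difference is that the paper works with the nearest training point $x^*$, whereas you observe that any training point suffices.
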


% \begin{proof}[Proof Sketch]
% Spectral normalization ensures the backbone $\phi(x)$ satisfies the upper bound of the bi-Lipschitz property \citep{liu2020simple}, and when the model does not preserve the distance of $x'$. 
% Since $S$ is positive definite, the quadratic form $\phi(x')^\top S \phi(x')$ scales with $\|\phi(x')\|_2^2$. Thus, as inputs move away from the training manifold, the variance of the latent scores (logits for CR, quantile estimates for QR) increases quadratically, ensuring OOD detection.
% \end{proof}

\begin{remark} [Implicit Calibration via Marginalization] \label{rem:ood_calibration}
Note that because the raw conditional quantile estimators are deterministic posterior means, individual predicted quantiles do not natively inflate with epistemic variance. Instead, our OOD "Safety Net" is achieved via the marginalization of the weight posterior, which structurally flattens the full continuous predictive density proportional to $s^2(x)$ (see Appendix \ref{app:calibration_proof} for detailed description and derivations).
    % Beyond merely detecting OOD inputs (Proposition 4), the marginalization of the weight posterior structurally regularizes the model against overconfidence (formal proofs in Appendix \ref{app:calibration_proof}).  For QR-VBLL, the analytic smoothing of the Asymmetric Laplace likelihood (Theorem 1) into a mixture of exponentially-modified Gaussians explicitly forces the predictive variance to incorporate the epistemic term $s^2(x)$. This guarantees that predictive intervals dynamically inflate on OOD inputs, preventing the overconfident zero-width collapse common in standard quantile regression. For CR-VBLL, this marginalization induces an input-dependent temperature scaling, which strictly dampens the maximum softmax confidence in regions of high epistemic uncertainty \citep{seo2019learning, kristiadi2020being}.
\end{remark}

% \begin{proof}[Proof Sketch]
% The result follows from the concavity of the log-softmax (Path A) and the variance decomposition of mixture distributions (Path B). In both cases, high epistemic uncertainty ($\sigma^2_{\text{VBLL}}$) propagates to the output, strictly increasing the entropy (or width) of the predictive distribution relative to the MAP estimate.
% \end{proof}

% For the formal proposition and its derivation, including the probit approximation bounds and variance inflation proof, see Appendix~\ref{app:calibration_proof}.

% \textbf{Asymptotic Consistency.} Our framework preserves the statistical guarantees of its non-parametric base estimators. As the dataset size $N \to \infty$, the KL-divergence penalty in the ELBO vanishes relative to the data likelihood. This causes the variational posterior $q(w)$ to contract to the optimal point estimate, guaranteeing that both branches asymptotically recover their respective ground-truth targets: QR-VBLL converges to the true conditional quantiles, CR-VBLL converges in probability to the true target density. We provide the formal proposition and detailed proofs for this unified asymptotic consistency in Appendix \ref{app:proof-consistency}. 

\textbf{Asymptotic Consistency}
Last but not least,
we show that our framework makes accurate distribution estimations as the sample size grows. We provide the detailed mathematical derivations and formal proofs in Appendix \ref{app:proof-consistency}.
\begin{proposition}[Asymptotic Consistency]
As the dataset size $N \to \infty$, 
% the KL-divergence penalty in the ELBO vanishes relative to the data likelihood, causing the variational posterior $q(w)$ to contract to the optimal point estimate. Consequently, our framework preserves the statistical guarantees of its non-parametric base estimators: 
QR-VBLL converges to the true conditional quantiles, and CR-VBLL converges in probability to the true target density.
\end{proposition}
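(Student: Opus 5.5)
The plan is to reduce both claims to two ingredients: posterior contraction of the variational last layer, followed by classical consistency of the underlying deterministic estimator. The comparison is made at the level of per-sample objectives.

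\textbf{Step 1: normalize the objective.} Divide the objective by $N$. For QR-VBLL we use $\mathcal{L}_{\text{QR}}/N$ from \eqref{eq:loss_qr}; for CR-VBLL we use $\mathcal{L}_{\text{CR}}/N$ from \eqref{eq:elbo_cr}. With $\beta$ fixed and a Gaussian prior, the penalty $\beta \KL(q(w)\|p(w))/N$ is $O(\log N / N)$ at covariances $S$ of order $1/N$. The normalized objective is therefore an empirical risk plus a vanishing regularizer.

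\textbf{Step 2: show the data term collapses to the deterministic loss.} For QR, use Theorem~\ref{thm:qr_elbo}. Since $\Phi(\bar\delta/s)\to \mathbf{1}[\bar\delta>0]$ and $s\,\mathcal{N}(\bar\delta/s;0,1)\le s/\sqrt{2\pi}$, the analytic expected log-likelihood equals
\[
\log\frac{\kappa(1-\kappa)}{\sigma}-\rho_\kappa(\bar\delta)/\sigma+O(s),
\]
where $\rho_\kappa$ is the pinball loss. For CR, Jensen's bound on the expected log-sum-exp differs from the cross-entropy at the mean logits by $O(s^2)$, using a second-order Taylor expansion with Hessian bounded by $1/2$. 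Next, take $S\to 0$ at the optimum. The data term strictly penalizes $s>0$ at rate $\Theta(N s)$ or $\Theta(N s^2)$, while the KL reward for larger $S$ grows only like $\log\det S$. Hence the optimal $S_N\to 0$, and uniformly over bounded features ($R<\infty$ under Assumption~\ref{asm:spectral_norm}) we get $s^2(x)\le \lambda_{\max}(S_N) R^2\to 0$. Consequently $\bar w_N$ is an approximate minimizer of the empirical pinball risk (respectively, the empirical cross-entropy), with an error term that vanishes uniformly.

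\textbf{Step 3: invoke classical consistency.} We assume the true conditional quantile $q^*_\kappa(x)$, respectively the true bin logits, is realizable as $w_*^\top\phi(x)$, together with identifiability and a compact parameter set. The uniform law of large numbers applied to the Lipschitz pinball loss then gives $\bar w_N\to w_*$ in probability, via the standard argmin (M-estimation) theorem; the pinball population risk is uniquely minimized at the conditional quantile. For CR, the population cross-entropy is minimized where the softmax equals the true bin probabilities $P(y\in B_k|x)$, and the same argument applies. Two facts then finish the proof. First, the probit-approximated predictive $\text{Softmax}(m/\sqrt{1+\pi s^2/8})$ converges because $s^2\to 0$. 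Second, the QR predictive density of Proposition~\ref{thm:qr_predictive} converges to the asymmetric Laplace density at $\bar\delta$. Together these yield convergence to the true quantiles and to the true (binned) density. Density convergence in the continuum sense additionally requires the bin width $\to 0$ slowly with $N$, a standard histogram-estimator argument.

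\textbf{Main obstacle.} The delicate point is the contraction in Step 2: we must show the optimal variational covariance actually shrinks rather than merely being allowed to. I would try to handle this by exhibiting the explicit first-order condition in $S$. For QR, the derivative of $-\frac{s}{\sigma}\mathcal{N}(\cdot)$ together with the $\Phi$ terms gives a positive marginal cost of order $\sum_i \phi_i\phi_i^\top/(\sigma s_i)$ near zero. Comparing this with the KL gradient $\tfrac12(S^{-1}-\Sigma_0^{-1})$ yields $S_N=O(1/N)$. Nondifferentiability of the pinball loss at $\bar\delta=0$ is handled by the smoothing that $s>0$ itself provides. The honest caveat is that everything rests on realizability and on treating the backbone $\phi$ as fixed, so these become explicit assumptions of the formal statement.
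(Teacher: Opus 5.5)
Your route differs from the paper's in where the work is done. The paper never derives contraction. For QR it posits a Bernstein--von Mises condition (Assumption~\ref{asm:bvm}) under which $q(w)$ concentrates at the pinball-risk minimizer with covariance $O(1/N)$. It then relies on universal approximation of $\phi$ and on the cited nonparametric consistency of pinball minimizers (Lemma~\ref{lemma:qr_base}). For CR it splits $\KL(p\|\hat p)$ into a binning bias $|p(y|x)-\pi_k^*(x)/\Delta|\le L\Delta$ and an estimation error attributed to cross-entropy M-estimation, with $K\to\infty$, $K/N\to 0$ and $D\ge K-1$.

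You instead extract contraction from the closed-form ELBO and then run a parametric argmin argument under realizability with a fixed backbone. This gives an actual proof, with rates ($S_N=O(1/N)$, $\beta\KL/N=O(\log N/N)$), of the step the paper only assumes. It also shows explicitly that the variational smoothing bias vanishes, so $\bar w_N$ is an asymptotically exact M-estimator. The paper's framing buys, in exchange, a largely asserted nonparametric, growing-capacity setting that reaches the continuum density. For CR your ``main obstacle'' is trivial. Under Assumption~\ref{asm:isotropic} the Jensen surrogate is exactly $-m_y+\log\sum_k e^{m_k}+s^2/2$. So $\bar W_N$ is a ridge-penalized multinomial-logistic MAP that does not depend on $S$, and stationarity in $S$ gives $S_N^{-1}=\Sigma_0^{-1}+(\beta K)^{-1}\sum_i\phi_i\phi_i^\top$ in closed form.

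Two points need repair. First, your QR marginal cost is off. Since $\partial_s\mathbb{E}[\rho_\kappa(\delta)]=\mathcal{N}(\bar\delta/s;0,1)$, the data-term gradient in $S$ is $\sum_i\mathcal{N}(\bar\delta_i/s_i;0,1)\,\phi_i\phi_i^\top/(2\sigma s_i)$. The Gaussian weight suppresses every sample with $|\bar\delta_i|\gg s_i$. For small $S$ the sum is therefore $\approx\frac{N}{2\sigma}\mathbb{E}[f_{y|x}(q^*_\kappa(x))\phi\phi^\top]$, with $f_{y|x}$ the conditional density of $y$: order $N$, not $N/s$. Likewise the penalty is generically $\Theta(Ns^2)$, not $\Theta(Ns)$; the latter occurs only when the quantile sits on an atom. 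Your conclusion $S_N=O(1/N)$ survives, but it needs positive conditional density at the quantile and a nondegenerate weighted Gram matrix. That is exactly the identifiability condition of your Step~3, so state it once and use it twice.

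Second, the last sentence of your Step~3 fails under your own hypotheses. With a fixed $D$-dimensional $\phi$, logits $W^\top\phi(x)$ cannot realize arbitrary bin probabilities once $K-1>D$ (the softmax bottleneck of Remark~\ref{rem:softmax_bottleneck}). Realizability and $K\to\infty$ are therefore incompatible, and the ``standard histogram argument'' has no estimator to act on. As written you prove consistency of the fixed-$K$ binned density only. The continuum claim needs capacity growing with $K$, plus a sieve-type balance of the $L\Delta$ bias against estimation error. Relatedly, CR logits are identified only up to adding a common $v^\top\phi(x)$ to every class. Apply the argmin theorem to the softmax outputs, or to $\bar W$ modulo that shift, not to $w$ itself.
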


\section{Experiments}\label{sec:experiment}

\begin{figure*}[t]
    \centering
    \begin{subfigure}{0.34\textwidth}
        \includegraphics[width=\linewidth]{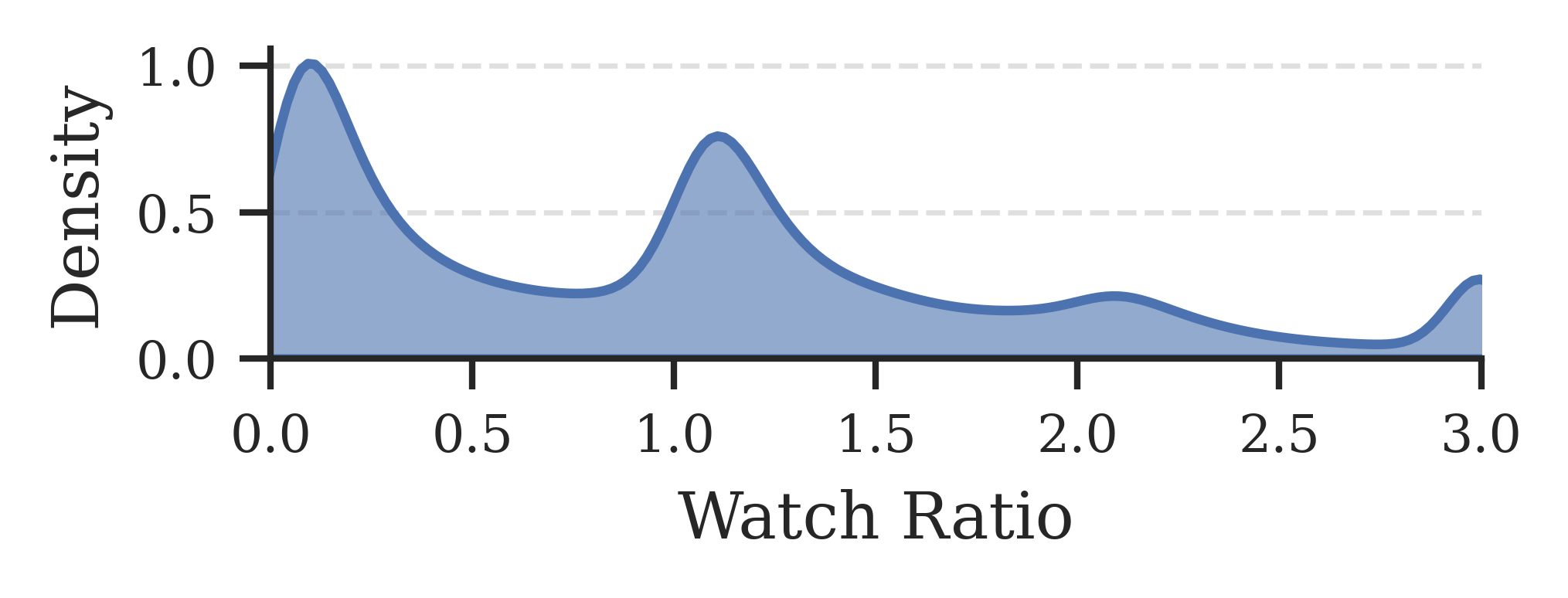}
        \caption{WeChat}
        \label{fig:wechat_dist}
    \end{subfigure}
    \hfill
    \begin{subfigure}{0.32\textwidth}
        \includegraphics[width=\linewidth]{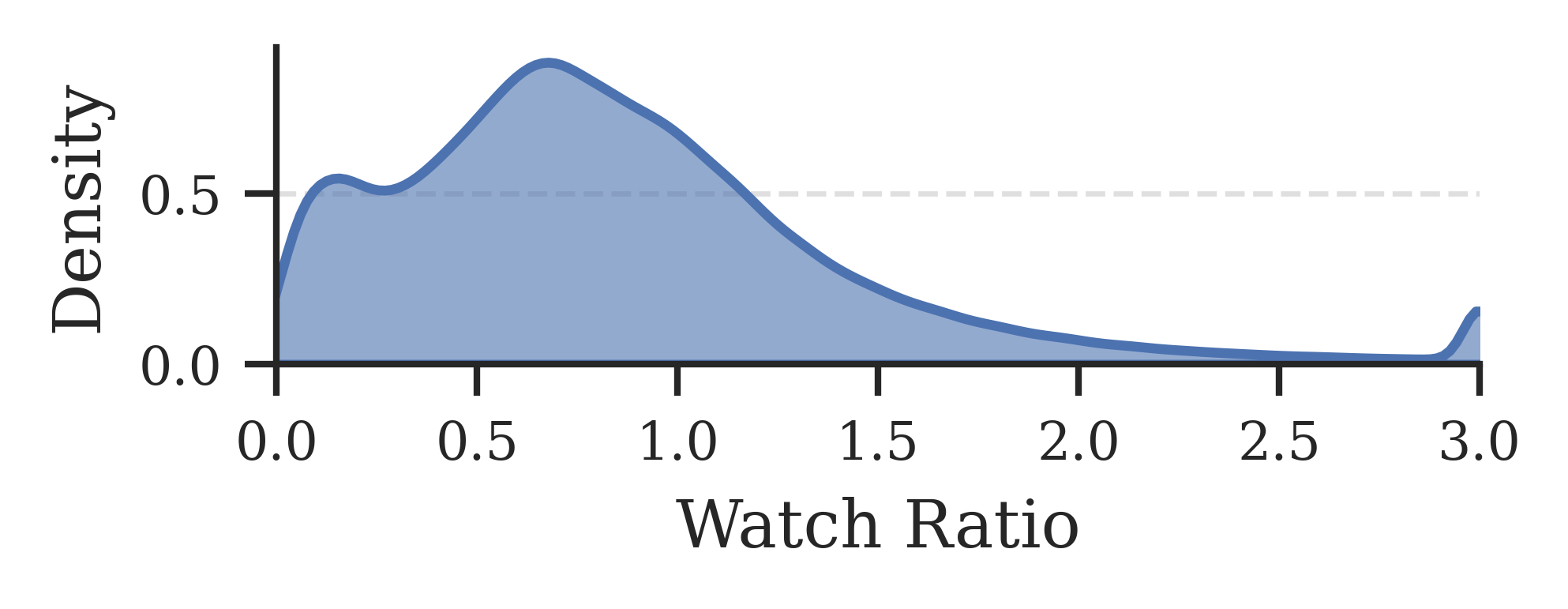}
        \caption{KuaiRec}
        \label{fig:kuairec_dist}
    \end{subfigure}
    \hfill
    \begin{subfigure}{0.32\textwidth}
        \includegraphics[width=\linewidth]{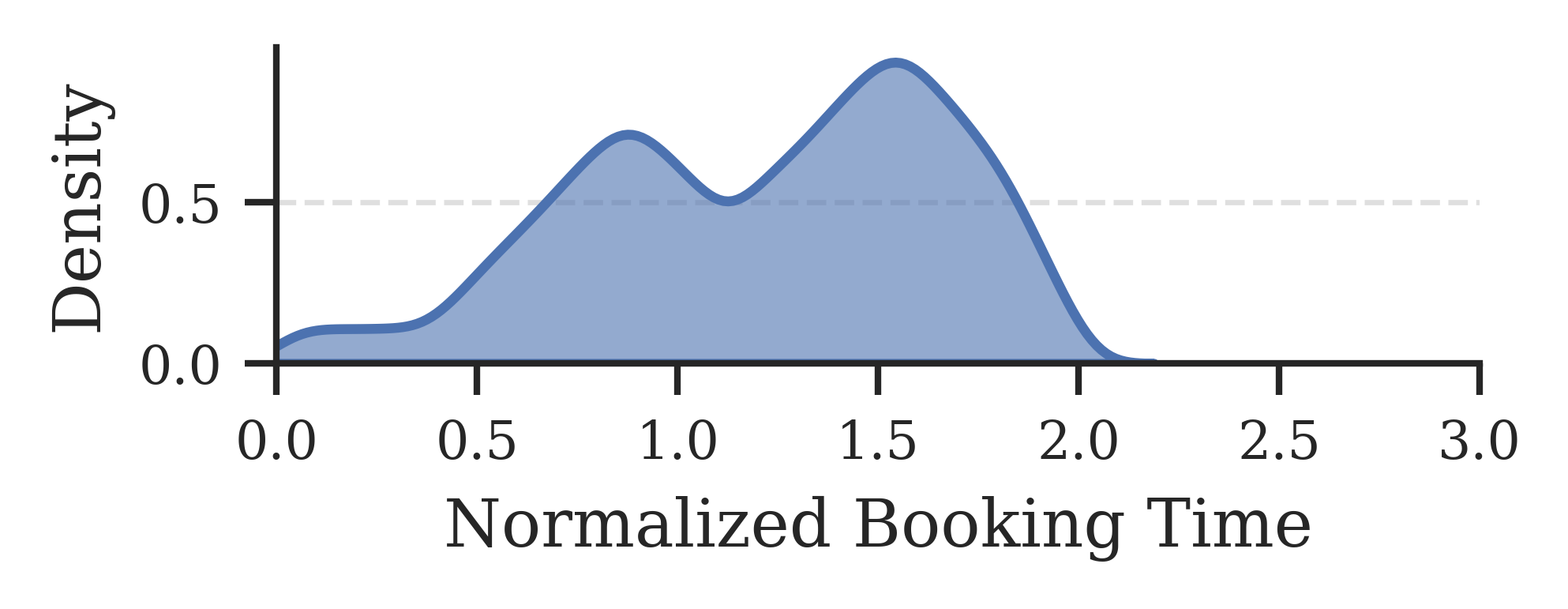}
        \caption{Uber }
        \label{fig:uber_dist}
    \end{subfigure}
    \caption{{Target distributions for Real-World Benchmarks.} The label density plots reveal the complex, multi-modal nature of real-world regression tasks. (a) {WeChat} exhibits sharp, disjoint modes typical of user engagement (skip vs. complete). (b) {KuaiRec} shows a heavy-tailed distribution with a spike at the max value. (c) {Uber} displays smooth bimodality representing different traffic conditions. 
    % Standard Gaussian regression fails to capture these topologies, often collapsing to the low-density mean (``Ghost Value'').
    }
    \label{fig:real_world_dist}
    \vspace{-0.3cm}
\end{figure*}

We evaluate our framework on three large-scale benchmarks: WeChat (Short-Form Video Recommendation), KuaiRec (Short-Form Video Recommendation), and Uber (Ride Sharing). As illustrated in Figure~\ref{fig:real_world_dist}, these datasets exhibit highly non-Gaussian, multi-modal target distributions where standard MSE-based approaches are prone to failure. We use {bold} font to indicate the best performance under each method category.

To ensure a rigorous and fair comparison on methods across disjoint (CR) and continuous (Gaussian/QR) paradigms, we standardize the evaluation metrics.
(1) Discretized NLL: We partition the target space into 10 uniform buckets. For continuous models (Gaussian, QR-VBLL), we integrate the predicted PDF (or difference in CDF) over each bucket to obtain probability masses. We then compute the NLL of these masses against the discretized ground truth. 
(2) Discretized ECE: Similarly, we compute ECE using these 10 fixed buckets to measure the alignment between the predicted probability mass and the empirical frequency of the target falling within each bucket.
(3) CRPS remains a continuous metric calculated on the full CDF.
(4) RMSE is computed based on the predicted conditional mean and the label.
Detailed definitions are provided in Appendix \ref{app:experiments}. 

Due to space constraints, simplified results for WeChat, KuaiRec and Uber datasets are presented in Table \ref{tab:all_datasets}, and standard deviations are included in the corresponding full tables, Table \ref{tab:wechat_full}, \ref{tab:kuairec_full}, and \ref{tab:uber_full} in Appendix \ref{app:experiments}.

\begin{table*}[htbp]
    \centering
    \caption{Performance comparison across WeChat, KuaiRec, and Uber datasets. We report the mean values for NLL, RMSE, CRPS, and ECE. Methods are categorized into (1) single-pass baselines (2) single-pass CR (3) single-pass QR and (4) ensemble. \textbf{Bold} (based on more precise metrics in Table \ref{tab:wechat_full}, \ref{tab:kuairec_full}, and \ref{tab:uber_full}) indicates the best performance among each method category. The \textit{Ensemble Baselines} (bottom section) require $5\times$ computational cost; notably, our single-pass solutions outperform existing methods with similar computation cost and even match or exceed the expensive baselines.}
    \label{tab:all_datasets}
    \resizebox{\textwidth}{!}{%
    \begin{tabular}{lcccc|cccc|cccc}
        \toprule
        & \multicolumn{4}{c}{\textbf{WeChat (Recommendation)}} & \multicolumn{4}{c}{\textbf{KuaiRec (Recommendation)}} & \multicolumn{4}{c}{\textbf{Uber (Transportation)}} \\
        \cmidrule(lr){2-5} \cmidrule(lr){6-9} \cmidrule(lr){10-13}
        \textbf{Method} & \textbf{NLL} & \textbf{RMSE} & \textbf{CRPS} & \textbf{ECE} & \textbf{NLL} & \textbf{RMSE} & \textbf{CRPS} & \textbf{ECE} & \textbf{NLL} & \textbf{RMSE} & \textbf{CRPS} & \textbf{ECE} \\
        \midrule
        % \multicolumn{13}{l}{\textit{Parametric Regression}} \\
        Gaussian & 2.1657 & 1.0466 & 0.4380 & 0.0264 & 1.5070 & 0.4457 & \textbf{0.2093} & 0.0291 & 2.6766 & 0.4686 & 0.3039 & 0.0848 \\
        SNGP & 2.2730 & \textbf{0.7740} & 0.4439 & 0.0302 & 1.7470 & \textbf{0.4440} & 0.2394 & 0.0497 & \textbf{1.8745} & \textbf{0.4516} & \textbf{0.2634} & \textbf{0.0341} \\
        VBLL & 2.1708 & 0.8327 & 0.4315 & 0.0251 & 1.5184 & 0.4484 & 0.2117 & 0.0327 & 3.3107 & 0.4533 & 0.3061 & 0.1043 \\
        DER & 2.1443 & 0.7811 & \textbf{0.4226} & 0.0263 & 1.4163 & 0.4580 & 0.2131 & 0.0231 & 2.8354 & 0.4541 & 0.3299 & 0.1196 \\
        MDN & \textbf{1.9515} & 1.2464 & 0.4418 & \textbf{0.0070} & \textbf{1.2943} & 0.4495 & 0.2117 & \textbf{0.0136} & 4.0078 & 0.4606 & 0.3264 & 0.1181 \\
        
        \midrule
        % \multicolumn{13}{l}{\textit{Quantile Regression (QR)}} \\
        Vanilla QR & 2.0317 & 0.7629 & 0.4156 & 0.0066 & 1.3938 & 0.4438 & 0.1968 & 0.0136 & 2.3728 & 0.4578 & 0.2686 & 0.0370 \\
        CQR & 2.0446 & 0.7723 & 0.4226 & 0.0065 & 1.4018 & 0.4447 & 0.1976 & 0.0148 & 1.9435 & 0.4590 & 0.2666 & 0.0379 \\
        QR-MCD & 2.0297 & 0.7646 & 0.4172 & \textbf{0.0062} & 1.4049 & 0.4439 & 0.1978 & 0.0145 & 2.1179 & 0.4606 & 0.2686 & 0.0357 \\
        \textbf{QR-VBLL} & \textbf{1.9616} & \textbf{0.7614} & \textbf{0.4134} & 0.0067 & \textbf{1.3716} & \textbf{0.4433} & \textbf{0.1964} & \textbf{0.0122} & \textbf{1.7830} & \textbf{0.4495} & \textbf{0.2560} & \textbf{0.0080} \\
        
        \midrule
        % \multicolumn{13}{l}{\textit{Classification Restoration (CR)}} \\
        Vanilla CR & 1.9199 & 0.7664 & \textbf{0.4063} & 0.0126 & 1.3978 & 0.4550 & 0.1991 & 0.0237 & 2.5328 & 0.4550 & 0.2911 & 0.0804 \\
        CR-SNGP & 1.9377 & 0.7756 & 0.4151 & 0.0133 & {1.3874} & 0.4584 & \textbf{0.1980} & {0.0237} & 1.7892 & 0.4512 & \textbf{0.2559} & 0.0087 \\
        CR-MCD & 1.9245 & 0.7687 & 0.4072 & \textbf{0.0119} & 1.3937 & 0.4551 & 0.1991 & 0.0237 & 2.2687 & 0.4584 & 0.2882 & 0.0719 \\
        \textbf{CR-VBLL} & \textbf{1.9189} & \textbf{0.7654} & 0.4068 & \textbf{0.0119} & \textbf{1.3867} & \textbf{0.4549} & 0.1989 & \textbf{0.0235} & \textbf{1.7839} & \textbf{0.4505} & \textbf{0.2559} & \textbf{0.0072} \\
        
        \midrule
        % \midrule
        % \multicolumn{13}{l}{\textit{Ensemble Baselines (Multi-Pass, 5x Cost)}} \\
        Ensemble & 2.1502 & 0.8035 & 0.4315 & 0.0300 & 1.4980 & \textbf{0.4405} & 0.2085 & 0.0285 & 2.7145 & 0.4668 & 0.3070 & 0.0903 \\
        QR-Ensemble & 1.9973 & \textbf{0.7506} & {0.4076} & 0.0055 & \textbf{1.3858} & 0.4433 & \textbf{0.1958} & \textbf{0.0136} & \textbf{2.1280} & 0.4578 & \textbf{0.2686} & \textbf{0.0393} \\
        CR-Ensemble & \textbf{1.8981} & 0.7547 & \textbf{0.3982} & 0.0137 & 1.3880 & 0.4540 & 0.1980 & 0.0237 & 2.3305 & \textbf{0.4506} & 0.2693 & 0.0506 \\
        \bottomrule
    \end{tabular}
    }
\end{table*}

The WeChat dataset (Figure \ref{fig:wechat_dist}) features a highly disjoint target distribution (0.0 vs 1.0 watch ratios) with sharp density peaks. CR-VBLL achieves``Mode Precision'' with the best NLL among single-pass methods, effectively matching or beating the computationally expensive methods except for CR-Ensemble. QR-VBLL excelled at global fidelity with the best CRPS  in single-pass methods.

% KuaiRec: High-Fidelity at Low Cost
The KuaiRec dataset (Figure \ref{fig:kuairec_dist}) exhibits a more smooth conditional label distribution with a heavy-tail. While the MDN achieves the lowest NLL among single-pass methods, this comes at a cost to global fidelity, yielding a CRPS ($0.2117$) more than that of QR-VBLL ($0.1964$). Meanwhile, QR-VBLL achieves the best CRPS and ECE among single-pass methods, also with more consistent performance than MDN on the other datasets, and even outperforms the computationally expensive QR-Ensemble on NLL and ECE.

% Uber: Robustness via Global Fidelity
The Uber dataset (Figure \ref{fig:uber_dist}) contains sparse data with a bimodal but continuous target distribution. Again QR-VBLL is giving a compelling performance against single-pass and ensemble baselines. Moreover, both CR-VBLL and QR-VBLL are having a much larger advantage margin from the ``safety net'' effect (like Figure \ref{fig:two_axes_contribution} B vs C) on sparse data.

\subsection{Uncertainty-Aware Active Learning}

\emph{Validating the epistemic signal.} We verify that the predicted epistemic variance correctly reflects true model ignorance. As illustrated in Figure~\ref{fig:safety_net}, the average epistemic uncertainty on the WeChat dataset is highest for cold-start users with minimal interaction history ($N < 20$) and decays monotonically as their history increases. This confirms that our architecture successfully identifies ``known unknowns'' in sparse data regimes, providing a reliable, inflated variance to prevent overconfident predictions for under-represented users.

\begin{figure}[t]
    \centering
    \begin{minipage}[t]{0.56\textwidth}
        \centering
        \includegraphics[width=\textwidth]{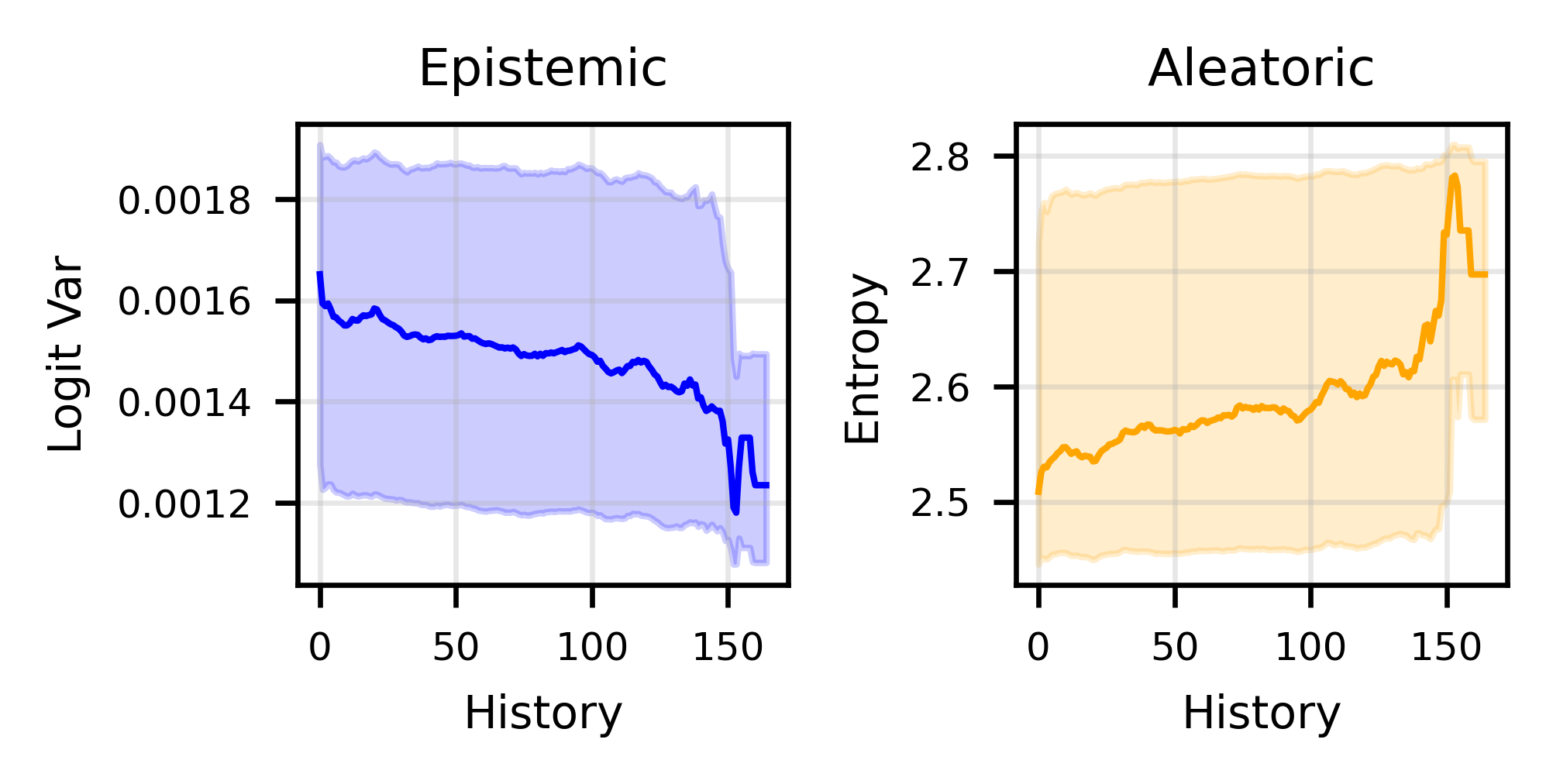}
        \caption{{Visualizing UQ.} Epistemic uncertainty is high for cold-start users and decays as the model observes more data.}
        \label{fig:safety_net}
    \end{minipage}
    \hfill
    \begin{minipage}[t]{0.41\textwidth}
        \centering
        \includegraphics[width=0.925\textwidth]{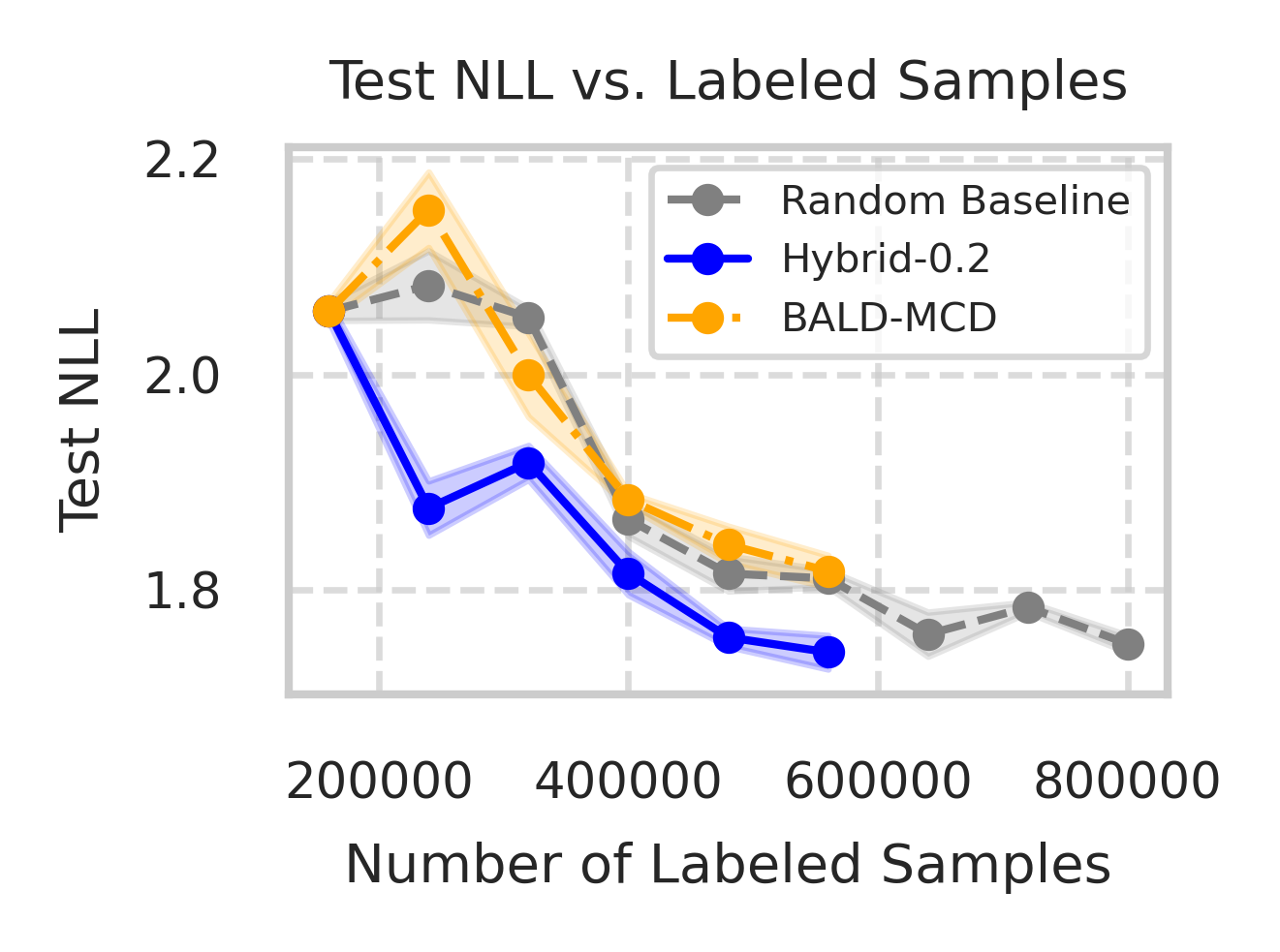}
        \caption{The test NLL curves for active learning methods on KuaiRec, demonstrating data efficiency improvement.}
        \label{fig:active_learning_curve}
    \end{minipage}
    \vspace{-0.3cm}
\end{figure}
% \begin{figure}[h]
% \centering\includegraphics[width=0.85\columnwidth]{uncertainty_trend.png}
%     \caption{\textbf{Visualizing Epistemic Uncertainty.} Epistemic uncertainty is high for cold-start users and decays as the model observes more data. This confirms that QR-VBLL correctly inflates variance for sparse inputs, providing a robust uncertainty estimate.}
%     \label{fig:safety_net}
% \end{figure}

\emph{Operationalizing uncertainty for data acquisition.} Having established that this epistemic signal is faithful, we next demonstrate how it can be leveraged to address a critical bottleneck in industrial applications: the prohibitive cost of acquiring labeled data. Because our framework reliably isolates epistemic ignorance from inherent aleatoric noise, it enables highly efficient active learning.

\emph{Experimental setup and baselines.} 
% We evaluate the data efficiency of our framework across all three distribution topologies (WeChat, KuaiRec, and Uber) by simulating a cold-start active learning loop. We initialize the model with $20\%$ of the training data and iteratively acquire samples. To do this, we propose a Hybrid acquisition strategy $\mathcal{A}(x)$ that leverages our analytic variance decomposition (Proposition \ref{prop:decomposition}) to balance epistemic exploration with aleatoric noise characterization:
We evaluate the data efficiency of our framework across all three distribution topologies (WeChat, KuaiRec, and Uber) by simulating a cold-start active learning loop. We initialize the model with $20\%$ of the training data and iteratively acquire samples. Standard uncertainty sampling relies on the total predictive variance (implicitly setting $\gamma=1$), which frequently wastes budget by getting trapped in irreducible data noise. To successfully bypass these extreme high-noise traps while preserving calibration, we propose a Hybrid acquisition strategy $\mathcal{A}(x)$. Leveraging our analytic variance decomposition (Proposition \ref{prop:decomposition}), we explicitly decouple and downweight the aleatoric component:
\begin{equation} \label{eq:acquisition_function}
    \mathcal{A}(x) = \text{Var}_q(\mathbb{E}[y|x, w]) + \gamma \cdot \mathbb{E}_q[\text{Var}(y|x, w)]
\end{equation}
Setting a discounted $\gamma \in (0, 1)$ (e.g., $\gamma = 0.1$ for WeChat and $\gamma = 0.2$ for KuaiRec and Uber; ablation in Appendix \ref{app:ablation}) ensures the model prioritizes exploring sparse regions. Crucially, retaining this downweighted aleatoric signal allows the model to deliberately sample just enough noisy areas to correctly calibrate the variance head, acting as a distributional regularizer without being overwhelmed by pure noise.
% Setting $\gamma = 0.1$ for WeChat and $\gamma = 0.2$ for KuaiRec and Uber (ablation study in Appendix \ref{app:ablation}),
% \fTBD{Add reference for this formulation i.e. including aleatoric uncertainty for active learning} 
% this strategy ensures the model explores sparse regions while simultaneously sampling high-noise areas to correctly calibrate the variance head, acting as a distributional regularizer. 
We compare our $\mathcal{O}(1)$ Hybrid strategy against four baselines: (1) Random sampling over the unlabeled pool; (2) Bayesian Active Learning by Disagreement via Vanilla QR MCD (BALD-MCD) ($\mathcal{O}(S)$), which measures Monte Carlo Dropout disagreement via the variance of predictive means over $S=25$ stochastic passes; (3) BALD of QR-VBLL (BALD-VBLL) ($\mathcal{O}(1)$), the analytic analogue of BALD for our setting, defined as the ratio of epistemic to total variance $\text{Var}_q(\mathbb{E}[y|x,w]) / \text{Var}_q(y|x)$ computed in closed form; and (4) Pure Epistemic ($\mathcal{O}(1)$) sampling based purely on the weight uncertainty variance ($\gamma=0$). Critically, all non-random strategies share the exact same model and posterior, ensuring an unconfounded comparison.

\begin{table}[ht]
\caption{Active Learning Performance across datasets. We report both NLL (mode precision) and CRPS (global fidelity) to ensure improvements are genuine. We transpose the evaluation to highlight method performance across topologies: WeChat (disjoint), KuaiRec (heavy-tailed), and Uber (bimodal). We also include the WeChat Cold-Start subset to demonstrate uncertainty acquisition value in sparse regimes. The full active learning metrics with standard deviations are covered in Appendix \ref{app:experiments}}
\label{tab:active_learning_comprehensive}
\begin{center}
\resizebox{1\linewidth}{!}{%
\small
\setlength{\tabcolsep}{4pt}
\begin{tabular}{l cc cc cc cc}
\toprule
\multirow{2}{*}{\textbf{Method}} & \multicolumn{2}{c}{\textbf{WeChat (Full)}} & \multicolumn{2}{c}{\textbf{WeChat (Cold)}} & \multicolumn{2}{c}{\textbf{KuaiRec}} & \multicolumn{2}{c}{\textbf{Uber}} \\
\cmidrule(lr){2-3} \cmidrule(lr){4-5} \cmidrule(lr){6-7} \cmidrule(lr){8-9}
& NLL & CRPS & NLL & CRPS & NLL & CRPS & NLL & CRPS \\
\midrule
\textbf{Hybrid (Ours)} & \textbf{2.0206} & \textbf{0.4434} & \textbf{2.0157} & \textbf{0.4602} & \textbf{1.7423} & \textbf{0.2085} & \textbf{1.7840} & \textbf{0.2562} \\
{BALD-MCD} ($\mathcal{O}(S)$) & 2.0258 & \textbf{0.4434} & 2.0195 & 0.4612 & 1.8175 & 0.2100 & 1.7851 & \textbf{0.2562} \\
{BALD-VBLL} ($\mathcal{O}(1)$) & 2.0277 & 0.4454 & 2.0162 & 0.4620 & 2.0450 & 0.2144 & 1.7853 & 0.2564 \\
{Pure Epistemic} & 2.0283 & 0.4452 & 2.0167 & 0.4618 & 1.7662 & 0.2100 & 1.7848 & 0.2568 \\
{Random (43\% more training \jwnew{samples})} & 2.0261 & 0.4438 & 2.0176 & \textbf{0.4602} & 1.7502 & \textbf{0.2085} & 1.7919 & \textbf{0.2562} \\
\bottomrule
\end{tabular}%
}
\end{center}
\vspace{-0.15in}
\end{table}

\emph{Results and insights.} As shown in Table~\ref{tab:active_learning_comprehensive}, the results support three major conclusions regarding our uncertainty decomposition: (1) \emph{Efficiency vs. quality:} Our Hybrid strategy matches or outperforms BALD-MCD across all three datasets despite requiring no additional forward passes. The $\mathcal{O}(1)$ analytic decomposition provides an acquisition signal of equal or superior quality to the computationally expensive $\mathcal{O}(S)$ Monte Carlo approximation. 
% (2) \emph{The danger of pure epistemic sampling in multi-modal domains:} There is a consistent performance gap between our Hybrid approach and both BALD-VBLL and Pure Epistemic sampling. This confirms that explicitly incorporating the aleatoric term into the acquisition function provides a critical regularizing signal. Pure epistemic sampling frequently over-indexes on inherently noisy regions of the target topology, whereas our Hybrid approach successfully avoids these regions to better calibrate the variance head. 
(2) \emph{The necessity of controlled aleatoric sampling:} There is a consistent performance gap between our Hybrid approach and both BALD-VBLL and Pure Epistemic sampling ($\gamma=0$). This confirms that explicitly incorporating a downweighted aleatoric term provides a critical regularizing signal. Pure epistemic sampling strictly avoids inherent data noise, which inadvertently starves the variance head of the samples needed to correctly calibrate the aleatoric uncertainty. Conversely, standard total-variance sampling (e.g., $\gamma=1$) would over-index on and be trapped by high-noise regions. By using a discounted $\gamma$, our Hybrid approach successfully avoids the trap of pure noise exploration, while simultaneously ensuring the variance head is properly calibrated.
(3) \emph{Superior data efficiency:} Our framework achieves better NLL and CRPS using $70\%$ of all the labeled samples on WeChat, Uber, and KuaiRec, the Hybrid approach yielded the best performance, suggesting our analytic decomposition is particularly valuable on complex, heavy-tailed distributions where aleatoric and epistemic signals are notoriously difficult to disentangle.

\section{Conclusion}
\label{sec:conclusion}

We introduced the first modular, distribution-agnostic framework to simultaneously achieve (1) accurate multimodal distribution estimation and (2) efficient uncertainty quantification. Our framework branches into Quantile Regression (QR) for continuous shapes and Classification Restoration (CR) for disjoint modes. Critically, we present sampling-free objectives for training both classes of models, including novel results for analytical variational QR. By pairing a distance-preserving backbone with a Variational Bayesian Last Layer (VBLL), our architecture injects calibrated epistemic uncertainty into predictions requiring only a single forward pass ($\mathcal{O}(1)$). We provide both theoretical and numerical evidence to show our framework provides a robust and efficient solution for trustworthy multi-modal regression.

\textbf{Limitations \& Future Directions.} Our approach currently relies on dataset-specific heuristics for routing between QR/CR paths; developing a fully adaptive, data-driven routing module is a key future direction. Furthermore, establishing tight finite-sample error bounds for the complete VBLL framework, and expanding to multivariate targets, remain important open challenges. Additionally, we currently use a relatively naive method for reconstructing predictive PDF/CDFs with QR, ensembling over the variational posterior. It may be possible and of interest to develop more efficient methods of reconstruction.

% \clearpage
\bibliography{references}

@article{li2010bayesian,
  title={Bayesian regularized quantile regression},
  author={Li, Qing and Lin, Nan and Xi, Ruibin},
  journal={Bayesian Analysis},
  year={2010}
}

@article{tsionas2003bayesian,
  title={Bayesian quantile inference},
  author={Tsionas, Efthymios G},
  journal={Journal of statistical computation and simulation},
  _volume={73},
  _number={9},
  _pages={659--674},
  year={2003},
  _publisher={Taylor \& Francis}
}

@book{koenker2005quantile,
  title={Quantile regression},
  author={Koenker, Roger},
  volume={38},
  year={2005},
  publisher={Cambridge university press}
}

@article{yang2016posterior,
  title={Posterior inference in Bayesian quantile regression with asymmetric Laplace likelihood},
  author={Yang, Yunwen and Wang, Huixia Judy and He, Xuming},
  journal={International Statistical Review},
  _volume={84},
  _number={3},
  _pages={327--344},
  year={2016},
  _publisher={Wiley Online Library}
}

@article{kozumi2011gibbs,
  title={Gibbs sampling methods for Bayesian quantile regression},
  author={Kozumi, Hideo and Kobayashi, Genya},
  journal={Journal of statistical computation and simulation},
  _volume={81},
  _number={11},
  _pages={1565--1578},
  year={2011},
  _publisher={Taylor \& Francis}
}

@article{koenker1978regression,
  title={Regression quantiles},
  author={Koenker, Roger and Bassett Jr, Gilbert},
  journal={Econometrica},
  _pages={33--50},
  year={1978},
  _publisher={JSTOR}
}

@inproceedings{abeywardana2015variational,
  title={Variational inference for nonparametric Bayesian quantile regression},
  author={Abeywardana, Sachinthaka and Ramos, Fabio},
  booktitle={Proceedings of the Twenty-Ninth AAAI Conference on Artificial Intelligence},
  year={2015}
}

@article{yu2001bayesian,
  title={Bayesian quantile regression},
  author={Yu, Keming and Moyeed, Rana A},
  journal={Statistics \& Probability Letters},
  _volume={54},
  _number={4},
  _pages={437--447},
  year={2001},
  _publisher={Elsevier}
}

@inproceedings{zhao2025egmn,
  title={Multi-Granularity Distribution Modeling for Video Watch Time Prediction via Exponential-Gaussian Mixture Network},
  author={Zhao, Xu and Ma, RuiBo and Chen, Jiaqi and Zhao, Weiqi and Yang, Ping and Hu, Yao},
  booktitle={Proceedings of the Nineteenth ACM Conference on Recommender Systems (RecSys '25)},
  pages={1--10},
  year={2025},
  organization={ACM}
}

@inproceedings{sun2024cread,
  title={{CREAD}: A Classification-Restoration Framework with Error Adaptive Discretization for Watch Time Prediction},
  author={Sun, Jie and Ding, Zhaoying and Wang, Ben and others},
  booktitle={Proceedings of the AAAI Conference on Artificial Intelligence (AAAI)},
  volume={38},
  pages={9002--9010},
  year={2024}
}

@inproceedings{bishop1994density,
  title={Mixture density networks},
  author={Bishop, Christopher M},
  year={1994},
  booktitle={Technical Report NCRG/4288, Aston University}
}

@inproceedings{kendall2017uncertainty,
  title={What uncertainties do we need in {Bayesian} deep learning for computer vision?},
  author={Kendall, Alex and Gal, Yarin},
  booktitle={Advances in Neural Information Processing Systems (NeurIPS)},
  volume={30},
  year={2017}
}

@inproceedings{lakshminarayanan2017simple,
  title={Simple and scalable predictive uncertainty estimation using deep ensembles},
  author={Lakshminarayanan, Balaji and Pritzel, Alexander and Blundell, Charles},
  booktitle={Advances in Neural Information Processing Systems (NeurIPS)},
  volume={30},
  year={2017}
}

@inproceedings{amini2020deep,
  title={Deep evidential regression},
  author={Amini, Alexander and Schwarting, Wilko and Rosman, Guy and Karaman, Sertac and Rus, Daniela},
  booktitle={Advances in Neural Information Processing Systems (NeurIPS)},
  volume={33},
  pages={14927--14937},
  year={2020}
}

@inproceedings{liu2020simple,
  title={Simple and principled uncertainty estimation with deterministic deep learning via distance awareness},
  author={Liu, Jeremiah and Lin, Zi and Padhy, Shreyas and Tran, Dustin and Bedrax-Weiss, Tania and Lakshminarayanan, Balaji},
  booktitle={Advances in Neural Information Processing Systems (NeurIPS)},
  volume={33},
  pages={7498--7512},
  year={2020}
}

@inproceedings{romano2019conformalized,
  title={Conformalized quantile regression},
  author={Romano, Yaniv and Patterson, Evan and Candes, Emmanuel},
  booktitle={Advances in Neural Information Processing Systems (NeurIPS)},
  volume={32},
  year={2019}
}

@inproceedings{harrison2024variational,
  title={Variational {Bayesian} Last Layers},
  author={Harrison, James and Willes, John and Snoek, Jasper},
  booktitle={International Conference on Learning Representations (ICLR)},
  year={2024}
}

@article{scott1979optimal,
  title={On optimal and data-based histograms},
  author={Scott, David W.},
  journal={Biometrika},
  volume={66},
  number={3},
  pages={605--610},
  year={1979},
  publisher={Oxford University Press},
  doi={10.1093/biomet/66.3.605}
}

@article{gal2016dropout,
  title={Dropout as a bayesian approximation: Representing model uncertainty in deep learning},
  author={Gal, Yarin and Ghahramani, Zoubin},
  journal={International Conference on Machine Learning (ICML)},
  pages={1050--1059},
  year={2016},
  organization={PMLR}
}

@article{bishop2006pattern,
  title={Pattern recognition and machine learning},
  author={Bishop, Christopher M},
  journal={Springer},
  year={2006}
}

@article{gneiting2007strictly,
  title={Strictly proper scoring rules, prediction, and estimation},
  author={Gneiting, Tilmann and Raftery, Adrian E},
  journal={Journal of the American Statistical Association},
  volume={102},
  number={477},
  pages={359--378},
  year={2007},
  publisher={Taylor \& Francis}
}

@article{takeuchi2006nonparametric,
  title={Nonparametric quantile estimation},
  author={Takeuchi, Ichiro and Le, Quoc V and Sears, Timothy D and Smola, Alexander J},
  journal={Journal of Machine Learning Research},
  volume={7},
  pages={1231--1264},
  year={2006}
}

@article{cencov1962estimation,
  title={Estimation of an unknown distribution density from observations},
  author={{\v{C}}encov, N. N.},
  journal={Soviet Mathematics},
  volume={3},
  pages={1559--1562},
  year={1962}
}

@article{steinwart2011estimating,
  title={Estimating conditional quantiles with the help of the pinball loss},
  author={Steinwart, Ingo and Christmann, Andreas},
  journal={Bernoulli},
  volume={17},
  number={1},
  pages={211--225},
  year={2011},
  publisher={Bernoulli Society for Mathematical Statistics and Probability},
  doi={10.3150/10-BEJ267}
}

@inproceedings{seo2019learning,
  title={Learning for Single-Shot Confidence Calibration in Deep Neural Networks through Stochastic Inferences},
  author={Seo, Seonguk and Seo, Paul Hongsuck and Han, Bohyung},
  booktitle={Proceedings of the IEEE/CVF Conference on Computer Vision and Pattern Recognition (CVPR)},
  pages={9030--9038},
  year={2019}
}

@inproceedings{kristiadi2020being,
  title={Being {B}ayesian, Even Just a Bit, Fixes Overconfidence in {ReLU} Networks},
  author={Kristiadi, Agustinus and Hein, Matthias and Hennig, Philipp},
  booktitle={Proceedings of the 37th International Conference on Machine Learning (ICML)},
  pages={5436--5446},
  year={2020},
  publisher={PMLR},
  volume={119}
}

@inproceedings{yang2017breaking,
  title={Breaking the Softmax Bottleneck: A High-Rank {RNN} Language Model},
  author={Yang, Zhilin and Dai, Zihang and Salakhutdinov, Ruslan and Cohen, William W.},
  booktitle={International Conference on Learning Representations},
  year={2018},
  url={https://openreview.net/forum?id=HkwZSG-CZ}
}

@book{van2000asymptotic,
  title={Asymptotic statistics},
  author={Van der Vaart, Aad W},
  volume={3},
  year={2000},
  publisher={Cambridge university press}
}

@article{houlsby2011bayesian,
  title={Bayesian active learning for classification and preference learning},
  author={Houlsby, Neil and Husz{\'a}r, Ferenc and Ghahramani, Zoubin and Lengyel, M{\'a}t{\'e}},
  journal={arXiv preprint arXiv:1112.5745},
  year={2011}
}

@inproceedings{gal2017deep,
  title={Deep bayesian active learning with image data},
  author={Gal, Yarin and Islam, Riashat and Ghahramani, Zoubin},
  booktitle={Proceedings of the 34th International Conference on Machine Learning (ICML)},
  pages={1183--1192},
  year={2017},
  organization={PMLR}
}

@inproceedings{kirsch2019batchbald,
  title={BatchBALD: Efficient and diverse batch active learning for deep neural networks},
  author={Kirsch, Andreas and Van Amersfoort, Joost and Gal, Yarin},
  booktitle={Advances in Neural Information Processing Systems (NeurIPS)},
  volume={32},
  year={2019}
}

@inproceedings{yoo2019learning,
  title={Learning loss for active learning},
  author={Yoo, Donggeun and Kweon, In So},
  booktitle={Proceedings of the IEEE/CVF Conference on Computer Vision and Pattern Recognition (CVPR)},
  pages={93--102},
  year={2019}
}

@inproceedings{sener2018active,
  title={Active Learning for Convolutional Neural Networks: A Core-Set Approach},
  author={Sener, Ozan and Savarese, Silvio},
  booktitle={International Conference on Learning Representations (ICLR)},
  year={2018}
}

% --- APPENDIX SETUP ---
\clearpage
\appendix
\onecolumn % Optional: Switch to one column for easier reading of math proofs
\section{Nomenclature and Notation}
\label{app:notation}
\begin{table}[h]
\centering
\small
\caption{Summary of mathematical notation.}
\label{tab:notation}
\begin{tabular}{@{} p{0.15\columnwidth} p{0.8\columnwidth} @{}}
\toprule
\textbf{Symbol} & \textbf{Description} \\ \midrule
\multicolumn{2}{l}{\textit{Data and Backbone}} \\
$\mathcal{D}$ & Dataset of $N$ i.i.d. observations, $\mathcal{D}=\{(x_i, y_i)\}_{i=1}^N$ \\
$\mathcal{X}$ & Collection of inputs of $\mathcal{D}$, $\mathcal{X}=\{x_i\}_{i=1}^N$ \\
$\mathcal{Y}$ & Collection of labels of $\mathcal{D}$, $\mathcal{Y}=\{y_i\}_{i=1}^N$ \\
$x \in \mathbb{R}^D$ & Input feature vector, $D$ is the feature dimension \\
$y \in \mathbb{R}$ & Continuous target variable \\
$\phi(x; \psi)$ & Deterministic neural backbone output (features) \\
$\psi$ & Deterministic parameters (weights) of the backbone network \\ % NEW ROW ADDED
$w$ & Stochastic weights of the final linear layer \\
$\text{vec}(w)$ & Flattened 1-D vector of $w$ \\
\midrule
\multicolumn{2}{l}{\textit{Variational Inference}} \\
$p(w)$ & Prior distribution over weights (typically $\mathcal{N}(0, I)$) \\
$q(w)$ & Variational approximate posterior distribution \\
$\overline{w}, S$ & Variational parameters (mean vector and covariance) \\
$z$ & Pre-activation logits, $z = w^\top \phi(x)$ \\
$m$ & Mean of logits under $q(w)$, $m = \overline{w}^\top \phi(x)$ \\
$s^2$ & Variance of logits (Epistemic), $s^2 = \phi(x)^\top S \phi(x)$ \\ % FIXED: Changed sigma^2 to s^2
\midrule
\multicolumn{2}{l}{\textit{Path A: Quantile (QR)}} \\
$\kappa$ & Quantile level, $\kappa \in (0, 1)$ \\
$\rho_\kappa(\cdot)$ & Pinball loss function \\
$\sigma$ & Scale parameter of the Asymmetric Laplace (set to 1) \\ % FIXED: Changed alpha to sigma
$\hat{q}_\kappa(x)$ & Predicted conditional quantile at level $\kappa$ \\
${q}_\kappa(x)$ & True conditional quantile at level $\kappa$ \\
$\hat{F}(y|x)$ & Reconstructed Cumulative Distribution Function (CDF) \\
$\mathcal{L}_{\text{QR}}$ & Expected Pinball Loss objective (deterministic ELBO) \\
\midrule
\multicolumn{2}{l}{\textit{Path B: Classification Restoration (CR)}} \\
$K$ & Number of discretization bins \\
$\mathcal{B}$ & Set of fixed bin centers, $\mathcal{B} = \{b_1, \dots, b_K\}$ \\
$B_k$ & The $k$-th bin \\
$\Delta$ & Bin width ($(y_{\max} - y_{\min}) / K$) \\
$\mathbf{p}(x)$ & Predicted probability vector over bins (Softmax output) \\
$\mathcal{L}_{\text{CR}}$ & Expected Negative Log-Likelihood objective for CR \\
\midrule
\multicolumn{2}{l}{\textit{Theoretical Analysis}} \\
$\delta_{\text{ood}}$ & Distance of OOD input from training manifold ($\min_{x \in \mathcal{D}_{train}} \|x' - x\|_2$) \\ % REC: Renamed to avoid clash
$R$ & Radius of the training feature manifold ($\max \|\phi(x)\|_2$) \\
$\lambda_{\min}(S)$ & Minimum eigenvalue of the variational covariance matrix \\
$L$ & Lipschitz constant of the true density \\
$L_f$ & Lipschitz constant of the backbone $f$ \\
$c_1, c_2$ & Constants for lower and upper bounds for Bi-Lipschitz condition (Assumption \ref{asm:spectral_norm})\\
\bottomrule
\end{tabular}
\end{table}

\begin{table}[h]
\centering
\small
\caption{Table of Abbreviations.}
\label{tab:abbreviations}
\begin{tabular}{@{} l p{0.75\columnwidth} @{}}
\toprule
\textbf{Abbreviation} & \textbf{Full Name} \\ \midrule
\multicolumn{2}{l}{\textit{Methods \& Models}} \\
VBLL & Variational Bayesian Last Layer \\
CR-VBLL & Classification Restoration VBLL (Ours) \\
QR-VBLL & Quantile Regression VBLL (Ours) \\
MDN & Mixture Density Networks \\
MCD & Monte Carlo Dropout \\
DER & Deep Evidential Regression \\
SNGP & Spectral-normalized Neural Gaussian Process \\
CQR & Conformalized Quantile Regression \\
BALD & Bayesian Active Learning by Disagreement \\
\midrule
\multicolumn{2}{l}{\textit{Metrics \& Objectives}} \\
NLL & Negative Log-Likelihood \\
MSE & Mean Squared Error \\
RMSE & Root Mean Squared Error \\
CRPS & Continuous Ranked Probability Score \\
ECE & Expected Calibration Error \\
ELBO & Evidence Lower Bound \\
KL & Kullback–Leibler Divergence \\
\midrule
\multicolumn{2}{l}{\textit{Concepts \& Distributions}} \\
UQ & Uncertainty Quantification \\
OOD & Out-of-Distribution \\
CDF & Cumulative Distribution Function \\
PDF & Probability Density Function \\
ALD & Asymmetric Laplace Distribution \\
PIT & Probability Integral Transform \\
SN & Spectral Normalization \\
\bottomrule
\end{tabular}
\end{table}

\clearpage
\section{Quantile Regression VBLL Algorithmic Details}
\label{app:vbll_qr}

% TODO: recover after abstarct submission
% \todo[inline]{TODO: connect this to the rest of the paper}

\subsection{Background and Related Work}
\label{app:vbll_qr_background}

Quantile regression \cite{koenker1978regression} uses the pinball loss function 
\begin{equation}
\label{eq:pinball-def}
\rho_{\kappa}(\delta) = \begin{cases}
\kappa \delta & \delta \geq 0\\
 (\kappa - 1) \delta & \delta < 0
\end{cases}
\end{equation}   
where $\delta$ is error $\delta = y - z$, for true label y and prediction $z$, and where $\kappa \in (0,1)$ corresponds to the quantile to be regressed. The asymmetric Laplace distribution\footnote{Note that we discuss zero-mean asymmetric Laplace distributions here. Because the asymmetric Laplace distribution is a member of the location-scale family, the density of the error under the zero mean distribution corresponds exactly to the density of the label with non-zero mean.},
\begin{equation*}
p(\delta; \sigma, \kappa) = \frac{\kappa(1-\kappa)}{\sigma} \exp(-\rho_{\kappa}(\frac{\delta}{\sigma}))
\end{equation*}   
is a natural probabilistic interpretation of the quantile loss function \cite{yu2001bayesian}; choosing scale parameter $\sigma = 1$ recovers exactly the quantile regression loss in maximum likelihood estimation (plus a constant function of $\kappa$). 

Bayesian inference for quantile estimation has primarily relied on Monte Carlo methods \citep{tsionas2003bayesian, li2010bayesian}. See \citet{yang2016posterior} for a survey of inferential methods. Most relevant to our work is the variational inference scheme of \cite{abeywardana2015variational}, which builds on the \citet{kozumi2011gibbs} formulation in developing a correspondence between the asymmetric Laplace distribution and a scalar mixture of Gaussians (with a particular distribution over mixture weights). They use Gaussian processes for their underlying function approximation. \citet{abeywardana2015variational} exploit the Gaussian mixture perspective, introducing a variational posterior for the mixture weights. 
Our variational approach exactly computes the ELBO for the asymmetric Laplace, avoiding the need to introduce a variational posterior that may differ from the true posterior. 

Regressing a set of quantiles (with e.g.~$(\kappa_1, \kappa_2, \ldots, \kappa_N)$ with $\kappa_i < \kappa_j$ for $i < j$) allows the construction of an estimate of the predictive density. Indeed, knowledge of the quantile locations and corresponding $\kappa$ values corresponds to a set of points on the CDF; construction of an empirical CDF or PDF relies on an interpolation method between these points. Linear interpolation between these points is a common choice, and results in a piecewise-constant density \cite{koenker2005quantile}. Bayesian inference of quantiles can therefore be thought of as generating a distribution over empirical CDFs, although the realized distribution still relies on interpolation method.

% However, this method is sensitive to quantile crossing, in which the quantiles do not follow the positional ordering required by definition. 
% An alternative approach is Kernel density estimation (KDE), in which each quantile center is the center of a mixture element. In the case of a Gaussian kernel, for example, this corresponds to interpolating the CDF with Gaussian CDF functions. While use of the Epanechnikov kernel is common \cite{gaglianone2012constructing}, we will focus on the Gausrsian kernel in this work. 
% While KDE avoids practical issues with quantile crossing, the setting of kernel hyperparameters can be sensitive. 

% TODO: recover after abstarct submission
% TODO: add discussion around related work
% \begin{itemize}
%     \item \cite{abeywardana2015variational} variational inference
%     \item 
% \end{itemize}

\subsection{ELBO and Predictive Likelihood}
\label{appx:proof-thm1}

We will first provide proofs of Theorem \ref{thm:qr_elbo} and Proposition \ref{thm:qr_predictive}, and then provide context for these results. 

\begin{figure}[htbp]
    \centering
    \includegraphics[width=0.8\linewidth]{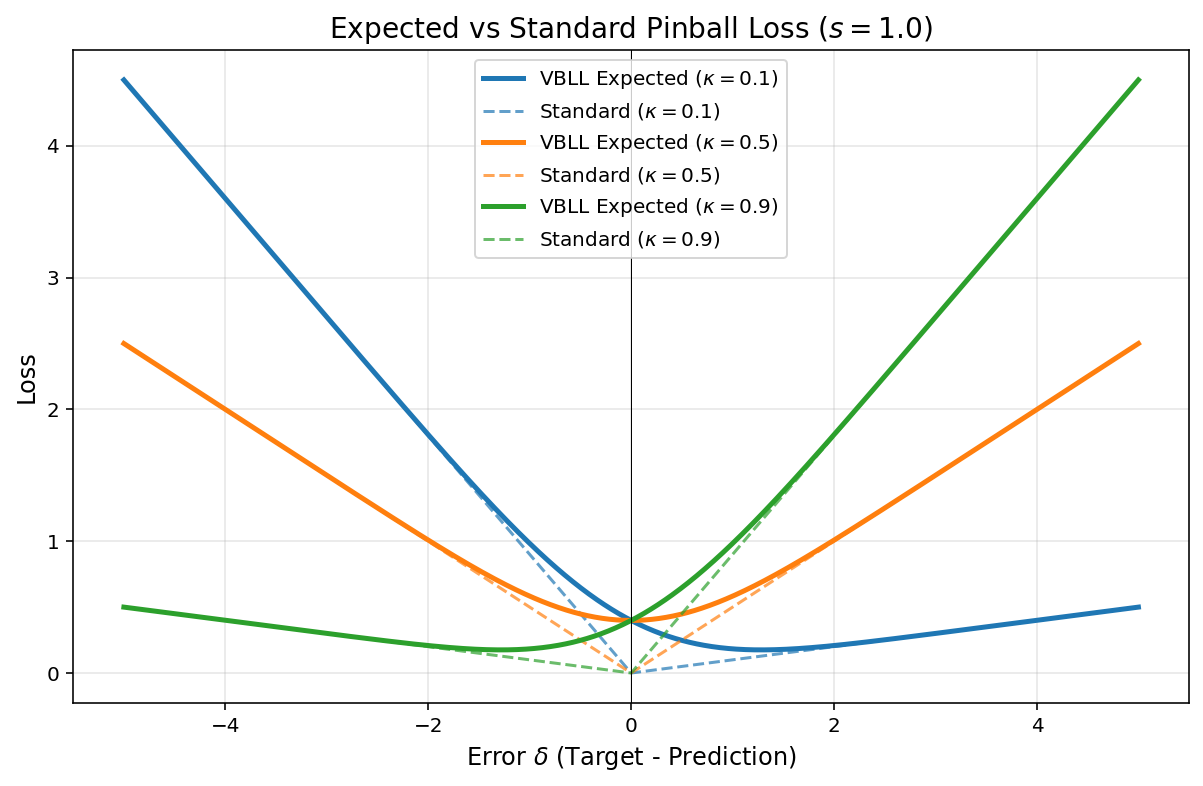}
    \caption{Analytic smoothing of the Asymmetric Laplace surrogate likelihood. The standard pinball loss (dashed) contains a non-differentiable kink at zero error. By taking the expectation over the Gaussian variational posterior in QR-VBLL, the effective data objective (solid) is analytically smoothed proportional to the epistemic uncertainty $s$. This renders the objective strictly twice-differentiable.}
    \label{fig:expected_pinball_loss}
\end{figure}

\begin{proof}[Proof of Theorem \ref{thm:qr_elbo}]
We have 
\begin{equation}
    \mathbb{E}[\log p(\delta; \sigma, \kappa)] = \log \frac{\kappa (1-\kappa)}{\sigma} - \mathbb{E}[\rho_{\kappa}(\frac{\delta}{\sigma})]
\end{equation}
for which
\begin{align}
    \mathbb{E}[\rho_{\kappa}(\frac{\delta}{\sigma})] &= \int_{-\infty}^{\infty} N(\delta; \bar{\delta}, s^2) \rho_{\kappa}(\frac{\delta}{\sigma}) d\delta \\
    &=\frac{\kappa}{\sigma} \int^{\infty}_0  \delta \mathcal{N}(\delta; \bar{\delta}, s^2)  d\delta + \frac{(\kappa - 1)}{\sigma} \int_{-\infty}^0 \delta \mathcal{N}(\delta; \bar{\delta}, s^2) d\delta
\end{align}
This is a known integral for which
\begin{equation}
    I \coloneq \int^{\infty}_0 \delta \mathcal{N}(\delta; \bar{\delta}, s^2)  d\delta = \bar{\delta} \Phi(\frac{\bar{\delta}}{s}) + s \mathcal{N}(\frac{\bar{\delta}}{s}; 0, 1)
\end{equation}
where $\Phi(\cdot)$ denotes the CDF of the standard normal.
\jwnew{
By symmetry, 
\begin{align}
    \int_{-\infty}^0 \delta \mathcal{N}(\delta; \bar{\delta}, s^2) d\delta
    &= - \int^{\infty}_0 (-\delta)\mathcal{N}(-\delta; \bar{\delta}, s^2) d\delta\\
    &= \bar{\delta} \Phi(-\frac{\bar{\delta}}{s}) - s \mathcal{N}(-\frac{\bar{\delta}}{s}; 0, 1)\\
    &= \bar{\delta} (1 - \Phi(\frac{\bar{\delta}}{s})) - s \mathcal{N}(\frac{\bar{\delta}}{s}; 0, 1).
\end{align}
}
Then, grouping terms, we have 
\begin{align}
    \mathbb{E}[\rho_{\kappa}(\frac{\delta}{\sigma})] &= \frac{\kappa}{\sigma} (\bar{\delta} \Phi(\frac{\bar{\delta}}{s}) + s \mathcal{N}(\frac{\bar{\delta}}{s}; 0, 1)) + \frac{(\kappa - 1)}{\sigma} (\bar{\delta} (1 - \Phi(\frac{\bar{\delta}}{s})) - s \mathcal{N}(\frac{\bar{\delta}}{s}; 0, 1))\\
    &= \frac{\bar{\delta}}{\sigma} (\kappa - 1 + \Phi(\frac{\bar{\delta}}{s})) + \frac{s}{\sigma} \mathcal{N}(\frac{\bar{\delta}}{s}; 0, 1)
\end{align}
\end{proof}

% Thus, the expected log likelihood can be computed analytically, and can be represented in terms of functions that are typically available in automatic differentiation packages. The variational posterior and the neural network weights can be trained by backpropagation through this loss function. 

% Prediction in this model is done by marginalizing over the variational posterior, computing 
% \begin{equation}
%     p(y \mid x) = \mathbb{E}_{q(w)}[p(y \mid x, w)].
% \end{equation}
% This marginal predictive distribution is analytically tractable, as shown by the following result. 

\begin{proof}[Proof of Proposition \ref{thm:qr_predictive}]
We will take the same approach as previously, and split the expectation into two integrals. We note
\begin{align}
    \mathbb{E}_{q(w)}[p(y \mid x, w)] &= \int_{-\infty}^\infty \frac{\kappa (1-\kappa)}{\sigma} \exp(-\rho_{\kappa}(\frac{\delta}{\sigma})) \mathcal{N}(\delta; \bar{\delta}, s^2) d\delta\\
    &=-\sigma \lambda_- \lambda_+ \left(  \int_{-\infty}^0 \exp(-\lambda_- \delta) \mathcal{N}(\delta; \bar{\delta}, s^2) d\delta + \int^{\infty}_0 \exp(- \lambda_+\delta) \mathcal{N}(\delta; \bar{\delta}, s^2) d\delta \right)
\end{align}
for which
\begin{align}
    \int_{-\infty}^0\exp(-\lambda_- \delta) \mathcal{N}(\delta; \bar{\delta}, s^2) d\delta &= \int_{-\infty}^0 \frac{1}{s \sqrt{2 \pi}} \exp(- \lambda_- \delta - \frac{1}{2s^2}(\delta - \bar{\delta})^2) d\delta\\
    &= \exp(- \lambda_- \bar{\delta} + \frac{1}{2} \lambda_-^2 s^2) \int_{-\infty}^0 \mathcal{N}(\delta; \bar{\delta} - \lambda_- s^2, s^2) d\delta\\
    &=\exp(- \lambda_- \bar{\delta} + \frac{1}{2} \lambda_-^2 s^2) \Phi(- \frac{ \bar{\delta} - \lambda_- s^2}{s})
\end{align}
where the second equality is arrived at by completing the square.
The RHS integral can be treated similarly, yielding the marginal density. 
\end{proof}

% This density is a mixture of exponentially-modified Gaussian distributions. 
We can also state the following result on the predictive mean and variance. 

\begin{corollary} \label{coro:qr_vbll_moments}
    Let $p(y|x)$ be the marginal distribution from Proposition \ref{thm:qr_predictive}. Then, the first two moments are 
    \begin{align}
        \mathbb{E}[y \mid x] &= \bar{w}^\top \phi(x) + \frac{\sigma (1-2\kappa)}{\kappa (1 - \kappa)} \label{eq:qr_mean}\\
        \textnormal{Var}(y \mid x) &= \phi(x)^\top S \phi(x) + \frac{\sigma^2 (1-2\kappa + 2\kappa^2)}{\kappa^2 (1 - \kappa)^2}. \label{eq:qr_var}
    \end{align}
\end{corollary}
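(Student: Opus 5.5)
The plan is to avoid differentiating the closed-form density of Proposition~\ref{thm:qr_predictive} and instead use the generative structure behind it. Conditional on $w$, the label is $y = z + \varepsilon$ with $z = w^\top \phi(x)$ and $\varepsilon = y - z$ distributed according to the zero-mean asymmetric Laplace density $p(\cdot;\sigma,\kappa)$. Under $q(w)$ we have $z \sim \mathcal{N}(\bar{w}^\top\phi(x), \phi(x)^\top S\phi(x))$, and $\varepsilon$ is independent of $w$, since its law does not depend on $w$. Marginalizing over $q(w)$ therefore gives exactly the density of Proposition~\ref{thm:qr_predictive}: a Gaussian convolved with an asymmetric Laplace. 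Consequently
\begin{equation*}
\mathbb{E}[y\mid x] = \mathbb{E}[z] + \mathbb{E}[\varepsilon], \qquad \textnormal{Var}(y\mid x) = \textnormal{Var}(z) + \textnormal{Var}(\varepsilon).
\end{equation*}
This is the same conclusion the law of total variance in \Cref{eq:variance_decomposition} gives: the epistemic term is $\textnormal{Var}(z) = s^2$ and the aleatoric term is $\textnormal{Var}(\varepsilon)$, which does not depend on $w$.

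The remaining work is to compute the first two moments of the asymmetric Laplace law by splitting each integral at $0$, as in the proof of Theorem~\ref{thm:qr_elbo}. We use the elementary integral $\int_0^\infty \delta^n e^{-a\delta}\,d\delta = n!/a^{n+1}$ with $a = \kappa/\sigma$ on the positive half-line and $a = (1-\kappa)/\sigma$ on the reflected negative half-line.

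For the mean, the positive half contributes $\sigma(1-\kappa)/\kappa$ and the negative half contributes $-\sigma\kappa/(1-\kappa)$. Their sum is $\sigma(1-2\kappa)/(\kappa(1-\kappa))$, which gives \eqref{eq:qr_mean}.

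For the second moment, the two halves give $2\sigma^2(1-\kappa)/\kappa^2$ and $2\sigma^2\kappa/(1-\kappa)^2$. Putting these over the common denominator $\kappa^2(1-\kappa)^2$ and subtracting the squared mean, the numerator becomes
\begin{equation*}
\sigma^2\bigl(2[(1-\kappa)^3+\kappa^3] - (1-2\kappa)^2\bigr).
\end{equation*}
Using $(1-\kappa)^3 + \kappa^3 = 1 - 3\kappa + 3\kappa^2$, this simplifies to $\sigma^2(1 - 2\kappa + 2\kappa^2)$, which gives \eqref{eq:qr_var}.

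There is no real analytic obstacle. The main point needing care is to justify that the marginal of Proposition~\ref{thm:qr_predictive}, read as a density in $y$ rather than in $\delta$, is the convolution of $\mathcal{N}(\bar{w}^\top\phi(x), s^2)$ with the asymmetric Laplace law. This follows from the location-family remark in Appendix~\ref{app:vbll_qr_background}. Once that is settled, independence and additivity of moments apply, along with a sign-convention check ($\delta = y - z$, not $z - y$) so that the mean offset comes out as $+\sigma(1-2\kappa)/(\kappa(1-\kappa))$.

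As a cross-check, the same answer can be obtained by differentiating the moment generating function. The Gaussian factor gives $e^{tm + t^2 s^2/2}$ and the asymmetric Laplace factor gives $\kappa(1-\kappa)/\bigl((\kappa - \sigma t)(1-\kappa+\sigma t)\bigr)$; differentiating the product at $t=0$ recovers both moments.
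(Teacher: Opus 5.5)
Your proposal is correct and follows the paper's own argument. Like the paper, you write $y = z + \delta$ with $z \sim \mathcal{N}(\bar{w}^\top\phi(x), \phi(x)^\top S\phi(x))$ independent of the zero-location asymmetric Laplace error, so the means and variances add. The paper only asserts the asymmetric Laplace moments, whereas your half-line integrals and MGF cross-check derive them explicitly, and that algebra checks out.
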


\begin{proof}
    We can write $y = z + \delta$, where $z \sim \mathcal{N}(\bar{w}^\top \phi(x), \phi(x)^\top S \phi(x))$ and $\delta$ is asymmetric Laplace distributed with zero location parameter, and these are independent. Thus, the moments of the sum are the sum of the moments. The first term in each of Equations \ref{eq:qr_mean} and \ref{eq:qr_var} are the corresponding moments of the Gaussian, and the second term corresponds to the moments of the asymmetric Laplace with location zero. 
\end{proof}

The VBLL model structure pairs naturally with the asymmetric Laplace likelihood, and yields analytically computable expectations for both the ELBO and the predictive density. Further extensions to this model are possible: because the scale term $\sigma$ appears only through $\log \sigma$ and $1/\sigma$ in the ELBO, variational inference for this parameter is possible. 
% As discussed by \cite{harrison2026heteroscedastic}, 
We note that (inverse) gamma or log-Gaussian variational posteriors are possible, providing a more robust model and enabling modeling of heteroscedasticity. We compare the expected vs standard pinball loss in Figure \ref{fig:expected_pinball_loss}.

% \clearpage

% \subsection{Asymptotic Consistency}

% We first establish that our distribution-free estimators recover the true target distribution in the infinite-data limit. These results rely on the regularity of the target (Assumption \ref{asm:smooth_dist}) and the capacity of the model (Assumption \ref{asm:model_capacity}).

% \begin{proposition}[Asymptotic Consistency of VBLL Estimators]
% \label{prop:asymptotic_consistency}
% As the dataset size $N \to \infty$, the KL-divergence regularization term in the ELBO vanishes relative to the data likelihood. This causes the variational posterior $q(w)$ to contract to the optimal point estimate, guaranteeing that both branches asymptotically recover their respective ground-truth statistical targets (detailed proofs provided in Appendix \ref{app:proof-consistency}):

% \textbf{Path A (CR-VBLL): Density Estimation.} The predicted conditional distribution converges in probability to the true target density, eliminating the KL-divergence gap:
%     \begin{equation}
%         D_{\text{KL}} \left(p(y|x) \| \hat{p}_{\text{VBLL}}(y|x) \right) \xrightarrow{P} 0
%     \end{equation}

% \textbf{Path B (QR-VBLL): Quantile Estimation.} The predicted conditional quantiles converge in probability to the true target quantiles for any asymmetry level $\kappa \in (0,1)$:
%     \begin{equation}
%         | \hat{Q}_{\text{VBLL}}(\kappa|x) - Q^*(\kappa|x)| \xrightarrow{P} 0
%     \end{equation}

% \end{proposition}
% \clearpage

\clearpage

\section{Formal Proofs in Section \ref{sec:theory}}
\label{app:proofs}

Before detailing the derivations, we first formalize the structural assumption regarding the variational posterior of the last-layer weights, which is central to the $\mathcal{O}(1)$ efficiency of CR-VBLL. 

\begin{assumption}[Isotropic Logit Covariance via Homogeneous Discretization]
\label{asm:isotropic}

Following the exact factorization of the Variational Bayesian Last Layer \citep{harrison2024variational}, we model the variational posterior over the $K$ classes independently as $q(W) = \prod_{k=1}^K \mathcal{N}(\bar{w}_k, S_k)$. The independence of the weight vectors $w_k$ mathematically guarantees a strictly diagonal logit covariance matrix $\Sigma_Z = \text{diag}(s_1^2(x), \dots, s_K^2(x))$.

Crucially, because our $K$ classes do not represent heterogeneous, independent categories (as in standard classification), but rather contiguous, discretized intervals of a single continuous target variable (e.g., watch time), we impose a physically motivated structural assumption: the underlying epistemic uncertainty is homogeneous across these adjacent bins. We therefore constrain the class-specific posteriors to share the same covariance matrix $S$ (i.e., $S_k = S, \forall k$). This structural homogenization naturally yields an isotropic logit covariance $\Sigma_Z = s^2(x)\mathbf{I}_K$, where the shared scalar variance is $s^2(x) = \phi(x)^\top S \phi(x)$.
\end{assumption}

\begin{remark}[Efficiency Trade-off and Fallback to Diagonal Covariance]
It is important to note that relaxing Assumption \ref{asm:isotropic} to a standard diagonal covariance (i.e., allowing $S_k \neq S_j$) would still provide valid and rigorous uncertainty quantification. In fact, if we fallback to this weaker diagonal assumption, our formulation perfectly matches the scalable Laplace approximations established by prior works \citep{kristiadi2020being, harrison2024variational}, and we would derive the exact same UQ properties.

However, maintaining distinct variances $s_k^2(x)$ across classes introduces a critical computational and predictive trade-off. Mathematically, heterogeneous diagonal elements preclude a closed-form solution for the multi-class softmax expectation, necessitating computationally expensive Monte Carlo (MC) sampling. Furthermore, applying class-specific variance scaling inside the softmax can perturb the predictive argmax of the base model. By restricting our variational family to the isotropic space, we consciously trade the unnecessary degrees of freedom in bin-specific variances for an elegant $\mathcal{O}(1)$ closed-form analytic inference that strictly preserves the original mode precision.
\end{remark}

Under Assumption \ref{asm:isotropic}, the $K \times K$ epistemic covariance matrix of the predictive distribution is analytically approximated via the Delta method as $\tilde{J} \Sigma_Z \tilde{J}^\top$, where {$\tilde{J} = \text{diag}(\tilde{\mathbf{p}}) - \tilde{\mathbf{p}} \tilde{\mathbf{p}}^\top$ is the Jacobian of the softmax evaluated at the probit-scaled logits $\tilde{\mu} = \mu / \sqrt{1 + \frac{\pi}{8} s^2(x)}$, and $\tilde{\mathbf{p}} = \text{Softmax}(\tilde{\mu})$}.

The dimensionally aligned scalar epistemic variance $v_{epi}(x)$ is simply the trace of this matrix:
\begin{equation}
    v_{epi}(x) = \text{Tr}\left( \tilde{J} \Sigma_Z \tilde{J}^\top \right) = s^2(x)\text{Tr}\left( \tilde{J} \tilde{J}^\top \right) = s^2(x)\|\tilde{J}\|_F^2
\end{equation}

Expanding the trace of the squared Jacobian yields a fully analytic, closed-form scalar variance that solely depends on the \jwnew{scaled} predictive probabilities and our epistemic scalar $s^2(x)$:
\begin{equation}\label{eq:cr_epistemic}
    v_{epi}(x) = s^2(x) \left( \sum_{k=1}^K \tilde{p}_{k}^2 - 2 \sum_{k=1}^K \tilde{p}_{k}^3 + \left( \sum_{k=1}^K \tilde{p}_{k}^2 \right)^2 \right)
\end{equation}
% Under Assumption \ref{asm:isotropic}, the $K \times K$ epistemic covariance matrix of the predictive distribution is analytically approximated via the Delta method as $J_\mu \Sigma_Z J_\mu^\top$, where $J_\mu = \text{diag}(\mathbf{p}_\mu) - \mathbf{p}_\mu \mathbf{p}_\mu^\top$ is the Jacobian of the softmax evaluated at the mean logit $\mu = \overline{w}^\top \phi(x)$. 

% The dimensionally aligned scalar epistemic variance $v_{epi}(x)$ is simply the trace of this matrix:
% \begin{equation}
%     v_{\text{epi}}(x) = \text{Tr}\left(J_\mu \Sigma_Z J_\mu^\top\right) = s^2(x) \text{Tr}\left(J_\mu J_\mu^\top\right) = s^2(x) \|J_\mu\|_F^2
% \end{equation}
% Expanding the trace of the squared Jacobian yields a fully analytic, closed-form scalar variance that solely depends on the base model's predictive probabilities and our epistemic scalar $s^2(x)$:
% \begin{equation}
%     v_{\text{epi}}(x) = s^2(x) \left( \sum_{k=1}^K p_{\mu, k}^2 - 2 \sum_{k=1}^K p_{\mu, k}^3 + \left( \sum_{k=1}^K p_{\mu, k}^2 \right)^2 \right)
% \end{equation}

\begin{remark}
    Following \citet{harrison2024variational}, we apply Jensen's inequality to upper-bound the expected log-sum-exp term in $\mathbb{E}_{q}[\text{NLL}_{CE}(z, y)]$. This guarantees a fully analytic and deterministic surrogate objective for training:
    \begin{equation}
        \mathbb{E}_{q}[\text{NLL}_{CE}(z, y)] \le -m_{y} + \log \sum_{k=1}^K \exp\left(m_k + \frac{s^2(x)}{2}\right)
    \end{equation}
    where $m_y$ denotes the mean logit of the ground truth bin.
\end{remark}

\subsection{Proof of Proposition~\ref{prop:decomposition} (Analytic Decomposition)}
\label{app:proof_decomposition}

\begin{proof}
Let $p(y|x) = \int p(y|x, w) q(w) dw$ be the marginal predictive distribution. The Law of Total Variance states:
\[
\mathrm{Var}_q(y|x) = \underbrace{\mathbb{E}_{q(w)}[\mathrm{Var}(y|x, w)]}_{\text{Aleatoric}} + \underbrace{\mathrm{Var}_{q(w)}(\mathbb{E}[y|x, w])}_{\text{Epistemic}}
\]
where the moments are taken with respect to the variational posterior $q(w)$.

\textbf{Analytic Tractability in VBLL:}
The decomposition is computable in a single forward pass due to the closed-form moments of the logits $z = w^\top \phi(x)$. Under the Gaussian posterior, the logit variance is computed analytically as $s^2 = \phi(x)^\top S \phi(x)$. We apply this to both paths:

\begin{enumerate}
    \item 
    \textbf{Path A: QR-VBLL.} 
    
    For QR-VBLL, the surrogate likelihood is the Asymmetric Laplace Distribution (ALD). As derived in Corollary \ref{coro:qr_vbll_moments}, the predictive random variable for a specific quantile $\kappa$ is the sum of the latent Gaussian logits $z$ and the ALD noise $\epsilon$. By linearity of variance:
    $$ \text{Var}_\kappa(y|x) = \text{Var}(\epsilon) + s^2(x) $$
    Crucially, because the ALD is utilized as a surrogate likelihood to recover the pinball loss, its scale parameter $\sigma$ must be fixed (e.g., $\sigma=1$). Consequently, the theoretical ALD variance $\text{Var}(\epsilon)$ evaluates to a static constant depending only on $\kappa$. We explicitly note that this static term is an artifact of the surrogate objective and does not represent the true input-dependent aleatoric noise of the data. 
    
    % {Aleatoric Variance in QR-VBLL:} To quantify the actual, multi-modal aleatoric variance of the target variable $y$, the variance must be empirically computed from the global reconstructed CDF. Given the predicted conditional quantiles $\hat{q}_m(x)$ for an ordered set of $M$ quantile levels $\{\kappa_m\}_{m=1}^M$, we assume a piecewise uniform density between adjacent quantiles. The continuous predictive aleatoric variance is then explicitly computed by integrating over the inverse CDF:
    % \begin{equation}
    %     \label{eq:qr_aleatoric}
    %     v_{\text{aleatoric}}(x) = \text{Var}(y|x) \approx \sum_{m=1}^{M-1} (\kappa_{m+1} - \kappa_m) \left( \frac{\hat{q}_m^2 + \hat{q}_m \hat{q}_{m+1} + \hat{q}_{m+1}^2}{3} \right) - \hat{\mu}(x)^2
    % \end{equation}
    % where $\hat{\mu}(x) = \sum_{m=1}^{M-1} \frac{\hat{q}_m + \hat{q}_{m+1}}{2} (\kappa_{m+1} - \kappa_m)$ is the expected predictive mean. This analytic formulation strictly measures the inherent metric dispersion of the data distribution, satisfying the exact requirement for the aleatoric penalty in our hybrid active learning acquisition function (Eq.~\ref{eq:hybrid_acquisition}). Conversely, the epistemic term in this decomposition strictly and accurately captures the closed-form posterior logit variance $s^2(x)$ in $\mathcal{O}(1)$ time.
    
     We compute from the global reconstructed CDF To quantify the actual, multi-modal aleatoric variance of the target variable $y$, (derived from the full set of $K$ predicted quantiles). 
     \begin{equation}
        \label{eq:qr_aleatoric}
        v_{ale}(x) = \sum_{k=1}^{K-1} (\kappa_{k+1} - \kappa_k) \left( \frac{\hat{q}_k^2 + \hat{q}_k \hat{q}_{k+1} + \hat{q}_{k+1}^2}{3} \right) - \hat{\mu}(x)^2,
    \end{equation}
    \jwnew{where $\hat{\mu}(x) = \sum_{k=1}^{K-1} \frac{\hat{q}_k + \hat{q}_{k+1}}{2} (\kappa_{k+1} - \kappa_k)$ is the expected predictive mean.}
     The epistemic term in this decomposition strictly and accurately captures the closed-form posterior logit variance $s^2(x)$ in $\mathcal{O}(1)$ time.
    
    % \textbf{Path A: Quantile Regression (Exact).}
    % For QR-VBLL, the likelihood is the Asymmetric Laplace Distribution (ALD). As derived in \textbf{Corollary \ref{coro:qr_vbll_moments}}, the predictive random variable is the sum of two independent variables: the latent Gaussian logits $z$ and the ALD noise $\epsilon$. By the linearity of variance, the moments decompose \emph{exactly} without approximation:
    % \[
    % \mathrm{Var}(y|x) = \underbrace{\mathbb{E}_q[\text{Var}(\epsilon)]}_{\text{Aleatoric}} + \underbrace{s^2}_{\text{Epistemic}}
    % \]
    % where the aleatoric term is determined by the ALD scale $\sigma$ and asymmetry $\kappa$ \citep{yu2001bayesian}, and the epistemic term is exactly the posterior logit variance $s^2$.

    \item \textbf{Path B: CR-VBLL.}

    For CR-VBLL, $(y|x, w)$ follows a histogram distribution over $K$ bins. To strictly avoid the ``Ghost Value'' pathology—where computing continuous variance using bin midpoints artificially inflates uncertainty on multi-modal targets—we quantify uncertainty completely within the $K$-dimensional probability space.
    
    Let $\mathbf{p} = \text{Softmax}(w^\top \phi(x)) \in \mathbb{R}^K$ denote the conditional categorical probability vector given a specific weight $w$. Because $w \sim q(w)$ is a random variable under the variational posterior, $\mathbf{p}$ itself is a random vector. By the Law of Total Covariance, the total covariance matrix of the categorical prediction decomposes into the expected conditional covariance and the covariance of the expectation. To project this into a unified scalar metric and formally justify the variance notation in Proposition 2, we apply the Trace ($\text{Tr}$) operator, which preserves the linear additive decomposition.
    
    \textbf{Aleatoric Uncertainty:} The conditional covariance matrix of a categorical variable is $\text{diag}(\mathbf{p}) - \mathbf{p}\mathbf{p}^\top$. Applying the Trace operator yields the scalar aleatoric term, which resolves exactly to the expected Gini impurity of the distribution:
    \begin{equation}
        v_{ale}(x) = \mathbb{E}_q\left[\text{Tr}(\text{diag}(\mathbf{p}) - \mathbf{p}\mathbf{p}^\top)\right] = \mathbb{E}_q\left[1 - \mathbf{p}^\top \mathbf{p}\right]
    \end{equation}
    This metric cleanly captures the intrinsic data noise (e.g., dispersion across multiple modes) mathematically, without relying on the continuous semantic distance between bins.
    \knew{To compute the aleatoric variance $v_{ale}(x) = \mathbb{E}_q \left[1 - \mathbf{p}^\top \mathbf{p} \right]$ analytically in $\mathcal{O}(1)$ time, we evaluate the expected Gini impurity using the approximated predictive mean. Let the latent logits be distributed as $g(x) \sim \mathcal{N}(\mu, \Sigma_s)$, where $\mu = \bar{w}^\top \phi(x)$ and $\Sigma_s = \phi(x)^\top \Sigma_w \phi(x)$. We first approximate the expected categorical probability $\hat{p}_k = \mathbb{E}_q[p_k]$ by mapping the softmax function to a Bradley-Terry style pairwise probit formulation \citep{kristiadi2020being}:
    \begin{equation}
    \hat{p}_k \approx \frac{\exp\left(\frac{\mu_k}{\sqrt{1+ \eta \sigma_k^2}}\right)}{\sum_{j=1}^K \exp\left(\frac{\mu_j}{\sqrt{1+ \eta \sigma_j^2}}\right)},
    \end{equation}
    where $\sigma_k^2 = [\Sigma_s]_{kk}$ is the marginal logit variance for class $k$ and $\eta = \pi/8$ is the standard probit scaling factor.
    The aleatoric uncertainty is then efficiently approximated as $v_{ale}(x) \approx (1 - \hat{\mathbf{p}}^\top \hat{\mathbf{p}}) - v_{epi}(x) = 1 - \sum_{k=1}^K \hat{p}_k^2 - v_{epi}(x)$.
    % The aleatoric uncertainty is then efficiently approximated as $v_{ale}(x) \approx 1 - \hat{\mathbf{p}}^\top \hat{\mathbf{p}} = 1 - \sum_{k=1}^K \hat{p}_k^2$.
    }

    \snew{
    \textbf{Epistemic Uncertainty:} The epistemic uncertainty is quantified by the trace of the predictive covariance of the softmax outputs: $\text{Tr}(\mathbb{E}_q[\mathbf{p}\mathbf{p}^\top] - \bar{\mathbf{p}}\bar{\mathbf{p}}^\top)$, where $\bar{\mathbf{p}} = \mathbb{E}_q[\mathbf{p}]$. 

Assuming the last-layer weights follow a Gaussian variational posterior $q(w) \sim \mathcal{N}(\overline{w}, S)$, the pre-softmax logits $\mathbf{z} = w^\top \phi(x)$ are a deterministic linear projection of these weights. Consequently, their conditional distribution $q(\mathbf{z}|x)$ is strictly Gaussian, $\mathcal{N}(m, \Sigma_z)$, where $m = \bar{w}^\top \phi(x)$. Under an isotropic variance assumption across classes, we have $\Sigma_z = s^2 \mathbf{I}_K$, where $s^2 = \phi(x)^\top S \phi(x)$.

During inference, we approximate the expected softmax outputs using the probit-softmax approximation \citep{kristiadi2020being}: 
\snew{$\hat{\mathbf{p}} \approx \text{Softmax}(\tilde{\mathbf{z}})$, where $\tilde{\mathbf{z}} = \frac{m}{\sqrt{1 + \eta s^2}}$} denotes the temperature-scaled mean logits, and $\eta$ is a scaling parameter. 
To obtain a tight, closed-form expression for the covariance, we apply the first-order multivariate Delta method. 

Standard application of the Delta method requires a Taylor expansion around the mean of the input, $m$. However, doing so causes the softmax outputs to collapse toward a one-hot distribution for OOD samples, leading to a severe underestimation of epistemic uncertainty. To circumvent this issue, we instead evaluate at the temperature-scaled mean logits \snew{$\tilde{\mathbf{z}}$}.

Specifically, let $J = \text{diag}(\hat{\mathbf{p}}) - \hat{\mathbf{p}}\hat{\mathbf{p}}^\top$ denote the Jacobian of the softmax function evaluated at $\tilde{\mathbf{z}}$. The $K \times K$ epistemic covariance matrix of the predictive distribution is analytically approximated as $J \Sigma_z J^\top$. The scalar epistemic variance $v_{\text{epi}}(x)$ is simply the trace of this matrix \footnote{
Albeit having size $K\times K$, the term $\text{Tr}\left( J J^\top \right)$ can be evaluated in $O(K)$ time. Throughout the paper, we treat $K$ as a constant as it is usually much smaller than the backbone model size. }:
\begin{equation} \label{eq:delta_variance}
    v_{\text{epi}}(x) = \text{Tr}\left( J \Sigma_z J^\top \right) = s^2(x) \text{Tr}\left( J J^\top \right).
\end{equation}

By expanding around $\tilde{\mathbf{z}}$, the approximate probabilities $\hat{\mathbf{p}}$ remain bounded away from a one-hot vector even for extreme OOD samples. This ensures the trace term is strictly lower-bounded, allowing the final epistemic variance to scale properly with $s^2(x)$. Ultimately, this yields a fully closed-form and principled scalar variance decomposition, $\text{Var}_{\text{total}}(x) = v_{\text{ale}}(x) + v_{\text{epi}}(x)$, which avoids Monte Carlo sampling and can be computed efficiently
}

\end{enumerate}

All components are deterministic and closed-form, allowing for $O(1)$ inference.
\end{proof}

\begin{remark} [The Categorical Gap in Histogram Uncertainty]
    In standard histogram regression, variance is typically computed using the continuous metric distance between bin midpoints. However, applying continuous metric variance to highly multi-modal distributions severely inflates the uncertainty score whenever distant modes co-exist (the "Ghost Value" pathology), conflating the natural multi-modality of the data with true epistemic ignorance or excessive aleatoric noise. 

    To break this conflation, CR-VBLL deliberately treats the histogram bins as an unordered categorical space $\mathbf{p} \in \mathbb{R}^K$. By quantifying uncertainty strictly within the probability simplex (e.g., via Gini impurity and Trace of the covariance matrix) and disregarding the ordinal distance between bins, CR-VBLL measures the \textit{sharpness of the distributional modes} rather than their spatial spread. This "classification-style" uncertainty ensures that sharp, distant multi-modal predictions are correctly assigned low uncertainty, fulfilling the "Mode Precision" property described in Remark \ref{rem:path_selection}.
\end{remark}

% \begin{remark}[Distinction between Exact and Approximate Decomposition] \label{rem:analytic_distinction}
% In Table \ref{tab:theory_comparison}, we distinguish between the two forms of analytic decomposition:
% \begin{itemize}
%     \item \textbf{QR-VBLL (Exact):} Achieves mathematical exactness via the properties of the ALD and Gaussian convolution (Corollary \ref{coro:qr_vbll_moments}).
%     \item \textbf{CR-VBLL:} Relies on the Probit approximation for the Softmax integral to avoid Monte Carlo sampling. 
% \end{itemize}
% \end{remark}

\subsection{Proof of Proposition~\ref{prop:ood} (OOD Sensitivity)}
\label{app:proof-ood}

\begin{proof}
Let $x^* = \arg\min_{x \in \mathcal{D}_{\text{train}}} \|x' - x\|_2$ be the training point closest to the OOD input $x'$, such that $\|x' - x^*\|_2 = d$.

\textbf{1. Lower Bounding the Feature Norm:}
By the lower bound of the bi-Lipschitz property in Assumption \ref{asm:spectral_norm}, the distance in feature space is bounded by the distance in input space:
\[ \|\phi(x') - \phi(x^*)\|_2 \geq c_1 \|x' - x^*\|_2 = c_1 d \]
Using the reverse triangle inequality, we can bound the norm of the OOD feature $\|\phi(x')\|_2$:
\[ \|\phi(x')\|_2 \geq \|\phi(x') - \phi(x^*)\|_2 - \|\phi(x^*)\|_2 \]
By defining $R = \max_{x \in \mathcal{D}_{\text{train}}} \|\phi(x)\|_2$ as the maximum norm of any training feature, we have:
\[ \|\phi(x')\|_2 \geq c_1 d - R \]

\textbf{2. Lower Bounding the Epistemic Variance:}
To lower bound the quadratic form $v(x') = \phi(x')^\top S \phi(x')$, we use the minimum eigenvalue of $S$:
\[
v(x') \ge \lambda_{\min}(S) ||\phi(x')||_2^2
\]
We analyze the lower bound in two regimes:
\begin{itemize}
    \item \textbf{Outside the Horizon ($d \ge R/c_1$):} In this case, $c_1 d - R \ge 0$. Since both sides of the norm inequality are non-negative, we can square the inequality while preserving the direction:
    \[
    v(x') \ge \lambda_{\min}(S) (c_1 d - R)^2
    \]
    This confirms the quadratic growth of uncertainty with distance $d$.
    
    \item \textbf{Inside the Horizon ($d < R/c_1$):} Here, the lower bound is trivial ($v(x') \ge 0$) as uncertainty is governed by the local inductive bias of the kernel rather than asymptotic distance.
\end{itemize}

\textbf{Conclusion:} The quadratic "Safety Net" guarantee $v(x') \propto d^2$ strictly holds for all inputs beyond the OOD horizon $d_{\text{OOD}} = R/c_1$.
\end{proof}

\subsection{Proof of Calibration \& Coverage} \label{app:calibration_proof}

\begin{proposition}[Calibration and Coverage via Marginalization (Simplified)]\label{prop:calibration}
The marginalization of the weight posterior in the VBLL framework acts as an implicit regularizer against overconfidence. By optimizing the ELBO (Eq.~\ref{eq:elbo_cr} / Eq.~\ref{eq:analytic_qr}), the predictive distributions satisfy the following properties:

\textbf{Path A (QR):} Marginalization induces an analytic smoothing of the Asymmetric Laplace likelihood (Theorem \ref{thm:qr_elbo}) into a mixture of exponentially-modified Gaussians (Proposition \ref{thm:qr_predictive}). Structurally, this forces the predictive variance to explicitly incorporate the epistemic uncertainty $s^2(x)$, ensuring that predictive intervals inflate on OOD inputs to prevent the zero-width collapse common in standard quantile regression.

\textbf{Path B (CR):} Marginalization induces an input-dependent temperature scaling, strictly reducing maximum confidence in regions of high epistemic uncertainty \citep{seo2019learning, kristiadi2020being}.

% \begin{itemize}[leftmargin=*, label={}]
%     \item \textbf{Path A (CR):} Marginalization induces an input-dependent temperature scaling, strictly reducing maximum confidence in regions of high epistemic uncertainty \citep{seo2019learning, kristiadi2020being}.
%     \item \textbf{Path B (QR):} The predictive distribution becomes a mixture of Asymmetric Laplace Distributions. By the Law of Total Variance, this inflates the predictive intervals on OOD inputs, preventing the over-tight estimates common in standard pinball loss minimization \citep{takeuchi2006nonparametric}.
% \end{itemize}
\end{proposition}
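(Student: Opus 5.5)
I will prove the two parts separately, since each reduces to a concrete inequality about the marginal predictive $p(y|x)=\mathbb{E}_{q(w)}[p(y|x,w)]$.

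\textbf{Path A (QR).} I start from the representation used in Corollary~\ref{coro:qr_vbll_moments}: for each quantile level $\kappa$, $y = z + \epsilon$ with $z\sim\mathcal{N}(\bar w^\top\phi(x), s^2(x))$ independent of the zero-location ALD noise $\epsilon$. Then the marginal density is the convolution of a Gaussian with the ALD, which is exactly the exponentially-modified Gaussian mixture of Proposition~\ref{thm:qr_predictive}. By independence (the Law of Total Variance specialized to this sum), Eq.~\eqref{eq:qr_var} gives $\text{Var}(y|x)=s^2(x)+\sigma^2(1-2\kappa+2\kappa^2)/(\kappa^2(1-\kappa)^2)$. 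This is strictly larger than the deterministic ALD variance whenever $s^2(x)>0$, and it grows at least like $\lambda_{\min}(S)(c_1 d-R)^2$ for OOD inputs by Proposition~\ref{prop:ood}.

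To turn variance inflation into interval inflation, I will argue via quantiles of the convolution. For any $\alpha<1/2$, the central $(1-2\alpha)$ interval of $z+\epsilon$ has width at least that of $z$ alone, because $\epsilon$ is independent and a Gaussian convolution widens symmetric-type intervals; a cleaner route is Chebyshev-type or Anderson-type bounds. Concretely, I plan to lower bound the width by $2 s(x)\,\Phi^{-1}(1-\alpha)$, using the fact that $P(z+\epsilon\in I)\le \sup_c P(z\in I-c)$ and that Gaussian mass in an interval of fixed length is maximized when the interval is centered. Hence the width tends to infinity as $s(x)\to\infty$, which rules out zero-width collapse.

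\textbf{Path B (CR).} I use Assumption~\ref{asm:isotropic} and the probit approximation, under which the predictive vector is $\mathrm{Softmax}(m/\sqrt{1+\frac{\pi}{8}s^2(x)})$. Write $T(x)=\sqrt{1+\frac{\pi}{8}s^2(x)}\ge1$. I will show that $\max_k \mathrm{Softmax}(m/T)_k$ is nonincreasing in $T$, and strictly decreasing unless all $m_k$ are equal.

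To do this I take $k^*=\arg\max m_k$, which is unchanged by scaling, and write $p_{k^*}=1/\sum_j \exp(-(m_{k^*}-m_j)/T)$. Each exponent $-(m_{k^*}-m_j)/T$ is $\le0$ and increases in $T$, so the denominator increases and $p_{k^*}$ decreases. Combining this with Proposition~\ref{prop:ood} shows that confidence decays toward $1/K$ as $d\to\infty$, while the argmax is preserved.

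The main obstacle is that ``strictly reducing confidence'' is a statement about the exact expectation $\mathbb{E}[\mathrm{Softmax}(z)]$, not its probit surrogate. I would first state the result for the probit approximation, since this is the quantity the method computes. I would then add, via Jensen's inequality on the convex map $z\mapsto\max_k\mathrm{Softmax}(z)_k$ restricted appropriately, or via \citet{kristiadi2020being}, that the exact expected maximum-class probability is also bounded away from $1$. The QR interval bound is comparatively routine.
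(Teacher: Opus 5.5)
Your handling of the statement's two core claims is sound, and in Path A it is sharper than the paper's. The paper only records the additive decomposition $\mathrm{Var}_\kappa(y|x)=\sigma^2(1-2\kappa+2\kappa^2)/(\kappa^2(1-\kappa)^2)+s^2(x)$ and then argues qualitatively that the marginal density ``flattens''. Your concentration-function bound is correct: any interval carrying $p_\kappa(y|x)$-mass at least $1-2\alpha$ has width at least $2s(x)\Phi^{-1}(1-\alpha)$, hence at least $2\sqrt{\lambda_{\min}(S)}\,(c_1d-R)\,\Phi^{-1}(1-\alpha)$ beyond the horizon. It turns variance inflation into interval inflation, which a variance identity alone does not give. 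State explicitly that these are intervals of the per-$\kappa$ marginal predictive. As Remark~\ref{rem:ood_calibration} concedes, the quantile point estimates $\hat q_\kappa(x)$ do not widen, and neither do intervals such as $[\hat q_\alpha(x),\hat q_{1-\alpha}(x)]$ built from them. In Path B, your computation with $p_{k^*}=1/\sum_j\exp((m_j-m_{k^*})/T)$ is the paper's ``Inequality Proof'', with the monotonicity actually verified rather than asserted.

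The step that fails is your OOD limit: ``confidence decays toward $1/K$ as $d\to\infty$'' is false in general. Proposition~\ref{prop:ood} only lower-bounds $s^2(x')$. The mean logits $m_k(x')=\bar w_k^\top\phi(x')$ also grow linearly in $\|\phi(x')\|_2$, so $m(x')$ and $T(x')$ are of the same order and $m(x')/T(x')$ need not vanish. For example, if $\phi(x')\approx t u$ with $t\to\infty$ and $u^\top S u>0$, then $m_k/T\to\bar w_k^\top u/\sqrt{\eta\,u^\top S u}$. This is a generally non-constant finite vector, so the top probability converges to a value strictly between $1/K$ and $1$. Your monotonicity in $T$ holds at fixed $m$, so it cannot by itself describe what happens along a path in $x$ on which $m$ also changes.

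What is provable, and what the paper claims, is that the prediction stays bounded away from one-hot. By the upper Lipschitz constant, $\|\phi(x')\|_2\le c_2d+R$, so $\max_k|m_k(x')|/T(x')\le\max_k\|\bar w_k\|_2\,(c_2d+R)/\sqrt{1+\eta\lambda_{\min}(S)(c_1d-R)^2}$. This stays bounded as $d\to\infty$, and bounded scaled logits force $\max_k\hat p_k(x')\le 1-c$ for some $c>0$. Convergence to $1/K$ would need the extra assumption $\|m(x')\|=o(s(x'))$.

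Separately, drop the Jensen step for the exact expectation. The map $z\mapsto\max_k\mathrm{Softmax}(z)_k$ is not convex: for $K=2$ it is the logistic function of $|z_1-z_2|$, which is concave away from $z_1=z_2$. Even if it were convex, Jensen would lower-bound $\mathbb{E}[\max_k\mathrm{Softmax}(z)_k]$, which is the wrong direction. The statement concerns the probit-scaled predictive, so you do not need this step.
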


We derive these results separately for the Classification and Regression paths, where the calibration guarantees are presented along with the derivations.

\textbf{Part 1: Path A (QR-VBLL)} \\
For a specific quantile level $\kappa$, the marginal predictive distribution $p_\kappa(y|x)$ is a mixture of two exponentially-modified Gaussian (EMG) terms. This distribution arises from the analytic convolution of the Asymmetric Laplace likelihood (specific to $\kappa$) with the Gaussian variational posterior $q(w) \sim \mathcal{N}(\overline{w}, S)$.

\textbf{Formal Local Density Function:} Using the results in Proposition \ref{thm:qr_predictive}, let the logit mean be $m = \overline{w}^{\top}\phi(x)$ and the epistemic variance be $s^2 = \phi(x)^{\top}S\phi(x)$. The marginal density for quantile $\kappa$ is defined analytically as:
\begin{align}
    \label{eq:qr_vbll_density_kappa}
    p_\kappa(y|x) = -\sigma\lambda_{-}\lambda_{+} & \bigg[ \exp\left(-\lambda_{-}\overline{\delta} + \frac{1}{2}\lambda_{-}^{2}s^{2}\right)\Phi\left(-\frac{\overline{\delta}-\lambda_{-}s^{2}}{s}\right) \nonumber \\ 
    &~ + \exp\left(-\lambda_{+}\overline{\delta} + \frac{1}{2}\lambda_{+}^{2}s^{2}\right)\Phi\left(\frac{\overline{\delta}-\lambda_{+}s^{2}}{s}\right) \bigg]
\end{align}
Where:
$\overline{\delta} = y - m$ is the centered error. $\lambda_{+} = \frac{\kappa}{\sigma}$ and $\lambda_{-} = \frac{\kappa-1}{\sigma}$ are parameters derived from the quantile level $\kappa$ and scale $\sigma$. $\Phi(\cdot)$ is the standard normal CDF.

\textbf{Part 2: Path B (CR-VBLL)} \\
Let the predictive distribution be the expectation of the softmax function over the Gaussian posterior $q(w) = \mathcal{N}(\overline{w}, S)$. The probability for class $k$ is given by the integral:
\begin{equation}
    \hat{\mathbf{p}}_k(x) = \int \text{Softmax}(w^\top \phi(x))_k \, \mathcal{N}(w|\overline{w}, S) \, dw
\end{equation}
The analytic solution is intractable. We employ the probit approximation for the Softmax-Gaussian integral \citep{kristiadi2020being}, which approximates the convolution of a sigmoid-like function with a Gaussian by scaling the input. This yields:
\begin{equation}
    \hat{\mathbf{p}}_k(x) \approx \text{Softmax}\left( \frac{\overline{w}^\top \phi(x)}{\sqrt{1 + \eta s^2(x)}} \right)_k
\end{equation}
% where $s^2(x) = \phi(x)^\top S \phi(x)$ is the epistemic variance of the logits (defined in Eq. \ref{eq:analytic_marginalization}). Let $T(x) = \sqrt{1 + \lambda s^2(x)}$ be the induced temperature scaling.
where $s^2(x) = \phi(x)^\top S\phi(x)$ is the epistemic variance of the logits, and $\eta = \pi/8$ is the standard probit approximation constant \citep{bishop2006pattern}. Let $T(x) = \sqrt{1 + \eta s^2(x)}$ be the induced temperature scaling.

\begin{itemize}
    \item \textbf{Inequality Proof:} Since the covariance matrix $S$ is positive semi-definite, $s^2(x) \ge 0$ implies $T(x) \ge 1$. For any logit vector $z$, the operation $\text{Softmax}(z/T)$ with $T > 1$ strictly increases the entropy and decreases the maximum probability compared to $T=1$ (unless $z$ is uniform). Thus $\forall x \text{ s.t. } s^2(x) > 0$:
  \jwnew{  \begin{equation}
        \max_k \hat{p}_k(x) < \max_k \text{Softmax}(\mu(x))_k \quad \text{subject to} \quad \mu(x)\neq c\mathbf{1},
    \end{equation}
    where $\mathbf{1} \in \mathbb{R}^K$ is a vector of ones and $c\in\mathbb{R}$.}
    \item \textbf{OOD Limit:} From Proposition \ref{prop:ood}, as the distance from the training manifold increases ($d \to \infty$), the epistemic variance grows quadratically ($s^2(x) \to \infty$). 
    % \jwnew{To ensure this property acts as an OOD safety net, the predictive distribution must be evaluated without a first-order local linear approximation, which otherwise suffers from analytical variance collapse due to softmax saturation. By preserving the unscaled logit variance or applying an explicit distance-dependent scaling $T(x) \to \infty$, we ensure that $\lim_{d\rightarrow \infty}T(x)=\infty$. Consequently, the predictive distribution approaches a uniform distribution: $\lim_{d \to \infty} \hat{p}(x) = \text{Uniform}(K)$.} This maximum entropy state minimizes ECE on OOD data by refusing to make confident incorrect predictions.
    % \item \textbf{OOD Limit:} From Proposition \ref{prop:ood}, as the distance from the training manifold increases ($d \to \infty$), the epistemic variance grows quadratically ($s^2(x) \sim \mathcal{O}(d^2)$). 
    \knew{Meanwhile, the mean logits $\mu(x)$ grow at most linearly ($\mathcal{O}(d)$). Under the probit approximation, the induced temperature $T(x) = \sqrt{1 + \eta s^2(x)}$ also grows linearly ($\mathcal{O}(d)$). Consequently, the scaled logits $\mu(x)/T(x)$ converge to a finite constant vector as $d \to \infty$, rather than diverging to infinity. This fundamental property ensures that the marginal predictive distribution $\hat{p}(x)$ is strictly bounded away from a one-hot (Dirac delta) distribution.
    \snew{This further ensures that the epistemic uncertainty given by $s^2(x) \text{Tr} \left( 
    \left( \text{diag}( \hat p(x) ) - \hat p(x) \hat p(x)^\top \right)^2
    \right)$ (defined as in \eqref{eq:delta_variance}) is at least $\Omega(s^2(x))$.}
    By naturally tempering the logits with epistemic uncertainty, the model effectively avoids making overly confident predictions on extreme OOD data, acting as a reliable OOD safety net.}
\end{itemize}

\textbf{Theoretical Implications for Calibration:}

\begin{itemize}
    \item \textbf{Analytic Smoothing (Local):} Unlike standard quantile regression which minimizes the pinball loss and can collapse to ``hard'' or crossing quantiles (Dirac deltas), the EMG formulation naturally ``smooths'' the local density estimate $p_\kappa(y|x)$ based on the epistemic uncertainty $s^2$.
    
    % \item \textbf{Total Variance Decomposition (Local):} By the Law of Total Variance, the predictive variance for a specific quantile estimate decomposes analytically:
    % \begin{equation}
    %     Var_\kappa(y|x) = \underbrace{\frac{\sigma^{2}(1-2\kappa+2\kappa^{2})}{\kappa^{2}(1-\kappa)^{2}}}_{\text{Aleatoric (ALD Noise)}} + \underbrace{s^2(x)}_{\text{Epistemic (Model)}}
    % \end{equation}
    % This additive form ensures that even if the aleatoric noise is small, the variance cannot collapse to zero if model uncertainty $s^2(x)$ is present.
    \textbf{Variance Decomposition (Local vs. Global):} By the Law of Total Variance, the predictive variance for a specific quantile surrogate density decomposes analytically:
    \begin{equation}
        \text{Var}_\kappa(y|x) = \underbrace{\frac{\sigma^2(1 - 2\kappa + 2\kappa^2)}{\kappa^2(1-\kappa)^2}}_{\text{Aleatoric}} + \underbrace{s^2(x)}_{\text{Epistemic}}
    \end{equation}
    Because $\sigma$ is fixed to recover the pinball loss, the first term evaluates to a static constant. While this theoretical ALD noise cannot physically represent the input-dependent aleatoric variance of $y$ (which must instead be derived globally from the reconstructed CDF), this local additive form remains structurally critical. It mathematically ensures that the localized variance for any specific quantile cannot collapse to zero if epistemic model uncertainty $s^2(x)$ is present.
    
    % \item \textbf{Global OOD Safety Net:} On out-of-distribution (OOD) data, the distance-aware backbone ensures that $s^2(x)$ grows quadratically. Since this term is additive in the local variance for \textit{every} quantile head $\kappa$, the entire set of predicted quantiles $\{ \hat{q}_\kappa(x) \}_{\kappa \in K}$ is forced to disperse. This effectively widens the reconstructed global confidence intervals (e.g., $\hat{q}_{0.9} - \hat{q}_{0.1}$), preventing overconfidence in unknown regions.
    \item \textbf{Global OOD Safety Net:} As established in Remark \ref{rem:ood_calibration}, the deterministic posterior means of the individual predicted quantiles ($\hat{q}_\kappa(x)$) do not natively inflate or disperse strictly as a function of epistemic variance. However, the OOD ``safety net'' mechanism is robustly preserved through the marginalization process. As the epistemic variance $s^2(x)$ grows on OOD data, the predictive distributions \textit{around} each estimated quantile become highly uncertain. When constructing the continuous predictive density by integrating over these uncertain quantile estimates, the resulting marginal predictive density significantly flattens. This analytic smoothing prevents the model from outputting overly confident, sharp densities in OOD regions, effectively recovering the desired safety net behavior without requiring the underlying quantile point estimates to physically disperse.
\end{itemize}

\subsection{Proof of Asymptotic Consistency}
\label{app:proof-consistency}

\begin{assumption}[Smoothness of True Distribution \citep{scott1979optimal}]\label{asm:smooth_dist}
The true conditional density $p(y|x)$ is bounded and Lipschitz continuous with respect to $y$ with constant $L$, i.e., $|p(y_1|x) - p(y_2|x)| \leq L|y_1 - y_2|$ for all $y_1, y_2 \in \mathcal{Y}$.
\end{assumption}

\begin{remark}[Smoothness vs. Discreteness]
    \label{rem:smoothness} 
    As visualized in the empirical target distribution in {Figure \ref{fig:real_world_dist}}, user feedback often exhibits sharp transitions.    
    We treat these not as mathematical discontinuities (Dirac deltas), which would violate Assumption \ref{asm:smooth_dist}, but as high-density regions separated by steep, continuous gradients (implying a large but finite Lipschitz constant $L$). The Lipschitz assumption serves as a sufficient condition for our convergence proofs without restricting the model's practical ability to capture the highly multi-modal nature of real-world data.
\end{remark}

% \begin{assumption}[Model Capacity \citep{mohri2018foundations}] \label{asm:model_capacity}
% The neural network backbone $\phi(x)$ has sufficient capacity to act as a universal approximator (i.e., the network representation capacity is allowed to grow with the sample size $N$) and is trained with a proper scoring rule: Categorical Cross-Entropy for CR; Pinball Loss for QR, which implies consistency of the estimator in the infinite-data limit.
% \end{assumption}

\begin{assumption}[Model Capacity and Dimensionality]
\label{asm:model_capacity}
The neural network backbone $\phi(x) \in \mathbb{R}^D$ has sufficient capacity to act as a universal approximator, meaning the network representation capacity is allowed to grow with the sample size $N$. Crucially for CR-VBLL, as we take the number of histogram bins $K \to \infty$ to analyze asymptotic consistency, we assume the backbone feature dimensionality $D$ grows alongside $K$ such that $D \ge K-1$. This prevents the pre-activation logits from being restricted to a low-dimensional affine subspace, ensuring the model retains the degrees of freedom required to independently control the $K$ bin probabilities and match arbitrary target densities.
\end{assumption}

\begin{remark}[Overcoming the Softmax Bottleneck]
\label{rem:softmax_bottleneck}
Assumption \ref{asm:model_capacity} mathematically circumvents the well-known ``Softmax Bottleneck'' \citep{yang2017breaking}. If the number of discrete bins $K$ strictly exceeds the feature dimension $D+1$, the pre-activation logit vector is confined to a low-rank affine subspace. Consequently, the output softmax distribution loses the capacity to express arbitrary categorical distributions, violating the universal approximation guarantee. By explicitly coupling $D$ and $K$ (i.e., $D \ge K-1$), we ensure the network retains full representational rank over the discretized bins as $K \to \infty$.
\end{remark}

\subsubsection{Path A: QR-VBLL}

\begin{assumption}[Bernstein-von Mises Conditions for Variational Posterior]
\label{asm:bvm}
We assume that the data-generating process and the chosen variational family satisfy the conditions for the Bernstein-von Mises (BvM) theorem. Specifically, as $N \to \infty$, the variational posterior $q(w)$ exhibits asymptotic normality and contracts its probability mass around the true parameter $w^*$ that minimizes the expected Pinball Loss. 
\end{assumption}

While BvM guarantees generally fail for fully stochastic deep neural networks due to weight unidentifiability and singular Fisher Information matrices, our Variational Bayesian Last Layer (VBLL) framework circumvents this limitation. By performing approximate inference exclusively on the final linear layer conditioned on a deterministic feature backbone $\phi(x)$, the inference task reduces to a generalized linear model. Under standard regularity conditions and assuming full-rank features, this restricted parameter space is strictly identifiable, rendering the BvM assumption rigorously justified \citep{liu2020simple, van2000asymptotic}.

\begin{lemma}[Consistency of QR]\label{lemma:qr_base}
Let $\hat{q}_\kappa(x)$ be the estimator minimizing the expected Pinball Loss $\mathcal{L}_\kappa$. Provided the hypothesis space is a universal approximator, the excess risk converges to zero as $N \to \infty$:
\begin{equation}
    R[\hat{q}_\kappa] - R[q^*_\kappa] \le O(N^{-1/2}) \to 0
\end{equation}
This risk bound is established in \citet[Theorem 6]{takeuchi2006nonparametric}, while universal consistency to the true quantile function is proven in \citet[Theorem 3.1]{steinwart2011estimating}.
\end{lemma}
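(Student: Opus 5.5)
The plan is to read the statement as an oracle inequality for empirical risk minimization under the pinball loss, and to reduce it to the two cited results rather than re-derive them. Write $R[f] = \mathbb{E}_{(x,y)}[\rho_\kappa(y - f(x))]$ for the population risk and $\hat R_N$ for its empirical counterpart, with $\hat{q}_\kappa$ a minimizer of $\hat R_N$ over the hypothesis class $\mathcal{F}$. I would split the excess risk as
\begin{equation*}
R[\hat{q}_\kappa] - R[q^*_\kappa] = \big(R[\hat{q}_\kappa] - \inf_{f\in\mathcal{F}} R[f]\big) + \big(\inf_{f\in\mathcal{F}} R[f] - R[q^*_\kappa]\big),
\end{equation*}
that is, an estimation term plus an approximation term. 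The universal-approximation hypothesis (Assumption~\ref{asm:model_capacity}) is what makes the approximation term vanish. The main work goes into the estimation term.

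For the estimation term I would use the standard argument $R[\hat q] - \inf_{\mathcal F} R \le 2\sup_{f\in\mathcal{F}}|\hat R_N[f] - R[f]|$. I would then bound the uniform deviation by symmetrization and the Rademacher complexity $\mathfrak{R}_N(\mathcal{F})$. The pinball loss is $\max(\kappa,1-\kappa)$-Lipschitz in its argument, so Talagrand's contraction lemma gives $\mathfrak{R}_N(\rho_\kappa\circ\mathcal{F}) \le \max(\kappa,1-\kappa)\,\mathfrak{R}_N(\mathcal{F})$. For a norm-bounded class in an RKHS, which is the setting of \citet[Theorem 6]{takeuchi2006nonparametric}, or for a linear last layer on bounded features $\|\phi(x)\|\le R$ with $\|w\|\le B$ as in our VBLL head, this is $O(BR/\sqrt{N})$. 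Bounded labels and McDiarmid's inequality then yield the high-probability $O(N^{-1/2})$ rate. To pass from risk convergence to convergence of $\hat q_\kappa$ to the true quantile function, I would invoke the self-calibration inequality of \citet[Theorem 3.1]{steinwart2011estimating}. It states that, when the conditional distribution has positive density around $q^*_\kappa(x)$ (guaranteed locally by Assumption~\ref{asm:smooth_dist} together with positivity at the quantile), excess pinball risk controls $\|\hat q_\kappa - q^*_\kappa\|_{L_r}$.

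The main obstacle is reconciling the fixed-class $O(N^{-1/2})$ rate with a hypothesis space that grows to be universal. With a growing class, the complexity bound $B_N R_N/\sqrt{N}$ and the approximation error pull against each other, so only $o(1)$ can be claimed in general, not a uniform $O(N^{-1/2})$. I would handle this the way Steinwart--Christmann do. Choose a regularization or norm budget $B_N\to\infty$ with $B_N/\sqrt N \to 0$, so that both terms vanish. The displayed $O(N^{-1/2})$ is then read as the estimation part for each fixed model budget, and the full statement is read as consistency. A secondary subtlety is that the VBLL objective adds a KL term and smooths the pinball loss (Theorem~\ref{thm:qr_elbo}). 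Both perturbations are $O(1/N)$ per sample, or $O(s)$ with $s\to 0$ under Assumption~\ref{asm:bvm}. So I would absorb them into the estimation term, noting that the smoothed loss differs from $\rho_\kappa$ by at most $s\,\mathcal{N}(0;0,1)/\sigma$ uniformly.
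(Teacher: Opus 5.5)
Your argument follows the paper's route, made explicit. The paper gives no proof beyond the two citations. Your symmetrization--contraction--McDiarmid estimate is the kind of fixed-class (RKHS-ball) uniform-deviation bound that the Takeuchi et al.\ reference provides, and your final step is the Steinwart--Christmann passage from pinball risk to the quantile function.

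Where you depart from the statement, you are right and the statement is not. An $O(N^{-1/2})$ bound of this type controls only $R[\hat q_\kappa]-\inf_{f\in\mathcal F}R[f]$ for a \emph{fixed} norm-bounded class, and it says nothing about the approximation term. If $\mathcal F$ is only assumed to be universal, no distribution-free rate for $R[\hat q_\kappa]-R[q^*_\kappa]$ can hold at all. This is the usual slow-rate phenomenon: already for noiseless bounded targets, the excess pinball risk dominates $\min(\kappa,1-\kappa)$ times an $L_1$ regression error, which admits no universal rate. So what you actually prove is the defensible content of the lemma: the $N^{-1/2}$ rate for the estimation term at each fixed budget, and $R[\hat q_\kappa]-R[q^*_\kappa]\to 0$ when $B_N\to\infty$ with $B_N/\sqrt N\to 0$. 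The displayed inequality itself would need a hypothesis the paper never states. Examples are that $q^*_\kappa$ lies in a fixed norm-bounded class, or a regularity condition on $q^*_\kappa$, in which case the attainable rate is dictated by that regularity rather than being $N^{-1/2}$.

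Three smaller points.
\begin{enumerate}
\item The positivity condition you add at $q^*_\kappa(x)$ is genuinely necessary and is not implied by Assumption~\ref{asm:smooth_dist}. A bounded Lipschitz density can vanish on the gap between modes, which is exactly the multimodal regime this paper targets. The $\kappa$-quantile can then be a whole interval, so vanishing excess risk does not pin down $\hat q_\kappa$ and the self-calibration inequality is unavailable.
\item Letting $B_N\to\infty$ removes the approximation term only if the union of the norm balls is dense. That holds for an RKHS with a universal kernel, which is the Steinwart--Christmann setting. It fails for a linear head on a fixed finite-dimensional $\phi$. In the network setting, the Rademacher bound has to be taken over the whole growing backbone-plus-head class.
\item Your VBLL remark belongs to the later QR-VBLL proposition rather than this lemma. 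Your smoothing bound $0\le\mathbb E[\rho_\kappa(\delta/\sigma)]-\rho_\kappa(\bar\delta/\sigma)\le s\,\mathcal N(0;0,1)/\sigma$ is correct. The per-sample KL, however, is not $O(1/N)$ once $\|\bar w\|\le B_N$ grows and $S$ contracts like $1/N$. Against an $\mathcal N(0,I)$ prior it is of order $(B_N^2+D\log N)/N$. This still vanishes under your budget condition, provided $D\log N=o(N)$.
\end{enumerate}
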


\begin{proposition} [Asymptotic Consistency for QR-VBLL.]
    Let $\hat{Q}_{VBLL}(\kappa|x)$ be the estimator induced by the VBLL variational posterior $q(w)$. {Under Assumption \ref{asm:bvm}}, as $N \to \infty$, the KL-divergence regularization term vanishes relative to the data likelihood (Eq. 4), causing the posterior $q(w)$ to contract to the optimal quantile regression solution. Consequently, the VBLL estimator inherits the universal consistency of the non-parametric quantile estimator (Lemma \ref{lemma:qr_base}), ensuring that the estimated quantiles converge to the true conditional quantiles:
\begin{equation}
|\hat{Q}_{VBLL}(\kappa|x) - Q^*(\kappa|x)| \xrightarrow{P} 0
\end{equation}
\end{proposition}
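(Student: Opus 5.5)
The argument compares the VBLL estimator with the point estimator $w^*$ that minimizes the expected pinball loss, and then invokes Lemma \ref{lemma:qr_base} for that point estimator. Define the VBLL quantile estimator as the predictive quantile location $\hat{Q}_{VBLL}(\kappa|x) = \bar{w}_N^\top \phi(x)$, where $\bar{w}_N$ is the variational mean obtained from the ELBO (Eq.~\ref{eq:loss_qr}) on $N$ samples. By the triangle inequality,
\[
|\hat{Q}_{VBLL}(\kappa|x) - Q^*(\kappa|x)| \le |\bar{w}_N^\top\phi(x) - \hat{q}_\kappa(x)| + |\hat{q}_\kappa(x) - Q^*(\kappa|x)|.
\]
Here $\hat{q}_\kappa(x) = w^{*\top}\phi(x)$ is the pinball-loss minimizer. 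The second term tends to zero in probability by Lemma \ref{lemma:qr_base}: the excess-risk bound of \citet{takeuchi2006nonparametric} together with the universal consistency result of \citet{steinwart2011estimating} gives this under the universal-approximation clause of Assumption \ref{asm:model_capacity}. Most of the work therefore lies in the first term.

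To control the first term, divide the loss $\mathcal{L}_{\text{QR},\kappa}$ by $N$.

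1. The data term is $\frac{1}{N}\sum_i \mathbb{E}_\delta[\rho_\kappa(\delta_i/\sigma)]$, computed in closed form by Theorem \ref{thm:qr_elbo}. Inspecting that formula, $\mathbb{E}[\rho_\kappa(\delta/\sigma)] - \rho_\kappa(\bar\delta/\sigma)$ lies in $[0, s/(\sigma\sqrt{2\pi})]$. The lower bound is Jensen's inequality; the upper bound uses $|\rho_\kappa(a) - \rho_\kappa(b)| \le |a-b|$ together with $\mathbb{E}|\delta - \bar\delta| = s\sqrt{2/\pi}$. Hence the smoothed objective differs from the empirical pinball risk at the mean $\bar w$ by at most $O(\max_i s_i)$.

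2. The KL term contributes $\beta\,\mathrm{KL}(q\|p)/N$, which vanishes for any fixed $(\bar w, S)$ with $S$ bounded away from singularity. I would make this precise by fixing the comparison point $q_0 = \mathcal{N}(w^*, N^{-1} I)$. Its objective value is the empirical pinball risk at $w^*$ plus $O(N^{-1/2})$ (from step 1, with $s_i = O(N^{-1/2}\|\phi(x_i)\|)$ and $R$ as in Proposition \ref{prop:ood}) plus $O(\log N / N)$ (from the KL term). By optimality, the minimizer $(\bar w_N, S_N)$ has objective no larger than this.

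3. Dropping the nonnegative Jensen gap and the KL term (KL $\ge 0$), the empirical pinball risk at $\bar w_N$ is at most the empirical pinball risk at $w^*$ plus $o(1)$. So $\bar w_N$ is an approximate empirical risk minimizer.

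4. A standard M-estimation argmin-consistency theorem \citep{van2000asymptotic} then gives $\bar w_N \xrightarrow{P} w^*$. It needs a uniform law of large numbers for the pinball risk over a compact parameter set (the pinball loss is Lipschitz, so this holds) and a well-separated minimum.

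5. Identifiability comes from the linear last layer with full-rank features, as argued after Assumption \ref{asm:bvm}. The Bernstein--von Mises condition (Assumption \ref{asm:bvm}) additionally yields $S_N \to 0$, so the whole posterior $q$ concentrates at $w^*$, not just its mean.

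6. Since $|\bar w_N^\top\phi(x) - w^{*\top}\phi(x)| \le \|\bar w_N - w^*\|\,\|\phi(x)\|$, the first term vanishes in probability.

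The main obstacle is step 4, specifically the well-separated minimum. The population pinball risk is strictly convex in $w$ only if the conditional density of $y$ is positive near $w^{*\top}\phi(x)$ on a set of $x$ where $\phi(x)$ spans $\mathbb{R}^D$. For multimodal targets with near-zero density "valleys", this can fail exactly at a quantile lying in a gap. I would handle this first by restricting to quantile levels $\kappa$ whose true quantile has positive conditional density, which is consistent with Assumption \ref{asm:smooth_dist} and Remark \ref{rem:smoothness}. Where that restriction is unavailable, I would fall back on the statement of Assumption \ref{asm:bvm}, which already postulates contraction to a unique $w^*$.

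A second, minor subtlety is that the backbone $\phi$ also depends on $N$. I would treat $\phi$ as fixed conditional on the trained backbone, or absorb its approximation error into the second term through Lemma \ref{lemma:qr_base}.
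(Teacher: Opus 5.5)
Your argument is correct at the paper's level of rigor, and it has the same two-piece skeleton as the paper's proof: first, contraction of $q(w)$ to the pinball minimizer $w^*$; second, consistency of that minimizer via Lemma~\ref{lemma:qr_base}. You establish the contraction step by a genuinely different route.

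The paper's Step~1 only notes that the KL term is $O(1)$ against an $O(N)$ data term, and then reads $\bar w\to w^*$ and $S=O(1/N)$ directly off Assumption~\ref{asm:bvm}. The ELBO's structure is never used, so the contraction is in effect postulated. You derive it instead:
\begin{enumerate}
\item the closed form of Theorem~\ref{thm:qr_elbo} bounds the smoothing (Jensen) gap by $O(s)$;
\item the comparison posterior $\mathcal{N}(w^*,N^{-1}I)$ bounds the optimal objective by the empirical pinball risk at $w^*$ plus $O(N^{-1/2})+O(\log N/N)$;
\item so $\bar w_N$ is an $o(1)$-approximate empirical pinball minimizer, and argmin consistency finishes.
\end{enumerate}

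This turns the ``KL vanishes relative to the likelihood'' heuristic into an actual inequality. More usefully, it exposes what the appeal to BvM hides. The argument needs a unique, well-separated population minimizer, and that can fail exactly when a quantile level falls in a zero-density gap of the multimodal targets the paper is about. The paper's route buys brevity and gets $S\to 0$ for free, but by assuming the conclusion. You do not actually need $S_N\to 0$, since the estimator is the posterior mean $\bar w_N^\top\phi(x)$, as in the paper.

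Two small repairs:
\begin{enumerate}
\item $\mathbb{R}^D$ is not compact, so a ULLN on a compact set is not available as stated. Use convexity of the empirical pinball risk in $w$ instead: a pointwise LLN plus the standard convexity lemma confines approximate minimizers to a neighbourhood of $w^*$.
\item Your Lipschitz-plus-$\mathbb{E}|\delta-\bar\delta|$ argument only gives the gap bound $s\sqrt{2/\pi}/\sigma$, twice the constant you state. The stated $s/(\sigma\sqrt{2\pi})$ is true, but it comes from writing $\rho_\kappa(u)=(\kappa-1)u+u^+$, noting the gap is $\kappa$-free, and maximizing it at $\bar\delta=0$. This does not affect the conclusion.
\end{enumerate}

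Your second term is handled exactly as loosely as in the paper. With fixed $\phi$, $w^{*\top}\phi(x)-Q^*(\kappa|x)$ is a deterministic misspecification error. Lemma~\ref{lemma:qr_base} concerns a capacity-growing class and yields risk or $L_r(P_X)$ convergence, not pointwise convergence. Your closing caveat about $\phi$ depending on $N$ names a gap the paper's proof shares, not one you introduced.
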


\begin{proof}
Let the true conditional quantile function at level $\kappa$ be $f^*_\kappa(x)$. We model the quantile function using the VBLL backbone as $f(x; w) = w^\top \phi(x)$. The VBLL objective (Eq. 4) maximizes the Evidence Lower Bound (ELBO):
\begin{equation}
\mathcal{L}_{QR} = \sum_{i=1}^N \mathbb{E}_{q(w)} \left[ \log p(y_i | x_i, w) \right] - \beta D_{KL}(q(w) || p(w))
\end{equation}
where the log-likelihood corresponds to the Asymmetric Laplace Distribution (ALD): $\log p(y|x,w) \propto -\rho_\kappa(y - f(x;w))$.

\textbf{Step 1: Convergence of the Variational Posterior.}
As the sample size $N \to \infty$, the data term scales with $N$, while the KL-divergence prior regularization term remains constant (or scales sub-linearly if $\beta$ is annealed). The objective is dominated by the sum of expected log-likelihoods:
\[
\frac{1}{N}\mathcal{L}_{QR} \approx \frac{1}{N}\sum_{i=1}^N \mathbb{E}_{q(w)} [-\rho_\kappa(y_i - w^\top \phi(x_i))].
\]
By Assumption \ref{asm:bvm}, the variational posterior $q(w)$ concentrates its probability mass around the parameter $w^*$ that minimizes the expected Pinball Loss (Maximum Likelihood Estimation for ALD). Consequently, under this assumption, the variance of $q(w)$ shrinks as $\mathcal{O}(1/N)$, and the mean $\mu_w$ converges to $w^*$.

\textbf{Step 2: Consistency of the Estimator.}
Since the backbone $\phi(x)$ satisfies the universal approximation property (Assumption \ref{asm:model_capacity}), there exists a sequence of parameters $w_N$ such that $w_N^\top \phi(x)$ converges to the true quantile function $f^*_\kappa(x)$ in the sup-norm.
From Lemma 4.8 in \citep{takeuchi2006nonparametric}, the minimizer of the Pinball Loss is a consistent estimator of the true quantile. Since our variational objective asymptotically coincides with the Pinball Loss minimization, the VBLL estimator $\hat{q}_\kappa(x) = \mathbb{E}_{q(w)}[w^\top \phi(x)]$ converges to the true minimizer.

\textbf{Conclusion.}
Combining the concentration of the posterior $q(w)$ and the consistency of the risk minimizer, we have:
\begin{equation}
\sup_{x \in \mathcal{X}} | \hat{q}_\kappa(x) - f^*_\kappa(x) | \xrightarrow{P} 0 \quad \text{as } N \to \infty
\end{equation}
This confirms that QR-VBLL is an asymptotically consistent estimator for the true conditional quantiles.
\end{proof}

\subsubsection{Path B: CR-VBLL}

\begin{lemma}[Consistency of Histogram Density Estimation]\label{lemma:cr_base}
Let $\hat{p}_{\text{CR}}(y|x; K)$ be a histogram density estimator with $K$ bins of width $\Delta$. If the true density $p(y|x)$ is Lipschitz continuous and bounded, then as $N \to \infty$ and $K \to \infty$ such that $\Delta \to 0$ and $K/N \to 0$, the Integrated Mean Squared Error (IMSE) converges to zero:
\begin{equation}
    \text{IMSE} = \mathbb{E} \int (\hat{p}_{\text{CR}}(y|x) - p(y|x))^2 dy = O(N^{-2/3}) \to 0
\end{equation}
This consistency is explicitly stated in \citet[Section 2]{scott1979optimal}, relying on the convergence proofs of \citet{cencov1962estimation}.
\end{lemma}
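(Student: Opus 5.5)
The plan is the classical bias--variance decomposition of the IMSE for a histogram estimator, applied pointwise in $x$ and then integrated over $y$. Fix $x$ and write $\hat p_{\text{CR}}(y|x) = \hat\pi_k/\Delta$ for $y \in B_k$. Here $\hat\pi_k$ is the estimated bin probability: the empirical frequency of labels falling in $B_k$, which is what the cross-entropy minimizer recovers under Assumption~\ref{asm:model_capacity}. Let $\pi_k = \int_{B_k} p(y|x)\,dy$ be the true bin mass. For $y\in B_k$ we split
\[
\mathbb{E}\big(\hat p_{\text{CR}}(y|x)-p(y|x)\big)^2 = \text{Var}(\hat\pi_k)/\Delta^2 + \big(\pi_k/\Delta - p(y|x)\big)^2 ,
\]
and integrate each term over $y$, summing over the bins.

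\textbf{Variance term.} Conditionally on $x$, $N\hat\pi_k$ is Binomial$(N,\pi_k)$, so $\text{Var}(\hat\pi_k)=\pi_k(1-\pi_k)/N$. Integrating over $B_k$ (length $\Delta$) and summing over $k$ gives
\[
\sum_k \pi_k(1-\pi_k)/(N\Delta) \le 1/(N\Delta).
\]
Since $\Delta \propto 1/K$, this is $O(K/N)$, which tends to zero under $K/N\to 0$.

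\textbf{Bias term.} By the mean value theorem for integrals there is $y_k\in B_k$ with $\pi_k/\Delta = p(y_k|x)$. Assumption~\ref{asm:smooth_dist} then gives $|\pi_k/\Delta - p(y|x)|\le L|y-y_k|\le L\Delta$ on $B_k$. Hence the integrated squared bias over a bounded target domain of length $K\Delta$ is at most $L^2\Delta^2\cdot K\Delta = O(\Delta^2)$. A sharper Taylor-type computation, valid when $p$ is additionally differentiable, gives Scott's constant $\Delta^2\int p'(y)^2dy/12$; we only need the $O(\Delta^2)$ order.

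\textbf{Combining.} We get $\text{IMSE}\le 1/(N\Delta) + C L^2\Delta^2$, which tends to zero whenever $\Delta\to 0$ and $N\Delta\to\infty$. Balancing the two terms with $\Delta\asymp N^{-1/3}$ (i.e.\ $K\asymp N^{1/3}$, which satisfies $K/N\to0$) gives the rate $O(N^{-2/3})$, as in \citet{scott1979optimal}.

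\textbf{Main obstacle.} The delicate step is the conditional setting. The classical argument treats i.i.d.\ samples from a single density, whereas here one estimates $p(y|x)$ across $x$. Here I would lean on Assumption~\ref{asm:model_capacity}: because $D\ge K-1$ avoids the softmax bottleneck (Remark~\ref{rem:softmax_bottleneck}), the minimizer of expected cross-entropy over the $K$ bins equals the true bin masses $\pi_k(x)$. The estimation error of $\hat\pi_k(x)$ is then controlled by the generalization of the cross-entropy risk at an effective per-$x$ sample size. I expect to state the variance bound as $O(K/N_{\text{eff}})$ and assume this plays the role of $N$ in the binomial computation. Making that rigorous uniformly in $x$ is the weakest point, and I would likely accept it at the level of generality of the cited references.
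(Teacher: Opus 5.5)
Your argument is correct and follows the paper's route: the paper gives no derivation beyond citing \citet{scott1979optimal}, whose proof is exactly your bias--variance split of the IMSE into $\sum_k \pi_k(1-\pi_k)/(N\Delta)\le 1/(N\Delta)$ plus an $O(\Delta^2)$ integrated squared bias (you use the Lipschitz bound where Scott uses a Taylor expansion giving $\Delta^2\int p'(y)^2\,dy/12$), and you correctly observe that the $O(N^{-2/3})$ rate needs $\Delta\asymp N^{-1/3}$ and does not follow from $K/N\to 0$ alone, as the statement's wording suggests. The paper does not resolve the conditional-in-$x$ issue you flag either, since it treats $\hat{p}_{\text{CR}}$ as a classical histogram built from i.i.d.\ draws; in that reading you should take $\hat\pi_k$ to be the empirical bin frequency by definition rather than identifying it with the cross-entropy minimizer, because a universal approximator fit to distinct inputs drives the empirical cross-entropy toward zero by predicting the one-hot training labels, not bin frequencies.
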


\begin{proposition} [Asymptotic Consistency for CR-VBLL.]
    Let $\hat{p}_{\text{VBLL}}(y|x)$ be the estimator induced by the VBLL variational posterior $q(w)$. As $N \to \infty$, the KL-divergence regularization term in the ELBO vanishes relative to the data likelihood, causing the posterior $q(w)$ to contract to the Maximum Likelihood Estimate (MLE).
    Consequently, the CR-VBLL estimator inherits the consistency of the base histogram estimator (Lemma \ref{lemma:cr_base}):
    \begin{equation}
        D_{\text{KL}}\left(p(y|x) \| \hat{p}_{\text{VBLL}}(y|x)\right) \xrightarrow{P} 0
    \end{equation}
\end{proposition}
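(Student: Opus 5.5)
The plan mirrors the QR-VBLL argument: first posterior contraction, then consistency of the point (MLE) estimator, then a passage from the discrete histogram to the continuous density in KL.

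\textbf{Step 1: contraction of $q(w)$ to the MLE.} Write the CR-VBLL objective \eqref{eq:elbo_cr}, divided by $N$. The data term is an average of $N$ expected cross-entropies, while $\beta\KL(q(w)\|p(w))/N \to 0$ for fixed $K$ and $D$. Under the BvM-type condition (the analogue of Assumption~\ref{asm:bvm} for the multinomial-logit last layer, identifiable by Assumption~\ref{asm:model_capacity} with full-rank features), $q(w)$ concentrates at the minimizer $w^*_N$ of the empirical cross-entropy with covariance $S=O(1/N)$. Hence $s^2(x)=\phi(x)^\top S\phi(x)=O(\|\phi(x)\|^2/N)\to 0$ on bounded feature sets. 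Consequently the probit temperature $T(x)=\sqrt{1+\eta s^2(x)}\to 1$. Since the Jensen gap in the training surrogate is $s^2/2$, it also vanishes. So $\hat{\mathbf p}_{\text{VBLL}}(x)\to \text{Softmax}(\bar w^\top\phi(x))$, and this converges to the MLE softmax output by continuity.

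\textbf{Step 2: the MLE histogram is consistent.} With $D\ge K-1$ (Assumption~\ref{asm:model_capacity}, Remark~\ref{rem:softmax_bottleneck}), the softmax over logits can represent any interior point of the simplex, so there is no softmax bottleneck. Combined with universal approximation of the backbone, the population cross-entropy minimizer equals the true bin masses $\pi_k(x)=\int_{B_k}p(y|x)\,dy$. The empirical minimizer therefore converges to these masses at the rate of a multinomial frequency estimator. Define $\hat p_{\text{VBLL}}(y|x)=\hat p_k(x)/\Delta$ for $y\in B_k$. Then Lemma~\ref{lemma:cr_base}, applied with $K\to\infty$, $\Delta\to0$ and $K/N\to0$ under Assumption~\ref{asm:smooth_dist}, gives $\text{IMSE}\to0$.

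\textbf{Step 3: upgrade IMSE to KL, the main obstacle.} $L^2$ convergence does not directly give KL convergence, because KL blows up wherever $\hat p$ is near zero but $p$ is not. I would bound KL by the $\chi^2$ divergence, $\KL(p\|\hat p)\le\int (p-\hat p)^2/\hat p\,dy$. This requires a lower bound on $\hat p$ over the support of $p$.

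I would first try two routes to that lower bound. The first uses the fact that the probit tempering and softmax never output exact zeros, together with the Lipschitz bound, which controls the bias term $|\pi_k/\Delta-p(y|x)|\le L\Delta$. The second restricts to a compact support on which $p(y|x)\ge p_{\min}>0$. In that case, once the uniform error $\sup|\hat p-p|<p_{\min}/2$ holds with probability tending to one, we get $\KL\le (2/p_{\min})\,\|p-\hat p\|_2^2\xrightarrow{P}0$. The uniform error needed here comes from $\max_k|\hat p_k-\pi_k|/\Delta\to0$, which needs $N\Delta^2\to\infty$ (up to logs); this is a mildly stronger rate condition than $K/N\to0$, and I would impose it. If the density can vanish, I would instead mix with a vanishing uniform floor, or prove convergence in total variation and state KL under the positivity assumption.
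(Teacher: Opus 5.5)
Your skeleton matches the paper's: the per-bin Lipschitz bias bound $|p(y|x)-\pi_k^*(x)/\Delta|\le L\Delta$, plus consistency of the cross-entropy estimate of the bin masses. The paper's proof, however, works with the untempered softmax and never argues the contraction of $q(w)$; that contraction is only asserted in the statement. It also jumps from $\hat\pi(x)\xrightarrow{P}\pi^*(x)$ and the bias bound to $\mathbb{E}_x[\KL(p\|\hat p)]\to0$, without controlling $\sum_k\pi_k\log(\pi_k/\hat\pi_k)$ as $K\to\infty$. So your Steps 1 and 3 target real holes. Adding a positivity hypothesis is a legitimate strengthening of the statement, because some condition of this kind is needed: already in the unconditional case, a bin of small positive mass that receives no samples gives the empirical-frequency histogram infinite KL.

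The genuine gap is in Step 2, in the claim that the fitted bin probabilities at a fixed $x$ converge ``at the rate of a multinomial frequency estimator.'' That rate, like Lemma~\ref{lemma:cr_base} itself, concerns $N$ i.i.d.\ draws from one density. Unless $x$ takes finitely many values, there are essentially no draws from $p(\cdot|x)$ at a given $x$. The network pools across inputs, so its error in $\pi(x)$ is a nonparametric regression error in $x$, typically much slower. Step 3 rests entirely on $\max_k|\hat p_k(x)-\pi_k(x)|=o_P(\Delta)$, so as written it does not go through even under your added hypotheses. The condition $N\Delta^2\to\infty$ is calibrated to the wrong rate; you would have to posit a uniform conditional rate $r_N$ with $r_N/\Delta\to0$.

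There is also a limit mismatch. Step 1 holds $K$ and $D$ fixed, while Steps 2--3 send $K\to\infty$ with $D\ge K-1$. Since $\KL(q(w)\|p(w))$ at a contracted posterior grows like $KD\log N$, you need a joint condition such as $KD\log N/N\to0$. You do not need a BvM assumption for Step 1, because under the Jensen surrogate the objective separates. For a fixed backbone, $\bar W$ is exactly the cross-entropy minimizer of the mean logits with ridge penalty $\tfrac{\beta}{2}\|\bar W\|_F^2$. For a standard normal prior, $S^{-1}=I+(\beta K)^{-1}\sum_i\phi(x_i)\phi(x_i)^\top$, which gives $s^2(x)=O(\beta K\|\phi(x)\|^2/N)$ under full-rank features.

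A route that needs much less, and that the paper's entropy expansion is implicitly heading toward, avoids IMSE and sup-norm control altogether. Write $\hat\pi=\text{Softmax}(m/T)$ for the bin masses of $\hat p_{\text{VBLL}}$, and $\tilde\pi=\text{Softmax}(m)$ for the untempered ones. Let $\bar p$ be the histogram built from the true masses $\pi_k$. Since $\hat p/\bar p=\hat\pi_k/\pi_k$ is constant on each $B_k$, you have exactly
$\KL(p\|\hat p)=\KL(p\|\bar p)+\KL(\pi\|\tilde\pi)+\sum_k\pi_k\log(\tilde\pi_k/\hat\pi_k)$.
\begin{itemize}
\item The first term is at most $L(y_{\max}-y_{\min})\Delta$ with no positivity assumption. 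Bound it bin by bin by the $\chi^2$ divergence, and use $p\le\bar p_k+L\Delta$ on $B_k$.
\item The $x$-average of the second term is exactly the excess population cross-entropy risk of the trained model, which is what cross-entropy training controls. A floor or positivity condition is needed only to tame the unbounded log loss in that excess-risk bound.
\item The third term carries your Step 1 and is at most $\tfrac{\eta}{2}s^2(x)\,(\max_k m_k-\min_k m_k)$.
\end{itemize}
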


\begin{proof}
Let the true conditional density be $p(y|x)$ defined on compact support $\mathcal{Y} = [y_{\min}, y_{\max}]$. \modelname{} discretizes $\mathcal{Y}$ into $K$ bins $B_1, \dots, B_K$ with centers $z_k$ and width $\Delta = (y_{\max} - y_{\min})/K$. The estimated density is given by the histogram estimator:
\[
\hat{p}_{\text{CR}}(y|x) = \sum_{k=1}^K \pi_k(x) \cdot \frac{\mathbb{I}\{y \in B_k\}}{\Delta},
\]
where $\pi_k(x) = \text{Softmax}_k(\phi(x))$ represents the probability mass assigned to bin $k$.

The Kullback-Leibler (KL) divergence between the true and estimated density decomposes as:
\begin{align*}
D_{\text{KL}}(p\|\hat{p}_{\text{CR}}) &= \int_{\mathcal{Y}} p(y|x) \log\frac{p(y|x)}{\hat{p}_{\text{CR}}(y|x)} dy \\
&= \underbrace{\int p(y|x) \log p(y|x)\, dy}_{-\mathcal{H}(p) \text{ (Entropy)}} - \sum_{k=1}^K \int_{B_k} p(y|x) \log\left(\frac{\pi_k(x)}{\Delta}\right) dy.
\end{align*}

\textbf{Step 1: Approximation Error (Bias).} By Assumption \ref{asm:smooth_dist} (Distribution Smoothness), $p(y|x)$ is Lipschitz continuous with constant $L$. For histogram estimators, while the integrated squared bias component of the IMSE is bounded by $\mathcal{O}(\Delta^2)$ \citep[Theorem 2.1]{scott1979optimal}, the absolute pointwise approximation bias is bounded by $\mathcal{O}(\Delta)$. Specifically, for any $y \in B_k$, the deviation from the true bin probability density is bounded:
% \textbf{Step 1: Approximation Error (Bias).} 
% By {Assumption \ref{asm:smooth_dist}} (Distribution Smoothness), $p(y|x)$ is Lipschitz continuous with constant $L$. For histogram estimators, the approximation bias is bounded by $O(\Delta^2)$ \citep[Theorem 2.1]{scott1979optimal}. Specifically, for any $y \in B_k$, the deviation from the true bin probability density is bounded:
\[
\left|p(y|x) - \frac{\pi_k^*(x)}{\Delta}\right| \leq L\Delta,
\]
where $\pi_k^*(x) = \int_{B_k} p(y|x) dy$ is the true probability mass of bin $k$. Thus, as the bin width $\Delta \to 0$ (implying $K \to \infty$), the approximation error vanishes.

\textbf{Step 2: Estimation Error (Variance).} 
By {Assumption \ref{asm:model_capacity}} (Model Capacity), the neural network backbone $\phi(x)$ is trained with categorical cross-entropy, which is equivalent to minimizing the discrete KL divergence $D_{\text{KL}}(\pi^*(x)\|\pi(x))$. Given the universal approximation property of neural networks and the consistency of M-estimators, the estimated probabilities converge in probability to the true bin probabilities: $\hat{\pi}(x) \xrightarrow{P} \pi^*(x)$ as $N \to \infty$.

\textbf{Conclusion.} 
Combining both steps, as the sample size $N \to \infty$ and bin count $K \to \infty$ (such that $K/N \to 0$ to ensure sufficient samples per bin), the expected KL divergence converges to zero:
\[
\mathbb{E}_{x}\left[D_{\text{KL}}(p(y|x) \| \hat{p}_{\text{CR}}(y|x))\right] \to 0.
\]
This proves that CR-VBLL is a consistent density estimator for Lipschitz continuous targets.
\end{proof}

% \subsection{Finite-Bin Error Bound (Corollary A.1)}
% \label{app:proof_corollary_1_1}

In production recommendation systems, the number of bins $K$ is often fixed (e.g., $K=100$) for computational efficiency. We provide the detailed error bound for this practical setting, following the derivation in \citet{scott1979optimal}.

\clearpage

\section{Additional Experimental Details}
\label{app:experiments}

Discretized Negative Log-Likelihood (NLL). To benchmark continuous and discrete models on a unified scale, we discretize the target domain $[y_{\min}, y_{\max}]$ into $K=10$ bins $B_1, \dots, B_K$.For a continuous model predicting density $p(y|x)$ (or CDF $F(y|x)$), the probability mass for bin $B_k = [l_k, u_k]$ is computed as:$$\hat{P}(y \in B_k) = \int_{l_k}^{u_k} p(y|x) dy = F(u_k|x) - F(l_k|x)$$The NLL is then the negative log-probability of the bin containing the true $y_i$:$$NLL = -\frac{1}{N} \sum_{i=1}^N \sum_{k=1}^K \mathbb{I}(y_i \in B_k) \log \hat{P}(y \in B_k)$$

Discretized Expected Calibration Error (ECE). 
To rigorously evaluate conditional, instance-level confidence calibration, we compute the standard ECE by grouping the predictions into $M=10$ equally spaced confidence bins. For each test sample $i$, we identify the predicted target bin $\hat{k}_i = \text{argmax}_k \hat{P}(y_i \in B_k)$ and its associated predictive confidence $\hat{c}_i = \max_k \hat{P}(y_i \in B_k)$. We then partition the $N$ samples into $M$ subsets $\{I_m\}_{m=1}^M$ based on their confidence scores $\hat{c}_i$. The ECE is calculated as the weighted average of the absolute difference between the empirical accuracy and the average confidence within each bin:
\begin{equation}
    \text{ECE} = \sum_{m=1}^M \frac{|I_m|}{N} |\text{acc}(I_m) - \text{conf}(I_m)|
\end{equation}
where $\text{conf}(I_m) = \frac{1}{|I_m|} \sum_{i \in I_m} \hat{c}_i$ is the average confidence of samples in bin $m$, and $\text{acc}(I_m) = \frac{1}{|I_m|} \sum_{i \in I_m} \mathbf{1}(y_i \in B_{\hat{k}_i})$ is the empirical accuracy of the top predictions in that bin.
% Using the same $K=10$ bins, we compute ECE as the weighted deviation between confidence and accuracy:$$ECE = \sum_{k=1}^{K} \frac{|N_k|}{N} \left| \text{acc}(B_k) - \text{conf}(B_k) \right|$$where $\text{conf}(B_k)$ is the average predicted mass $\hat{P}(y \in B_k)$ for all samples, and $\text{acc}(B_k)$ is the fraction of samples where the true $y_i$ actually fell into $B_k$.

\subsection{Implementation and Hyperparameters}
All models were implemented in PyTorch and trained on a single NVIDIA A100 GPU. 
% We adopted a two-stage training protocol to strictly simulate production constraints: (1) The heavy backbone was pre-trained for a \textbf{single epoch} (online learning), and (2) the lightweight uncertainty heads were post-trained for 2--5 epochs on the fixed features.

\textbf{1. Data Preprocessing \& Feature Engineering.}
\begin{itemize}
    \item \textbf{ID Encoding:} WeChat has user ID embedding, pre-trained video content understanding embedding and video ID embedding; KuaiRec has user ID embedding, and video ID embedding; Uber has customer ID embedding and location name embedding.
    \item \textbf{Duration Transformation:} To capture non-linear biases (e.g., short vs. long videos), we discretized the continuous video duration into \textbf{100 quantile bins} ($K_{QR}=100$) following \citet{sun2024cread}. These bin indices were mapped to learnable embedding vectors of dimension $D_{dur}$, allowing the model to learn distinct latent representations for different duration cohorts.
    \item \textbf{Target Scaling:} The watch time ratio $y$ was clipped to the range $[0, 3.0]$ to mitigate the impact of outliers.
\end{itemize}

\textbf{2. Backbone Architecture.}
We employed a standard Multi-Tower architecture shared across all baselines:
\begin{itemize}
    \item \textbf{Embeddings:} User/Item embeddings of dimension $d=16$ (WeChat) and $d=32$ (KuaiRec).
    \item \textbf{Feature Interaction:} Concatenation of ID and Duration embeddings, followed by a 3-layer MLP ($256 \to 128 \to 64$) with ReLU activations and Dropout ($p=0.1$).
    \item \textbf{Spectral Normalization:} Applied to all linear layers in the backbone and also applied input concatenation at the last layer such that the bi-Lipschitz property in Assumption \ref{asm:spectral_norm} is guaranteed for SNGP, CR-VBLL and QR-VBLL.
\end{itemize}

\textbf{3. Training Configuration.}
\begin{table}[h]
    \centering
    \caption{Hyperparameter settings.}
    \label{tab:hyperparams}
    \begin{tabular}{l|ccc}
    \toprule
    Parameter & WeChat & KuaiRec & Uber \\
    \midrule
    Batch Size & 2048 & 2048 & 2048\\
    Learning Rate & $1e^{-3}$ & $1e^{-3}$ & $1e^{-3}$ \\
    Optimizer & Adam & Adam & Adam\\
    Training Bin Count  & 40 & 40 & 40\\
    Training Quantile Numbers & 100 & 100 & 100\\
    Evaluation Bin Count  & 10 & 10 & 10\\
    % VBLL KL Weight ($\beta$) & $1e^{-6}$ & $1e^{-6}$ \\
    % \midrule
    {Training Epochs} & 5 & 5 & 3 \\
    {MDN Mixture Number} & 5 & 5 & 5 \\
    {Train/Test Ratio} & 80\%/20\% & 80\%/20\% & 80\%/20\%\\
    % \textbf{Head Epochs} & \textbf{2} & \textbf{5} \\
    \bottomrule
    \end{tabular}
\end{table}
% We note that we choose the best epoch checkpoint for each method for comparison since different methods may have different convergence speed. 
Choice of bin counts and quantile numbers are covered in Appendix \ref{app:sensitivity}.

\subsection{Full Experiment Results Table}
In our experimental evaluation, we tailored our reporting strategy to the training stability of each dataset.
% For WeChat and KuaiRec, where all models showed stable and monotonic improvement as epochs increased, we simply reported the results from the 5th epoch in \cref{tab:kuairec_full,tab:wechat_full}. 
% While the Uber dataset presented more volatile training trends where most baseline methods show performance degradation as the number of epochs increases, we report the results from the checkpoint that achieved the best performance (lowest NLL) on an independent, held-out validation set within the first five epochs. 
% This standard early-stopping protocol ensures a fair comparison across all methods while strictly preventing any test set data leakage.
% While the Uber dataset presented a more volatile training trends that most methods show worse performance as number of epochs increase, we then report the best checkpoint achieved within the first five epochs rather than the final one in \cref{tab:uber_full} for fair comparison. 

\begin{table*}[h]
    \centering
    \caption{Full experimental results on the WeChat dataset. We report Mean $\pm$ Standard Deviation over 5 independent runs. QR-VBLL demonstrates state-of-the-art NLL performance among quantile regression methods.}
    \label{tab:wechat_full}
    \resizebox{0.95\textwidth}{!}{%
    \begin{tabular}{lcccc}
        \toprule
        \textbf{Method} & \textbf{NLL} $\downarrow$ & \textbf{RMSE} $\downarrow$ & \textbf{CRPS} $\downarrow$ & \textbf{ECE} $\downarrow$ \\
        \midrule
        \multicolumn{5}{l}{\textit{Parametric Regression Baselines}} \\
        Vanilla Gaussian & 2.1657 $\pm$ 0.0081 & 1.0466 $\pm$ 0.2960 & 0.4380 $\pm$ 0.0004 & 0.0264 $\pm$ 0.0017 \\
        SNGP (Reg) & 2.2730 $\pm$ 0.0249 & \textbf{0.7740 $\pm$ 0.0024} & 0.4439 $\pm$ 0.0038 & 0.0302 $\pm$ 0.0045 \\
        VBLL (Reg) & 2.1708 $\pm$ 0.0090 & 0.8327 $\pm$ 0.0299 & 0.4315 $\pm$ 0.0019 & 0.0251 $\pm$ 0.0006 \\
        DER & 2.1443 $\pm$ 0.0089 & 0.7811 $\pm$ 0.0016 & \textbf{0.4226 $\pm$ 0.0021} & 0.0263 $\pm$ 0.0008 \\
        MDN & \textbf{1.9515 $\pm$ 0.0133} & 1.2464 $\pm$ 0.1343 & 0.4418 $\pm$ 0.0052 & \textbf{0.0070 $\pm$ 0.0012} \\
        
        \midrule
        \multicolumn{5}{l}{\textit{Quantile Regression (QR) Methods}} \\
        Vanilla QR & 2.0317 $\pm$ 0.0107 & 0.7629 $\pm$ 0.0005 & 0.4156 $\pm$ 0.0004 & 0.0066 $\pm$ 0.0006 \\
        CQR & 2.0446 $\pm$ 0.0042 & 0.7723 $\pm$ 0.0016 & 0.4226 $\pm$ 0.0009 & 0.0065 $\pm$ 0.0005 \\
        QR-MC Dropout & 2.0297 $\pm$ 0.0038 & 0.7646 $\pm$ 0.0009 & 0.4172 $\pm$ 0.0006 & \textbf{0.0062 $\pm$ 0.0002} \\
        \textbf{QR-VBLL (Ours)} & \textbf{1.9616 $\pm$ 0.0004} & \textbf{0.7614 $\pm$ 0.0005} & \textbf{0.4134 $\pm$ 0.0004} & 0.0067 $\pm$ 0.0004 \\

        \midrule
        \multicolumn{5}{l}{\textit{Classification Restoration (CR) Methods}} \\
        Vanilla CR & 1.9199 $\pm$ 0.0074 & 0.7664 $\pm$ 0.0026 & \textbf{0.4063 $\pm$ 0.0003} & 0.0126 $\pm$ 0.0007 \\
        CR-SNGP & 1.9377 $\pm$ 0.0019 & 0.7756 $\pm$ 0.0007 & 0.4151 $\pm$ 0.0004 & 0.0133 $\pm$ 0.0006 \\
        CR-MC Dropout & 1.9245 $\pm$ 0.0042 & 0.7687 $\pm$ 0.0018 & 0.4072 $\pm$ 0.0007 & \textbf{0.0119 $\pm$ 0.0002} \\
        \textbf{CR-VBLL (Ours)} & \textbf{1.9189 $\pm$ 0.0028} & \textbf{0.7654 $\pm$ 0.0010} & 0.4068 $\pm$ 0.0006 & \textbf{0.0119 $\pm$ 0.0004} \\
        
        \midrule
        \midrule
        \multicolumn{5}{l}{\textit{Ensemble Benchmarks}} \\
        Ensemble (Reg) & 2.1502 $\pm$ 0.0021 & 0.8035 $\pm$ 0.0303 & 0.4315 $\pm$ 0.0005 & 0.0300 $\pm$ 0.0012 \\
        CR-Ensemble & \textbf{1.8981 $\pm$ 0.0022} & {0.7547 $\pm$ 0.0015} & {0.3982 $\pm$ 0.0007} & 0.0137 $\pm$ 0.0001 \\
        QR-Ensemble & 1.9973 $\pm$ 0.0009 & \textbf{0.7506 $\pm$ 0.0003} & \textbf{0.4076 $\pm$ 0.0002} & \textbf{0.0055 $\pm$ 0.0003} \\
        \bottomrule
    \end{tabular}
    }
\end{table*}

\begin{table*}[h]
    \centering
    \caption{Full experimental results on the KuaiRec dataset. We report Mean $\pm$ Standard Deviation over 5 independent runs. \textbf{QR-VBLL} (using the SN+Raw backbone) achieves NLL and CRPS virtually identical to the 5-model Ensemble, demonstrating high efficiency.}
    \label{tab:kuairec_full}
    \resizebox{0.95\textwidth}{!}{%
    \begin{tabular}{lcccc}
        \toprule
        \textbf{Method} & \textbf{NLL} $\downarrow$ & \textbf{RMSE} $\downarrow$ & \textbf{CRPS} $\downarrow$ & \textbf{ECE} $\downarrow$ \\
        \midrule
        \multicolumn{5}{l}{\textit{Parametric Regression Baselines}} \\
        Vanilla Gaussian & 1.5070 $\pm$ 0.0034 & 0.4457 $\pm$ 0.0006 & \textbf{0.2093 $\pm$ 0.0006} & 0.0291 $\pm$ 0.0011 \\
        SNGP (Reg) & 1.7470 $\pm$ 0.0143 & \textbf{0.4440 $\pm$ 0.0002} & 0.2394 $\pm$ 0.0025 & 0.0497 $\pm$ 0.0003 \\
        VBLL (Reg) & 1.5184 $\pm$ 0.0024 & 0.4484 $\pm$ 0.0001 & 0.2117 $\pm$ 0.0002 & 0.0327 $\pm$ 0.0005 \\
        DER & 1.4163 $\pm$ 0.0032 & 0.4580 $\pm$ 0.0005 & 0.2131 $\pm$ 0.0002 & 0.0231 $\pm$ 0.0003 \\
        MDN & \textbf{1.2943 $\pm$ 0.0039} & 0.4495 $\pm$ 0.0005 & 0.2117 $\pm$ 0.0009 & \textbf{0.0136 $\pm$ 0.0002} \\
        
        \midrule
        \multicolumn{5}{l}{\textit{Quantile Regression (QR) Methods}} \\
        Vanilla QR & 1.3938 $\pm$ 0.0024 & 0.4438 $\pm$ 0.0001 & 0.1968 $\pm$ 0.0002 & 0.0136 $\pm$ 0.0007 \\
        CQR & 1.4018 $\pm$ 0.0055 & 0.4447 $\pm$ 0.0003 & 0.1976 $\pm$ 0.0003 & 0.0148 $\pm$ 0.0015 \\
        QR-MC Dropout & 1.4049 $\pm$ 0.0041 & 0.4439 $\pm$ 0.0001 & 0.1978 $\pm$ 0.0000 & 0.0145 $\pm$ 0.0016 \\
        \textbf{QR-VBLL (Ours)} & \textbf{1.3716 $\pm$ 0.0048} & \textbf{0.4433 $\pm$ 0.0001} & \textbf{0.1964 $\pm$ 0.0002} & \textbf{0.0122 $\pm$ 0.0007} \\

        \midrule
        \multicolumn{5}{l}{\textit{Classification Restoration (CR) Methods}} \\
        Vanilla CR & 1.3978 $\pm$ 0.0049 & 0.4550 $\pm$ 0.0009 & 0.1991 $\pm$ 0.0001 & 0.0237 $\pm$ 0.0010 \\
        CR-SNGP & 1.3874 $\pm$ 0.0032 & 0.4584 $\pm$ 0.0015 & 0.1998 $\pm$ 0.0002 & \textbf{0.0223 $\pm$ 0.0006} \\
        CR-MC Dropout & 1.3937 $\pm$ 0.0010 & 0.4551 $\pm$ 0.0006 & \textbf{0.1987 $\pm$ 0.0001} & 0.0234 $\pm$ 0.0004 \\
        \textbf{CR-VBLL (Ours)} & \textbf{1.3867 $\pm$ 0.0015} & \textbf{0.4549 $\pm$ 0.0007} & 0.1989 $\pm$ 0.0001 & 0.0235 $\pm$ 0.0002 \\
        
        \midrule
        \midrule
        \multicolumn{5}{l}{\textit{Ensemble Benchmarks}} \\
        Ensemble (Reg) & 1.4980 $\pm$ 0.0022 & 0.4450 $\pm$ 0.0004 & 0.2085 $\pm$ 0.0003 & 0.0285 $\pm$ 0.0004 \\
        CR-Ensemble & 1.3880 $\pm$ 0.0006 & 0.4540 $\pm$ 0.0005 & 0.1980 $\pm$ 0.0000 & 0.0237 $\pm$ 0.0005 \\
        QR-Ensemble & \textbf{1.3858 $\pm$ 0.0024} & \textbf{0.4433 $\pm$ 0.0001} & \textbf{0.1958 $\pm$ 0.0002} & \textbf{0.0136 $\pm$ 0.0008} \\
        \bottomrule
    \end{tabular}
    }
\end{table*}

\begin{table*}[h]
    \centering
    \caption{Full experimental results on the Uber dataset. We report Mean $\pm$ Standard Deviation over 5 independent runs. \textbf{CR-VBLL} and \textbf{QR-VBLL} demonstrate superior performance across all metrics, with significant gains in NLL compared to baselines.}
    \label{tab:uber_full}
    \resizebox{0.95\textwidth}{!}{%
    \begin{tabular}{lcccc}
        \toprule
        \textbf{Method} & \textbf{NLL} $\downarrow$ & \textbf{RMSE} $\downarrow$ & \textbf{CRPS} $\downarrow$ & \textbf{ECE} $\downarrow$ \\
        \midrule
        \multicolumn{5}{l}{\textit{Parametric Regression Baselines}} \\
        Vanilla Gaussian & 2.6766 $\pm$ 0.3467 & 0.4686 $\pm$ 0.0008 & 0.3039 $\pm$ 0.0077 & 0.0848 $\pm$ 0.0106 \\
        SNGP (Reg) & \textbf{1.8745 $\pm$ 0.0046} & \textbf{0.4516 $\pm$ 0.0005} & \textbf{0.2634 $\pm$ 0.0007} & \textbf{0.0341 $\pm$ 0.0008} \\
        VBLL (Reg) & 3.3107 $\pm$ 0.2359 & 0.4533 $\pm$ 0.0016 & 0.3061 $\pm$ 0.0022 & 0.1043 $\pm$ 0.0039 \\
        DER & 2.8354 $\pm$ 0.0250 & 0.4541 $\pm$ 0.0002 & 0.3299 $\pm$ 0.0019 & 0.1196 $\pm$ 0.0015 \\
        MDN & 4.0078 $\pm$ 0.4319 & 0.4606 $\pm$ 0.0032 & 0.3264 $\pm$ 0.0063 & 0.1181 $\pm$ 0.0043 \\

        \midrule
        \multicolumn{5}{l}{\textit{Quantile Regression (QR) Methods}} \\
        Vanilla QR & 2.3728 $\pm$ 0.4327 & 0.4578 $\pm$ 0.0008 & 0.2670 $\pm$ 0.0019 & 0.0370 $\pm$ 0.0037 \\
        CQR & 1.9435 $\pm$ 0.0361 & 0.4590 $\pm$ 0.0020 & 0.2666 $\pm$ 0.0025 & 0.0379 $\pm$ 0.0053 \\
        QR-MC Dropout & 2.1179 $\pm$ 0.1991 & 0.4606 $\pm$ 0.0002 & 0.2686 $\pm$ 0.0006 & 0.0357 $\pm$ 0.0022 \\ 
        \textbf{QR-VBLL (Ours)} & \textbf{1.7830 $\pm$ 0.0028} & \textbf{0.4495 $\pm$ 0.0002} & \textbf{0.2560 $\pm$ 0.0002} & \textbf{0.0080 $\pm$ 0.0013} \\

        \midrule
        \multicolumn{5}{l}{\textit{Classification Restoration (CR) Methods}} \\
        Vanilla CR & 2.5328 $\pm$ 0.0509 & 0.4550 $\pm$ 0.0057 & 0.2911 $\pm$ 0.0125 & 0.0804 $\pm$ 0.0115 \\
        CR-SNGP & 1.7892 $\pm$ 0.0007 & 0.4512 $\pm$ 0.0001 & \textbf{0.2559 $\pm$ 0.0000} & 0.0087 $\pm$ 0.0010 \\
        CR-MC Dropout & 2.2687 $\pm$ 0.0777 & 0.4584 $\pm$ 0.0102 & 0.2882 $\pm$ 0.0152 & 0.0719 $\pm$ 0.0012 \\
        \textbf{CR-VBLL (Ours)} & \textbf{1.7839 $\pm$ 0.0005} & \textbf{0.4505 $\pm$ 0.0002} & \textbf{0.2559 $\pm$ 0.0000} & \textbf{0.0072 $\pm$ 0.0004} \\
        
        \midrule
        \midrule
        \multicolumn{5}{l}{\textit{Ensemble Benchmarks}} \\
        Ensemble (Reg) & 2.7145 $\pm$ 0.1050 & 0.4686 $\pm$ 0.0028 & 0.3068 $\pm$ 0.0008 & 0.0903 $\pm$ 0.0036 \\
        CR-Ensemble & 2.3305 $\pm$ 0.0502 & \textbf{0.4506 $\pm$ 0.0002} & 0.2693 $\pm$ 0.0029 & 0.0506 $\pm$ 0.0039 \\
        QR-Ensemble & \textbf{2.1280 $\pm$ 0.0587} & 0.4598 $\pm$ 0.0005 & \textbf{0.2686 $\pm$ 0.0006} & \textbf{0.0393 $\pm$ 0.0006} \\
        \bottomrule
    \end{tabular}
    }
\end{table*}

% In \cref{fig:uber_nll_learning_curves}, we present the NLL learning curves for the Uber dataset. 
% Most baseline methods (e.g., Vanilla QR and Gaussian) suffer from severe overfitting, with errors spiking after the second epoch.
% In contrast, both CR-VBLL and SNGP (Reg) demonstrate significant resistance to noise. 
% While SNGP achieves stability through spectral normalization which enforces smoothness by bounding the network's Lipschitz constant, CR-VBLL achieves robustness through the discretization of the target space (CR) combined with variational inference (VBLL). 
% The success of these two distinct approaches strongly supports the hypothesis that the Uber dataset requires explicit uncertainty modeling and regularization to prevent the memorization of outliers.
% \begin{figure}[h]
%     \centering
%     \includegraphics[width=0.8\textwidth]{uber_nll_learning_curves.png}
%     \caption{\textbf{NLL Learning Curves on the Uber Dataset (First 5 Epochs).} The plot illustrates the NLL trajectories for all candidate methods, highlighting the severe overfitting and volatility observed in baselines (e.g., Vanilla QR, Vanilla Gaussian). In contrast, CR-VBLL demonstrates superior robustness, maintaining stable performance without degradation. Shaded regions represent the standard error.}
%     \label{fig:uber_nll_learning_curves}
% \end{figure}

\subsection{Ablation Studies}\label{app:ablation}

\subsubsection{Spectral Normalization with Input Concatenation}\label{app:sn_and_raw}
\begin{table}[h]
    \centering
    \caption{Ablation study on backbone architecture: Spectral Normalization (SN) vs. SN + Raw Features (SN+Raw) on KuaiRec. The SN+Raw architecture consistently improves NLL and RMSE for QR-VBLL, demonstrating that re-introducing raw features recovers information lost by the strict spectral constraint. \textbf{Bold} indicates the best performance per method.}
    \label{tab:ablation_backbone}
    \resizebox{0.95\columnwidth}{!}{%
    \begin{tabular}{llccccc}
        \toprule
        \textbf{Method} & \textbf{Backbone} & \textbf{NLL} $\downarrow$ & \textbf{RMSE} $\downarrow$ & \textbf{CRPS} $\downarrow$ & \textbf{ECE} $\downarrow$ \\
        % \midrule
        % \multirow{2}{*}{Vanilla CR} 
        % & SN & \textbf{1.3947 $\pm$ 0.0051} & 0.4564 $\pm$ 0.0003 & \textbf{0.1994 $\pm$ 0.0001} & \textbf{0.0219 $\pm$ 0.0012} \\
        % & SN+Raw & 1.3978 $\pm$ 0.0049 & \textbf{0.4550 $\pm$ 0.0009} & 0.1991 $\pm$ 0.0001 & 0.0237 $\pm$ 0.0010\\
        % \midrule
        % \multirow{2}{*}{CR-VBLL} 
        % & SN & 1.3952 $\pm$ 0.0006 & 0.4556 $\pm$ 0.0001 & \textbf{0.1987 $\pm$ 0.0000} & 0.0240 $\pm$ 0.0001 \\                          
        % & {SN+Raw} & \textbf{1.3885 $\pm$ 0.0032} & \textbf{0.4549 $\pm$ 0.0005} & 0.1996 $\pm$ 0.0000 & 0.0240 $\pm$ 0.0008 \\
        % \midrule
        \midrule
        \multirow{2}{*}{Vanilla QR}  
        & SN & 1.4047 $\pm$ 0.0005 & 0.4451 $\pm$ 0.0004 & 0.1981 $\pm$ 0.0001 & 0.0142 $\pm$ 0.0005 \\
        & SN+Raw & \textbf{1.3938 $\pm$ 0.0024} & \textbf{0.4438 $\pm$ 0.0001} & \textbf{0.1968 $\pm$ 0.0002} & \textbf{0.0136 $\pm$ 0.0007} \\
        \midrule
        \multirow{2}{*}{QR-VBLL} 
        & SN & 1.4043 $\pm$ 0.0025 & 0.4451 $\pm$ 0.0004 & 0.1985 $\pm$ 0.0002 & 0.0169 $\pm$ 0.0015 \\
        & {SN+Raw} & \textbf{1.3716 $\pm$ 0.0048} & \textbf{0.4433 $\pm$ 0.0001} & \textbf{0.1964 $\pm$ 0.0002} & \textbf{0.0122 $\pm$ 0.0007} \\
        \bottomrule
    \end{tabular}
    }
\end{table}

To rigorously validate our architectural design, we conducted an ablation study on the KuaiRec dataset comparing the standard Spectral Normalization (SN-Only) backbone against our proposed SN+Raw configuration. The results are summarized in Table \ref{tab:ablation_backbone}.

The analysis demonstrates that the \textbf{SN+Raw} architecture is the optimal choice for the VBLL framework, providing consistent gains in NLL and RMSE for both proposed methods:

% \begin{itemize}
%     \item \textbf{Quantile Regression (QR-VBLL):} 
    The impact here is decisive. Adding raw features reduces NLL from $1.404$ to $1.372$ and significantly improves calibration (ECE drops from $0.017$ to $0.012$). This confirms that while Spectral Normalization is necessary for OOD detection (Section \ref{sec:theory}), strictly constraining the Lipschitz constant limits the model's ability to scale output magnitudes for precise regression in the tails. Re-introducing raw features resolves this bottleneck.
    
%     \item \textbf{Classification Restoration (CR-VBLL):} For our method, SN+Raw consistently improves both NLL ($1.395 \to 1.389$) and RMSE ($0.456 \to 0.455$). Interestingly, while the \textit{Vanilla CR} baseline sees a slight degradation in NLL with raw features ($1.395 \to 1.398$), \textbf{CR-VBLL} effectively leverages the unconstrained features to refine the probability density estimate, achieving the best overall performance in the classification setting.
% \end{itemize}

Given that SN+Raw consistently enhances the predictive likelihood of VBLL models with negligible parameter overhead ($\sim 2-3\%$), we adopt it as the default backbone.

\subsection{Sensitivity Analysis of Discretization Resolution}
\label{app:sensitivity}

We conducted a comprehensive sensitivity analysis to determine the optimal discretization resolution for both methods: the number of bins $K_{CR}$ for CR-VBLL and the number of quantiles $K_{QR}$ for QR-VBLL. We evaluated the performance on the WeChat, KuaiRec, and Uber datasets across the range $\{20, 40, 60, 80, 100\}$. The results are visualized in Figure \ref{fig:bin_ablation} and Figure \ref{fig:qt_ablation}.

\begin{figure}[t]
    \centering
    \includegraphics[width=0.9\textwidth]{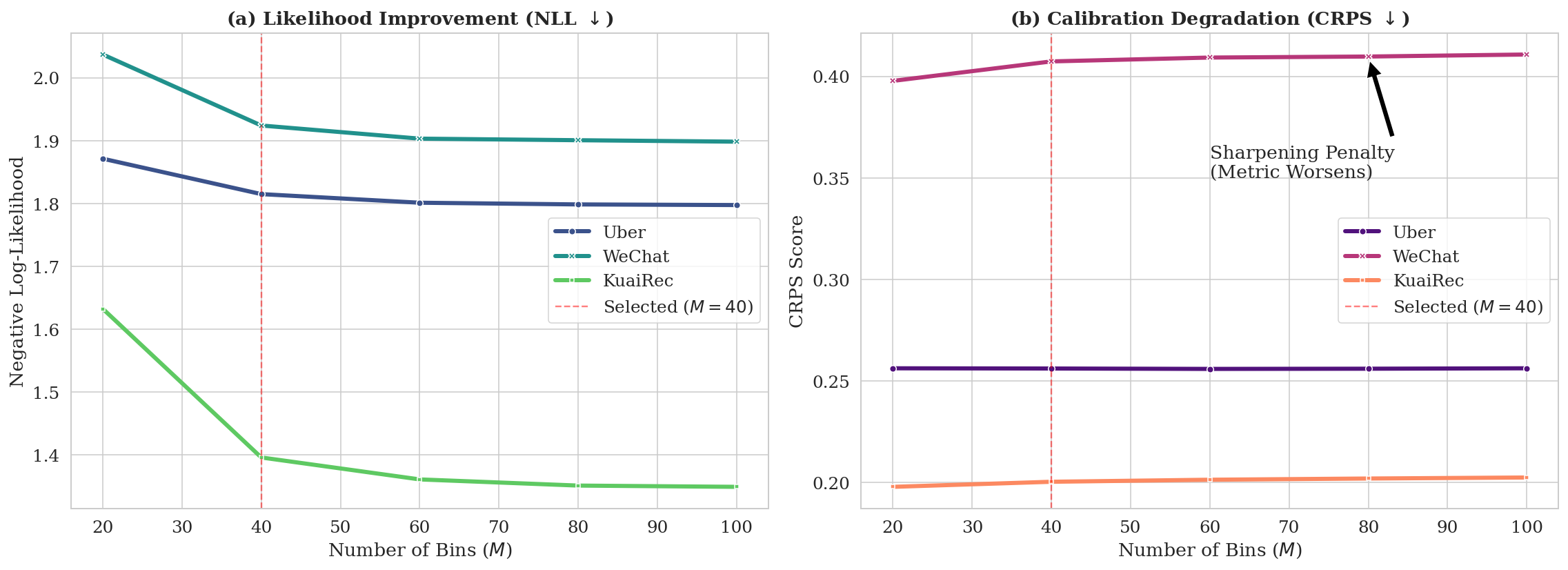}
    \caption{\textbf{CR-VBLL Sensitivity (Bins $K_{CR}$):} Efficiency vs. Calibration Trade-off. While increasing $K_{CR}$ from 20 to 40 yields significant NLL gains across all datasets (Left), further increasing $K_{CR}$ often leads to CRPS degradation (Right) due to the ``sharpening effect,'' where the model becomes overconfident in specific bins. We select $K_{CR}=40$ as the optimal balance between predictive likelihood and calibration stability.}
    \label{fig:bin_ablation}
\end{figure}

\begin{figure}[t]
    \centering
    \includegraphics[width=0.9\textwidth]{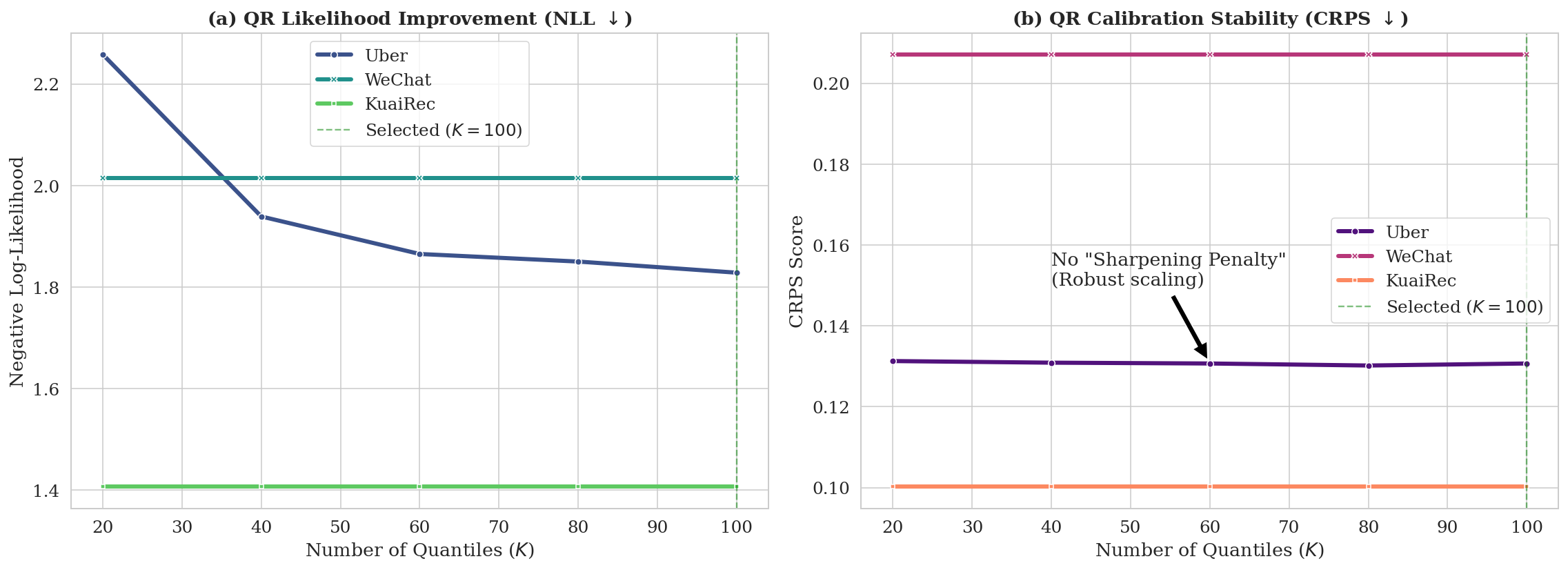}
    \caption{\textbf{QR-VBLL Sensitivity (Quantiles $K_{QR}$):} High-Resolution Robustness. Unlike CR, Quantile Regression benefits monotonically from higher resolution. On the sparse Uber dataset, NLL improves continuously as $K_{QR}$ increases (Left), while calibration (CRPS) remains stable without degradation (Right). We select $K_{QR}=100$ to maximize precision on difficult distributions without incurring penalties on simpler ones. Here the CRPS metrics in the curves are timed with a constant of 0.5.}
    \label{fig:qt_ablation}
\end{figure}

\textbf{Classification Restoration: The Efficiency Trade-off}
For CR-VBLL, Figure \ref{fig:bin_ablation} reveals a distinct trade-off between local likelihood and global calibration:
\begin{itemize}
    \item \textbf{NLL Saturation:} Increasing the bin count from 20 to 40 results in consistent NLL improvements across all datasets (e.g., significant drops on KuaiRec and Uber). However, beyond $M=40$, the marginal gain in likelihood diminishes significantly.
    \item \textbf{Calibration Degradation:} Crucially, for $M > 40$, we observe that CRPS often stagnates or degrades (particularly on WeChat and KuaiRec). This aligns with the theoretical "sharpening effect" \citep{gneiting2007strictly}: as the discrete distribution becomes highly granular, the model maximizes NLL by placing excessive mass on single bins, which penalizes the global cumulative ranked score if slightly misaligned.
\end{itemize}
\textbf{Decision:} We select \textbf{$M=40$} as the default. This setting captures the "knee" of the performance curve, ensuring high predictive accuracy and computational efficiency (keeping the final classification layer lightweight) while avoiding the calibration penalties associated with hyper-fine discretization.

\textbf{Quantile Regression: High-Resolution Robustness}
For QR-VBLL, Figure \ref{fig:qt_ablation} demonstrates a different behavioral pattern characterized by robustness:
\begin{itemize}
    \item \textbf{Monotonic Improvement:} On challenging, sparse distributions (Uber), increasing the quantile resolution $K$ provides strictly monotonic benefits for NLL ($2.26 \to 1.83$), indicating that high resolution is necessary to approximate irregular cumulative distribution functions.
    \item \textbf{No Overfitting:} Unlike the binning approach, increasing $K$ to 100 does not degrade CRPS on any dataset. The quantile estimates effectively approximate the continuous CDF without imposing artificial discrete boundaries, allowing the model to scale to high resolutions without "over-sharpening."
\end{itemize}
\textbf{Decision:} Given the absence of a performance penalty on dense data and the clear necessity for high resolution on sparse data, we standardize on \textbf{$K=100$} for all QR experiments.

\subsection{Ablation Study: Hybrid Acquisition Hyperparameter ($\gamma$)}
\label{app:active_learning_ablation}

To determine the optimal balance between exploration (epistemic uncertainty) and noise characterization (aleatoric uncertainty), we conduct an ablation study on the hyperparameter $\gamma$ in our hybrid acquisition function in Eqn \eqref{eq:acquisition_function}.

$\mathcal{A}(x) = s^2(x) + \gamma \sigma^2(x)$.

Table \ref{tab:ablation_lambda_full} and \ref{tab:ablation_lambda_cold} present the full results on WeChat. We compare various $\gamma$ values (trained on 70\% samples) against a \textbf{Random Sampling} baseline trained on the full dataset.

\textbf{Key Findings:}
\begin{itemize}
    \item \textbf{Benefit of Aleatoric Sampling ($\gamma > 0$):} Pure epistemic sampling ($\gamma=0$) yields a test NLL of 2.0283. Introducing a moderate amount of aleatoric uncertainty ($\gamma=0.1$) significantly improves this score, achieving the best Full NLL of \textbf{2.0206}. This confirms our hypothesis that sampling high-noise regions helps the model accurately calibrate its variance estimates, though penalizing it too aggressively ($\gamma \ge 0.3$) begins to degrade performance.
    \item \textbf{Efficiency vs. Random Baseline:} Notably, our optimal Hybrid strategy ($\gamma=0.1$) trained on only \textbf{560k} samples comfortably outperforms the pure Random Sampling baseline trained on the full \textbf{800k} dataset (NLL 2.0261).
    \item \textbf{Cold-Start Consistency:} These improvements are mirrored in the sparse Cold-Start subset. The $\gamma=0.1$ strategy achieves an NLL of 2.0157, significantly outperforming both the pure epistemic strategy (2.0167) and the 800k Random baseline (2.0176).
\end{itemize}

\begin{table*}[h]
    \centering
    \caption{\textbf{Ablation of Acquisition Hyperparameter $\gamma$ on WeChat (Full Test Set).} We compare different weights for aleatoric uncertainty against a Random Sampling baseline. The Hybrid strategy with $\gamma=0.1$ achieves the best performance across all metrics with only 560k samples, outperforming both the pure epistemic strategy ($\gamma=0$) and the pure Random baseline using 800k samples.}
    \label{tab:ablation_lambda_full}
    \resizebox{0.95\textwidth}{!}{%
    \begin{tabular}{l c c c c c}
        \toprule
        \textbf{Strategy} & \textbf{Samples} & \textbf{NLL} $\downarrow$ & \textbf{RMSE} $\downarrow$ & \textbf{CRPS} $\downarrow$ & \textbf{ECE} $\downarrow$ \\
        \midrule
        Random Baseline & 800k & 2.0261 $\pm$ 0.0017 & 0.8037 $\pm$ 0.0013 & 0.4438 $\pm$ 0.0008 & 0.0093 $\pm$ 0.0017 \\
        \midrule
        BALD-MCD & 560k & 2.0258 $\pm$ 0.0069 & \textbf{0.8033 $\pm$ 0.0021} & \textbf{0.4434 $\pm$ 0.0012} & 0.0100 $\pm$ 0.0013 \\
        BALD-VBLL & 560k & 2.0277 $\pm$ 0.0056 & 0.8063 $\pm$ 0.0038 & 0.4454 $\pm$ 0.0024 & 0.0138 $\pm$ 0.0020 \\
        Epistemic Only ($\gamma=0$) & 560k & 2.0283 $\pm$ 0.0063 & 0.8062 $\pm$ 0.0037 & 0.4452 $\pm$ 0.0024 & 0.0135 $\pm$ 0.0017 \\
        \textbf{Hybrid ($\gamma=0.1$)} & 560k & \textbf{2.0206 $\pm$ 0.0032} & {0.8034 $\pm$ 0.0010} & \textbf{0.4434 $\pm$ 0.0006} & \textbf{0.0091 $\pm$ 0.0007} \\
        Hybrid ($\gamma=0.2$) & 560k & 2.0225 $\pm$ 0.0119 & 0.8065 $\pm$ 0.0058 & 0.4456 $\pm$ 0.0038 & 0.0144 $\pm$ 0.0055 \\
        Hybrid ($\gamma=0.3$) & 560k & 2.0292 $\pm$ 0.0050 & 0.8073 $\pm$ 0.0027 & 0.4460 $\pm$ 0.0018 & 0.0130 $\pm$ 0.0022 \\
        Hybrid ($\gamma=0.5$) & 560k & 2.0284 $\pm$ 0.0121 & 0.8078 $\pm$ 0.0029 & 0.4464 $\pm$ 0.0020 & 0.0123 $\pm$ 0.0045 \\
        \bottomrule
    \end{tabular}
    }
\end{table*}

\begin{table*}[h]
    \centering
    \caption{\textbf{Ablation of Acquisition Hyperparameter $\gamma$ on WeChat (Cold-Start Users).} Evaluating the active learning strategies on the sparse, cold-start subset. Incorporating aleatoric uncertainty ($\gamma > 0$) consistently improves calibration and predictive likelihood over purely epistemic sampling.}
    \label{tab:ablation_lambda_cold}
    \resizebox{0.95\textwidth}{!}{%
    \begin{tabular}{l c c c c c}
        \toprule
        \textbf{Strategy} & \textbf{Samples} & \textbf{NLL} $\downarrow$ & \textbf{RMSE} $\downarrow$ & \textbf{CRPS} $\downarrow$ & \textbf{ECE} $\downarrow$ \\
        \midrule
        Random Baseline & 800k & 2.0176 $\pm$ 0.0010 & 0.8332 $\pm$ 0.0015 & \textbf{0.4602 $\pm$ 0.0008} & 0.0102 $\pm$ 0.0023 \\
        \midrule
        BALD-MCD & 560k & 2.0195 $\pm$ 0.0093 & 0.8349 $\pm$ 0.0004 & 0.4612 $\pm$ 0.0004 & 0.0120 $\pm$ 0.0024 \\
        BALD-VBLL & 560k & 2.0162 $\pm$ 0.0031 & 0.8358 $\pm$ 0.0037 & 0.4620 $\pm$ 0.0024 & 0.0142 $\pm$ 0.0014 \\
        Epistemic Only ($\gamma=0$) & 560k & 2.0167 $\pm$ 0.0038 & 0.8357 $\pm$ 0.0036 & 0.4618 $\pm$ 0.0022 & 0.0136 $\pm$ 0.0006 \\
        \textbf{Hybrid ($\gamma=0.1$)} & 560k & 2.0157 $\pm$ 0.0053 & \textbf{0.8330 $\pm$ 0.0010} & \textbf{0.4602 $\pm$ 0.0004} & \textbf{0.0092 $\pm$ 0.0008} \\
        Hybrid ($\gamma=0.2$) & 560k & 2.0204 $\pm$ 0.0109 & 0.8357 $\pm$ 0.0049 & 0.4620 $\pm$ 0.0032 & 0.0162 $\pm$ 0.0036 \\
        Hybrid ($\gamma=0.3$) & 560k & 2.0212 $\pm$ 0.0034 & 0.8371 $\pm$ 0.0023 & 0.4628 $\pm$ 0.0014 & 0.0149 $\pm$ 0.0018 \\
        Hybrid ($\gamma=0.5$) & 560k & \textbf{2.0155 $\pm$ 0.0112} & 0.8367 $\pm$ 0.0042 & 0.4626 $\pm$ 0.0028 & 0.0129 $\pm$ 0.0029 \\
        \bottomrule
    \end{tabular}
    }
\end{table*}

\begin{table*}[h]
    \centering
    \caption{\textbf{Active Learning Performance on KuaiRec (Full Test Set).} Comparison of active learning strategies using 560k samples against a Random baseline using 800k samples.}
    \label{tab:al_kuairec}
    \resizebox{0.95\textwidth}{!}{%
    \begin{tabular}{l c | c c c c}
        \toprule
        \textbf{Strategy} & \textbf{Samples} & \textbf{NLL} $\downarrow$ & \textbf{RMSE} $\downarrow$ & \textbf{CRPS} $\downarrow$ & \textbf{ECE} $\downarrow$ \\
        \midrule
        Random Baseline & 800k & 1.7502 $\pm$ 0.0079 & \textbf{0.4531 $\pm$ 0.0001} & \textbf{0.2085 $\pm$ 0.0002} & 0.0093 $\pm$ 0.0003 \\
        \midrule
        BALD-MCD & 560k & 1.8175 $\pm$ 0.0146 & 0.4554 $\pm$ 0.0002 & 0.2100 $\pm$ 0.0002 & \textbf{0.0079 $\pm$ 0.0008} \\
        BALD-VBLL & 560k & 2.0450 $\pm$ 0.0078 & 0.4620 $\pm$ 0.0003 & 0.2144 $\pm$ 0.0004 & 0.0133 $\pm$ 0.0033 \\
        Epistemic Only ($\gamma=0$) & 560k & 1.7662 $\pm$ 0.0080 & 0.4543 $\pm$ 0.0004 & 0.2100 $\pm$ 0.0008 & 0.0119 $\pm$ 0.0025 \\
        Hybrid ($\gamma=0.1$) & 560k & 1.7564 $\pm$ 0.0113 & 0.4535 $\pm$ 0.0004 & 0.2090 $\pm$ 0.0004 & 0.0129 $\pm$ 0.0034 \\
        Hybrid ($\gamma=0.2$) & 560k & \textbf{1.7423 $\pm$ 0.0156} & \textbf{0.4531 $\pm$ 0.0002} & \textbf{0.2085 $\pm$ 0.0004} & 0.0120 $\pm$ 0.0041 \\
        Hybrid ($\gamma=0.3$) & 560k & 1.7519 $\pm$ 0.0274 & 0.4533 $\pm$ 0.0003 & 0.2094 $\pm$ 0.0012 & 0.0149 $\pm$ 0.0053 \\
        Hybrid ($\gamma=0.5$) & 560k & 1.7553 $\pm$ 0.0212 & 0.4532 $\pm$ 0.0003 & 0.2090 $\pm$ 0.0006 & 0.0126 $\pm$ 0.0037 \\
        \bottomrule
    \end{tabular}
    }
\end{table*}

\begin{table*}[h]
    \centering
    \caption{\textbf{Active Learning Performance on Uber (Full Test Set).} Comparison of active learning strategies using $\sim$84k samples against a Random baseline using $\sim$120k samples.}
    \label{tab:al_uber}
    \resizebox{0.95\textwidth}{!}{%
    \begin{tabular}{l c | c c c c}
        \toprule
        \textbf{Strategy} & \textbf{Samples} & \textbf{NLL} $\downarrow$ & \textbf{RMSE} $\downarrow$ & \textbf{CRPS} $\downarrow$ & \textbf{ECE} $\downarrow$ \\
        \midrule
        Random Baseline & $\sim$120k & 1.7919 $\pm$ 0.0062 & 0.4494 $\pm$ 0.0001 & 0.2564 $\pm$ 0.0000 & \textbf{0.0110 $\pm$ 0.0006} \\
        \midrule
        BALD-MCD & $\sim$84k & 1.7851 $\pm$ 0.0020 & 0.4494 $\pm$ 0.0001 & {0.2564 $\pm$ 0.0001} & 0.0146 $\pm$ 0.0007 \\
        BALD-VBLL & $\sim$84k & 1.7853 $\pm$ 0.0030 & \textbf{0.4493 $\pm$ 0.0000} & \textbf{0.2562 $\pm$ 0.0000} & 0.0139 $\pm$ 0.0008 \\
        Epistemic Only ($\gamma=0$) & $\sim$84k & 1.7848 $\pm$ 0.0030 & 0.4494 $\pm$ 0.0001 & {0.2568 $\pm$ 0.0002} & 0.0142 $\pm$ 0.0019 \\
        Hybrid ($\gamma=0.1$) & $\sim$84k & 1.7841 $\pm$ 0.0018 & 0.4494 $\pm$ 0.0001 & \textbf{0.2562 $\pm$ 0.0000} & 0.0150 $\pm$ 0.0008 \\
        Hybrid ($\gamma=0.2$) & $\sim$84k & \textbf{1.7840 $\pm$ 0.0021} & \textbf{0.4493 $\pm$ 0.0000} & \textbf{0.2562 $\pm$ 0.0000} & 0.0138 $\pm$ 0.0006 \\
        Hybrid ($\gamma=0.3$) & $\sim$84k & 1.7850 $\pm$ 0.0025 & 0.4494 $\pm$ 0.0001 & \textbf{0.2562 $\pm$ 0.0000} & 0.0152 $\pm$ 0.0010 \\
        Hybrid ($\gamma=0.5$) & $\sim$84k & 1.7854 $\pm$ 0.0024 & 0.4494 $\pm$ 0.0001 & \textbf{0.2562 $\pm$ 0.0000} & 0.0147 $\pm$ 0.0008 \\
        \bottomrule
    \end{tabular}
    }
\end{table*}

\subsection{Visualizing Calibration Improvements}
While the main text reports the scalar Expected Calibration Error (ECE), we visualize the full probabilistic quality of our method using Reliability Diagrams (Calibration Curves) in Figure~\ref{fig:calibration_curve}.

For regression tasks, calibration is assessed via the Probability Integral Transform (PIT) values, $u_i = \hat{F}(y_i|x_i)$, where $\hat{F}$ is the predicted cumulative distribution function. For a perfectly calibrated model, the PIT values should be uniformly distributed in $[0,1]$, resulting in a calibration curve that lies on the diagonal $y=x$.

As observed in Figure~\ref{fig:calibration_curve}, the \textbf{Vanilla Gaussian} baseline (\jwnew{Blue}) deviates significantly from the diagonal. The characteristic ``S-shape'' indicates that the model is miscalibrated at the tails—a direct symptom of the ``Ghost Value'' pathology, where the model artificially inflates variance to cover disjoint modes, effectively assigning probability mass to regions that should be empty. In contrast, \textbf{CR-VBLL} (\jwnew{Green}) closely hugs the diagonal, confirming that our distribution-free approach produces reliable probability estimates across the entire support of the target variable.

\begin{figure}[h]
    \centering
    \includegraphics[width=0.6\textwidth]{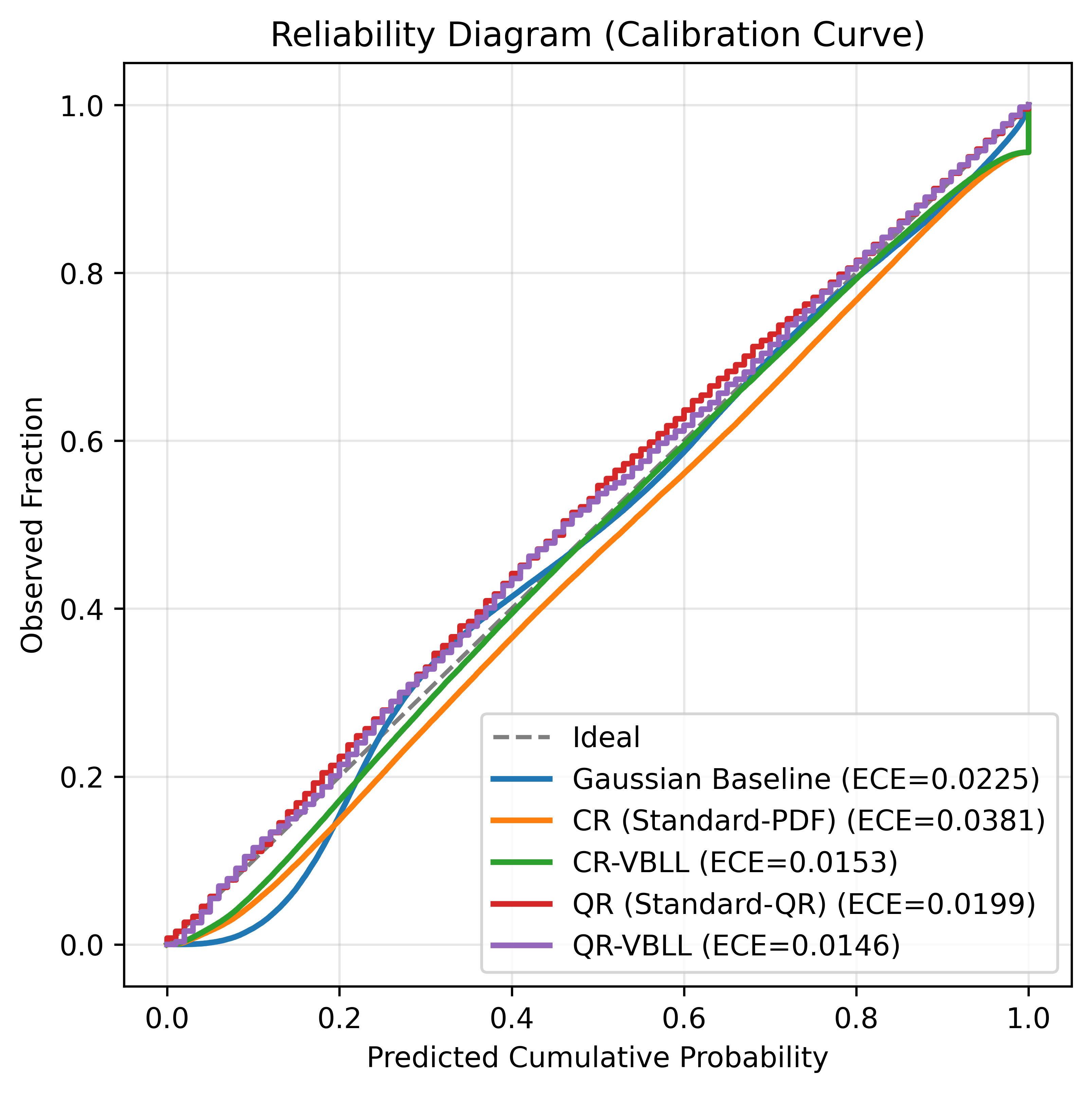}
    \caption{\textbf{Reliability Diagram on WeChat.} Comparison of calibration curves between Vanilla Gaussian regression and our CR-VBLL method. The Gaussian model shows significant deviation due to the unimodal assumption failure, while CR-VBLL maintains robust calibration (hugging the diagonal), effectively acting as a ``Safety Net'' by correctly modeling the multi-modal density.}
    \label{fig:calibration_curve}
\end{figure}

\clearpage

\end{document}